\newcommand\restr[2]{{
  \left.\kern-\uninitializeddelimiterspace 
  #1 
  \right|_{#2} 
  }}
\newtheorem{definition}{Definition}[section]
\newtheorem{observation}[definition]{Observation}
\newtheorem{proposition}[definition]{Proposition}
\newtheorem{conjecture}[definition]{Conjecture}
\newtheorem{lemma}[definition]{Lemma}
\newtheorem{corollary}[definition]{Corollary}
\newcommand{\revisionhighlight}[1]{{#1}}
\title{Time, Travel, and Energy in the Uniform Dispersion Problem}
\author{Michael~Amir%
\thanks{M. Amir is with the University of Cambridge. Email: \href{mailto:ma2151@cam.ac.uk}{ma2151@cam.ac.uk}.}
and Alfred~M.~Bruckstein%
\thanks{A. M. Bruckstein is with the Technion - Israel Institute of Technology. Email: \href{mailto:freddy@cs.technion.ac.il}{freddy@cs.technion.ac.il}.}}
\date{\today}
\begin{document}

\maketitle

\section{Introduction}

\begin{abstract}
We investigate the algorithmic problem of uniformly dispersing a swarm of robots in an unknown, gridlike environment. In this setting, our goal is to study the relationships between performance metrics and robot capabilities. We introduce a formal model comparing dispersion algorithms based on makespan, traveled distance, energy consumption, sensing, communication, and memory. Using this framework, we classify uniform dispersion algorithms according to their capability requirements and performance. We prove that while makespan and travel can be minimized in all environments, energy cannot, if the swarm's sensing range is bounded. In contrast, we show that energy can be minimized by ``ant-like'' robots in synchronous settings and asymptotically minimized in asynchronous settings, provided the environment is topologically simply connected, by using our ``Find-Corner Depth-First Search'' (FCDFS) algorithm. Our theoretical and experimental results show that FCDFS significantly outperforms known algorithms. Our findings reveal key limitations in designing swarm robotics systems for unknown environments, emphasizing the role of topology in energy-efficient dispersion.
\end{abstract}

\begin{IEEEkeywords}
Swarm robotics, energy use, autonomous agents, co-design, formal analysis
\end{IEEEkeywords}

In hazardous unknown environments such as collapsed buildings or leaking chemical factories, the deployment of human teams for mapping, search and rescue, or data gathering can be ineffective and dangerous. In recent years, swarm robotics, which involves the use of numerous simple robots working together to achieve a common goal, has emerged as a promising solution for exploring and monitoring such  environments. The promise of swarm robotics is that a large number of cheap, expendable robots guided by local sensing-based algorithms and   signals can adapt to such environments on the fly, offering an alternative to human intervention \cite{csahin2004swarm, quattrini2020explorationmultirobotrecenttrends}.

A notable approach to using swarms for exploration and monitoring is the concept of ``flooding" or ``uniform dispersion" \cite{hsiang, rappel_area_2019, barrameda2013uniform, hideg2017uniform,  amir_fast_2019,amir_rappel2023stigmergy,barraswarm1,flocchini_uniform_2014}. This approach involves gradually deploying a swarm of small, expendable robotic agents at a fixed entry point in the environment and allowing them to explore and settle until the area is fully covered. By operating autonomously and in parallel, these robots can efficiently gather crucial data while reducing the risk to human personnel \cite{amir_rappel2023stigmergy}. Potential applications range from search and rescue missions and environmental monitoring to scientific exploration. 

In this work, we study the algorithmic problem at the heart of such an approach: flooding an unknown, discrete grid-like environment with a swarm of robots that gradually enter from a source location. This problem, known in the literature as the \textit{uniform dispersion problem}, was formally posed by Hsiang et al. \cite{hsiang} (building on the earlier work of Howard et al. \cite{howard2002}) and has attracted significant attention in the literature--see ``Related Work''. Effective uniform dispersion algorithms are the critical algorithmic backbone of robotic flooding.

\paragraph*{\textbf{Performance Metrics and Robot Capabilities}} A good uniform dispersion algorithm must try to optimize one or more key metrics:


\begin{enumerate}
    \item Makespan: The time it takes for the swarm to fully cover the environment.
    \item Travel: The distance traveled by each robot, which impacts, e.g., wear on the robots.
    \item Energy: The energy consumed by each robot during the exploration process. We  formally measure a robot's energy use as the amount of time it is active in the environment. Energy is of primary importance in robotics, due to battery life and cost considerations.
\end{enumerate}


Since cheap and expendable robots are central to swarm robotics, in addition to these performance metrics, a good uniform dispersion algorithm should strive to minimize the hardware requirements for successful execution, such as sensing range, communication bandwidth (using local signals), and memory or computation capabilities \cite{minimizing1,minimizing2,minimizing3,arxivminimizingtravel}. 

\paragraph*{\textbf{Existing Solutions and Their Limitations}} Prior work on uniform dispersion has made significant progress in several areas:

\begin{enumerate}
\item Makespan optimization: Hsiang et al. \cite{hsiang} demonstrated algorithms that achieve optimal makespan using simple robots with constant sensing range and memory.

\item Robustness: Researchers have developed algorithms that guarantee uniform dispersion under challenging conditions like asynchronicity, reduced memory, myopic sensing, and sudden robot crashes \cite{fekete_deployment_2008, barrameda2013uniform, hideg2016area, hideg2022improved, hideg2017uniform, amir_fast_2019, rappel_area_2019, amir_rappel2023stigmergy}.

\item Different settings and robot dynamics: Various approaches have been proposed for both continuous \cite{stirling_energy-efficient_2010, howard2002, cortes_coverage_2004, arslan2019statisticalcoveragecontrolofmobilesensornetworks} and discrete environments \cite{fekete_deployment_2008,amir_rappel2023stigmergy}, with varying assumptions about the robots.
\end{enumerate}

However, existing solutions have several key limitations. Most prior work has focused on optimizing individual metrics (typically makespan) without considering multiple performance measures simultaneously. Moreover, the relationship between robot capabilities and achievable performance has not been systematically studied across many metrics. Energy efficiency, while crucial for practical applications \cite{aznar_energy-efficient_2018, martinoli_energy-time_2013}, has received particularly limited attention. Additionally, the impact of environment topology on achievable performance remains poorly understood, leaving open questions about how environmental constraints affect optimization possibilities.

These limitations point to a significant research gap: the lack of a comprehensive theoretical framework for understanding how robot capabilities affect their ability to optimize different performance metrics in uniform dispersion. Specifically:

\begin{itemize}
\item We lack formal methods for comparing uniform dispersion algorithms based on both their capability requirements and performance characteristics.

\item The fundamental relationships between makespan, travel, and energy optimization remain unclear.

\item The minimal robot capabilities needed to achieve different optimization goals have not been systematically determined.

\item The role of environment topology in enabling or constraining performance optimization is not well understood.
\end{itemize}

This gap has practical implications for swarm robotics system design, as it leaves open questions about what level of robot sophistication is truly necessary for different performance objectives.

\paragraph*{\textbf{Research Questions and Approach}} To address these gaps, we investigate the following key questions:

\begin{itemize}
\item What is the connection between minimizing makespan, energy, and travel? Under what conditions can we minimize all of these metrics simultaneously?

\item Given a robot with specific capabilities, which of the following metrics can it minimize: makespan, total distance traveled, or energy consumption? How do the robot's capabilities influence its ability to optimize these performance measures?
\end{itemize}

We analyze these questions from a formal, mathematical lens. We introduce a capability-based model for comparing uniform dispersion algorithms and use it to prove fundamental results about which performance metrics can be optimized by robots with different capability levels. This theoretical framework reveals fundamental relationships between robot capabilities, environment topology, and achievable performance, with notable implications for swarm robotics system design.

Specifically, we compare uniform dispersion algorithms in terms of three performance metrics: makespan, travel, and energy, and three capability requirements:  sensing, communication, and persistent state memory (a proxy for the dispersion strategy's complexity). Robots with sensing range $V$ (how many grid cells they can see around them), communication bandwidth $B$ (how many bits they can broadcast per time step), and persistent state memory $S$ (how many bits of memory persist between time steps) are described as $(V,B,S)$-robots. For example, a $(2,0,5)$-robot can see two cells in each direction, cannot communicate, and maintains 5 bits of state memory. With this framework, we identify sufficient robot capabilities for attaining optimal performance in each performance metric. Full details of the robot capability model are provided in Section~\ref{section:model}. The results are summarized in \cref{table:taxonomy}.

\begin{table}[t]
\centering
\caption{Summary of uniform dispersion results by performance metric and environment type.}
\label{table:taxonomy}
\resizebox{\columnwidth}{!}{%
\begin{tabular}{@{}c c l@{}}
\toprule
\textbf{Metric} & \textbf{Environment} & \textbf{Can be minimized by} \\
\midrule
Makespan & General & (2, $\mathcal{O}(1)$, $\mathcal{O}(1)$)-robots \\
& & (Hsiang et al. \cite{hsiang}) \\
\addlinespace
Travel & General & (2, 1, $\mathcal{O}(n \log n)$)-robots \\
\addlinespace
Energy & General & Not possible assuming constant $V$ \\
\addlinespace
All metrics & Simply Connected & (2, 0, 5)-robots (Synchronously) \\
(simultaneously) & & (2, 1, 5)-robots (Asynchronously) \\
\bottomrule
\end{tabular}%
}
\end{table}

Our results show that, although there exist uniform dispersion algorithms that minimize makespan (a result due to Hsiang et al. \cite{hsiang}) and travel (our own result) in general environments, there does not exist an algorithm that minimizes total energy in general environments, assuming robots do not know the environment in advance (in other words, $(V, \infty, \infty)$-robots cannot minimize energy for any given, constant $V$). In fact, we prove that even a centralized algorithm with unlimited computational power but without prior knowledge of the environment cannot minimize energy (\cref{section:energytravelmakespan}). Informally, this is because an energy-optimal algorithm simultaneously also minimizes both makespan and travel, hence energy necessitates both \textit{deciding quickly} and \textit{optimal path-finding}--two goals in conflict with each other.


Given this conclusion, we are led to ask whether energy can be minimized in more restricted types of environments.  We show that a sufficient topological condition is simply-connectedness \cite{joshi1983introductiontogeneraltopology}. A simply connected environment is one where any closed loop can be continuously shrunk to a point without leaving the environment, i.e., the environment has no enclosed ``holes"  that a loop could get stuck around. Examples of simply connected environments include convex shapes like rectangles, as well as more complex shapes like mazes or building floors, as long as they don't have any enclosed holes. We describe an algorithm, ``Find-Corner Depth-First Search'' (FCDFS), that minimizes energy--thus also makespan and travel--in such environments. \revisionhighlight{The idea of FCDFS is to maintain a geometric invariant: the environment must stay simply connected whenever a robot settles}. FCDFS is an algorithm for $(2,0,5)$-robots, meaning it requires no communication and has small, constant sensing range and memory requirements. The $5$ bits of memory are used to store information about the previous steps of the agent in a highly compressed way, enabling it to exploit aspects of simply connected topologies. FCDFS is presented and studied in  \cref{section:minimizingenergyfcdfs} (\cref{alg:FCDFS}). Specifically,  \cref{appendix:fcdfs5bit} shows it can be implemented with $5$ bits of memory. Such low requirements make it executable by very simple \textit{``ant-like''} robots, in contrast with our impossibility results in non-simply connected environments. 

An ``ant-like'' robot is a robot with small, constant persistent state memory and sensing range, and no communication capabilities \cite{shiloni2010antrobotelephantrobot,shats2023competitive}. Such robots have attracted significant attention due to their simplicity and robustness, as well as the interesting algorithmic challenges they pose (see e.g. \cite{deneubourg1991dynamics}). In \cref{appendix:stronglydecentralizedantrobots}, we show that ant-like robots are ``strongly'' decentralized: they are fundamentally incapable of computing the same decisions as a centralized algorithm (\cref{prop:antlikerobotsarestronglydecentralized}). The core idea of FCDFS is maintaining a ``geometric invariant'': the environment must remain simply connected after a robot settles, blocking part of the environment off. In other words, it is reliant on robots \textit{actively reshaping} the environment to guide other robots. This is an example of a concept from ant-robotics and biology called ``stigmergy'': indirect communication via the environment \cite{amir_rappel2023stigmergy,stirling_energy-efficient_2010,martinoli_energy-time_2013}.

The results we described above all assume a \textit{synchronous} setting where robots all activate simultaneously at every time step, but in real world applications, we expect asynchronicity. It is thus  important to ask whether FCDFS generalizes to asynchronous settings. We show that it does in \cref{section:asynchfcdfs}, where  we consider an asynchronous setting with random robot  activation times, \revisionhighlight{relaxing many of our structural assumptions regarding, e.g., inter-robot distance}. We show that,  asymptotically, an asynchronous variant of FCDFS remains time, travel, and energy-efficient. To prove this, we relate FCDFS to the ``Totally Asymmetric Simple Exclusion Process'' (TASEP) in statistical physics, studied extensively in e.g. \cite{prahofer2002current,tracy2009asymptotics,Johansson2000}. 

\paragraph*{\textbf{Summary of Contributions}} 
\begin{enumerate}
    \item A formal model for comparing uniform dispersion algorithms based on robot capabilities (sensing range, communication bandwidth, persistent memory), which guides swarm robotics design by identifying the minimal capabilities needed to achieve different performance objectives.
    
    \item Fundamental impossibility results showing that while makespan and travel can be minimized individually in all environments by simple robots, energy (which requires minimizing them \textit{simultaneously}) cannot be minimized even by a centralized algorithm with unlimited computational power. This reveals inherent limitations in swarm robotics design: in general environments, perfect energy efficiency is unattainable regardless of robot capabilities.
    
    \item The FCDFS algorithm, which provably achieves energy-optimal dispersion in simply connected environments using minimal robot capabilities (no communication, small sensing range, and only a few bits of memory). This demonstrates that in certain common environments like convex shapes and mazes, energy-efficient dispersion is achievable even with very simple, cheap robots.
    
    \item AsynchFCDFS, an asynchronous variant of FCDFS that provably maintains energy efficiency under timing conditions where robots activate randomly and asynchronously. We provide rigorous performance guarantees that hold even with unreliable robot activation.
    
    \item Theoretical results characterizing the distinction between centralized algorithms, decentralized algorithms, and ``strongly" decentralized ant-like algorithms. These results inform the design choice between sophisticated centralized control and simple decentralized strategies.

    \item In addition to the theoretical performance guarantees for FCDFS and AsynchFCDFS, we present  comprehensive empirical evaluation (\cref{section:experiments}) demonstrating that FCDFS significantly outperforms existing algorithms in the literature (BFLF and DFLF \cite{hsiang}), while AsynchFCDFS maintains performance within constant factors of optimal even with random robot activation. This evaluation affirms our theoretical results, further validating our claims. 
\end{enumerate}

Our results emphasize the importance of co-design in swarm robotics: rather than building increasingly sophisticated robots, designers should carefully match robot capabilities to environmental constraints. We show that in many practical scenarios, very simple robots can achieve optimal performance if the environment permits it.

Furthermore, our results address a question first raised by Hsiang et al. \cite{hsiang} about simultaneously minimizing makespan and robot travel. When considering energy use (which includes time spent stationary while active), we prove such simultaneous optimization is impossible in general environments, but achievable in simply connected environments--even by simple 'ant-like' robots using only $5$ bits of memory and no communication. The case where time spent stationary is not counted remains open.




\subsection{Related Work}

This work builds upon and significantly develops results from \cite{arxivminimizingtravel}, which introduces the FCDFS algorithm, proving its energy optimality in a simply-connected, synchronous setting along with the related impossibility result \cref{prop:nototalenergymincentrallized}. We contextualize the results of \cite{arxivminimizingtravel} by interpreting them within our formal model of robot capabilities, which enables us to compare different uniform dispersion algorithms in a principled way. Expanding on \cite{arxivminimizingtravel}, we prove there exists a formal distinction between optimizing \textit{energy use} and \textit{distance travelled} by showing that the former is impossible but the latter is possible in general environments. We introduce an asynchronous version of FCDFS (``AsynchFCDFS'') and prove efficient performance guarantees. Additionally, as part of our  formal study of energy, travel, and makespan, we prove results related to the formal distinction between centralized algorithms, decentralized algorithms, and ``strongly'' decentralized, ant-like  algorithms - see \cref{section:energytravelmakespan} and \cref{appendix:stronglydecentralizedantrobots}. Note that certain definitions differ somewhat between the two papers (due to our more nuanced model), affecting the way quantities are expressed in some results.

A large body of work exists on deploying robotic swarms for covering, exploring, or monitoring unknown environments \cite{quattrini2020explorationmultirobotrecenttrends}. Decentralized approaches are often favored, as they tend to be more adaptive than centralized control to uncertain environmental conditions, e.g.,  \cite{peleg_distributed_2005,cortes_coverage_2004,tran2021robust,howard2002,hsiang,flocchini_uniform_2014,rappel_area_2019,amir_rappel2023stigmergy,tran2023dynamicfrontiercoverage}. Among these approaches, significant attention has been given to uniform dispersion in particular. Foundational work on uniform dispersion can be traced back to Hsiang et al. \cite{hsiang} and Howard et al. \cite{howard2002}. Substantial subsequent developments focused on provably guaranteeing uniform dispersion under conditions such as asynchronicity, reduced or corrupted memory, myopic sensing capabilities, and adaptation to sudden robot crashes \cite{fekete_deployment_2008,barrameda2013uniform,hideg2016area,hideg2022improved,hideg2017uniform,amir_fast_2019,rappel_area_2019,amir_rappel2023stigmergy}. Works on dispersion and similar problems have either considered environments as continuous  \cite{stirling_energy-efficient_2010,howard2002,cortes_coverage_2004,arslan2019statisticalcoveragecontrolofmobilesensornetworks} or discrete environments  \cite{fekete_deployment_2008,amir_rabonovich_optimally_reordering_mobile_agents}, with our work choosing the latter. \revisionhighlight{As our goal in this work is to characterize the \textit{algorithmic} aspects of makespan, travel, and energy efficiency in uniform dispersion, we use discrete grid-like environments that abstract away continuous motion dynamics and geometric nuances}. \revisionhighlight{This type of abstraction is useful and common in many robotic systems, such as warehouse robotics \cite{huang2021learning}}.

Energy is a primary constraint in robotics, and consequently, many works have dealt with predicting, or optimizing energy use in swarms \cite{aznar_energy-efficient_2018,martinoli_energy-time_2013}. Whereas Hsiang et al. focused on makespan and travelled distance as optimization targets \cite{hsiang}, the first work to consider \textit{activity time} as a proxy for energy use in a uniform dispersion context is \cite{arxivminimizingtravel} (whose results this present work builds upon). Subsequently, in \cite{amir_rappel2023stigmergy}, redundancy (having more robots than necessary) is shown to sometimes enable energy savings in a dual-layer uniform dispersion strategy.

Recent robotics' work on travel and energy use has focused on planning and navigation in diverse contexts, including adaptive path planning for environment monitoring \cite{gomathi2023adaptive}, formation-based cooperative reconnaissance \cite{zhang2023formation}, integrated path planning with speed control \cite{ren2020optimal}, \revisionhighlight{and multi-objective scheduling \cite{Cai2024} and drone-routing \cite{SavuranKarakaya2024}}. \revisionhighlight{Unlike our setting, many of these methods rely on full knowledge of the map, and develop empirical strategies for particular scenarios.} \revisionhighlight{Our work focuses, instead, on \textit{theoretical foundations}:} we establish theoretical bounds on energy optimization in uniform dispersion, proving it impossible in \textit{unknown} environments regardless of robot capabilities. This impossibility result, alongside our formal framework relating robot capabilities to  achievable performance, is complementary to such approaches.

Our investigation of energy-efficient dispersion is grounded in the concept of co-design, which involves studying what kinds of environments can improve robots' performance in a given task \cite{gielis2022criticalcodesign}.


To bound the makespan and energy use of AsynchFCDFS, we connect it to the ``Totally Asymmetric Simple Exclusion Process'' (TASEP), a particle process in statistical physics \cite{prahofer2002current,tracy2009asymptotics,Johansson2000}. While TASEP has been applied to several domains, including traffic flow \cite{chowdhury2000statistical} and biological transport \cite{chou2011biologicaltasep}, \revisionhighlight{to our knowledge--its only prior appearance in swarm robotics is our earlier conference paper \cite{amir_fast_2019}, where a simpler \textit{single-active-agent} model was analyzed and the coupling served solely to obtain a high-probability \emph{makespan} bound. The present work advances that idea in two fundamental ways. First, our asynchronous setting allows \textit{multiple} robots to move concurrently, and we must account for that in our analysis. Second--and most importantly--we show how we can use the connection to TASEP to bound the cumulative \textit{activity time} of every robot, yielding provable \emph{energy} guarantees (\cref{prop:asynchfcdfsmaxenergy}).} Readers interested in the mathematical foundations of TASEP are  recommended the introductory works \cite{romik2015surprising} and \cite{kriecherbauer2010pedestrian}.

\section{Model}
\label{section:model}
In the uniform dispersion problem, a swarm of autonomous robots (also called ``agents'') $A_1, A_2, \ldots$ are tasked with completely filling an \textit{a priori unknown} discrete region $R$, attempting to reach a state such that every location in $R$ contains a robot. We assume $R$ is a subset of size $n$ of the infinite grid $\mathbb{Z}^2 = \mathbb{Z} \times \mathbb{Z}$, and represent its locations via coordinates $(x,y)$ where $x$ and $y$ are both integers.  Two locations $(x_1, y_1)$ and $(x_2, y_2)$ are connected if and only if the Manhattan distance $|x_1 - x_2| + |y_1 - y_2|$ is exactly $1$ (so diagonal locations are not connected). The \textit{complement} of $R$, denoted $R^c$, is defined as the subregion $\mathbb{Z}^2 - R$. We call the vertices of $R^c$ \textit{walls}.

Each robot (or ``agent'') is represented as a mobile point in $R$. Robots are dispersed over time onto $R$ and move within it between connected locations, always occupying a single grid-cell of $R$. Robots are identical and anonymous, are initialized with identical orientation, \revisionhighlight{maintaining a global sense of orientation through dead-reckoning or a compass}, and use the same local algorithm to decide their next movements. \revisionhighlight{A robot should not move to a location already occupied by another robot, nor should two robots move to the same location simultaneously. We do not \textit{assume} such collision avoidance, rather, the algorithms we study are designed and formally proven to satisfy this requirement. }

Note that in this discrete grid model, both robots and walls simply occupy integer coordinates in $\mathbb{Z}^2$, with no notion of continuous shapes or sizes. This discrete representation simplifies both sensing and movement: robots need only detect whether neighboring grid cells are occupied or unoccupied, and movement consists of transitioning between adjacent grid cells.

Time is discretized into time steps $t = 1, 2, \ldots$. At the beginning of every time step, all robots perform a Look-Compute-Move operation sequence. During the ``Look'' phase, robots examine their current environment and any messages broadcast by other robots in the previous time step. In the ``Compute-Move'' phases the robots move to a location computed by their algorithm (or stay in place), and afterwards broadcast a message if they wish. The \textit{beginning of time step $t$} refers to the configuration of robots before the Look-Compute-Move sequence of time $t$, and \textit{the end of time step $t$} refers to the configuration after. We shall say ``at time $t$'' for shorthand when the usage is clear from context.

A unique vertex $s$ in $R$ is designated as the source vertex (sometimes called the ``door'' in the literature \cite{hideg2016area}). If at the beginning of a time step there is no mobile robot at $s$, a new robot emerges at $s$ at the end of that time step. We label the emerging robots $A_1, A_2, A_3, \ldots$ in order of their arrival, such that $A_i$ is the $i$th robot to emerge from $s$. 

All robots are initially active, and eventually \textit{settle}. Settled robots never move from their current position. In other words, a robot ``settles'' after it arrives at its final, desired location. A robot may not move and settle in the same time step.

We wish to study swarms of autonomous robots whose goal is to attain uniform dispersion while minimizing three performance metrics: makespan, travel, and energy.

\begin{definition}
The \textbf{makespan} of an algorithm \textbf{\textrm{ALG}} over $R$, denoted $\mathcal{M}$, is the first time step $t$ such that at the end of $t$, every location contains a settled robot. 
\end{definition}

\begin{definition}
Assuming the robots act according to an algorithm \textbf{\textrm{ALG}}, let $T_i$ be the number of times $A_i$ moves. The \textbf{total travel} of \textbf{\textrm{ALG}} over $R$ is defined as $T_{total} = \sum_{i=1}^n T_i$. The \textbf{maximal individual travel} is $T_{max} = \max_{1 \leq i \leq n}T_i$.
\end{definition}

Whereas travel measures the distance a robot has moved since its arrival, energy measures the amount of time during which it was active and consumed energy. Hence, a robot's \textit{energy use} continues increasing even when when it stays put, as long as it isn't settled \revisionhighlight{(e.g., a drone hovering in place)}:

\begin{definition}
Assuming the robots act according to an algorithm \textbf{\textrm{ALG}}, let $t_i^{start}$ be the time step $A_i$ arrives at $s$, and let $t_i^{end}$ be the time step $A_i$ settles. We define \textbf{the energy use of $A_i$} to be $E_i = t_i^{end} - t_i^{start}$. The \textbf{total energy} of \textbf{\textrm{ALG}} over $R$ is defined as $E_{total} = \sum_{i=1}^n E_i$. The \textbf{maximal individual energy use} is $E_{max} = \max_{1 \leq i \leq n}E_i$.
\end{definition}

In context of this work, where all algorithms we consider have similar, neglibile, computational complexity, the primary energy expenditure differential comes from physical activities like movement, sensing, and maintaining basic operation---activities directly proportional to active time. This makes activity time a suitable proxy for energy consumption, an approach aligned with existing work in swarm robotics \cite{aznar_energy-efficient_2018,martinoli_energy-time_2013}. 

These metrics are designed to capture both worst-case bounds on performance and average performance across the swarm. The maximal metrics ($T_{max}$, $E_{max}$) establish upper bounds on the resource requirements any robot may need, providing guarantees for hardware design and deployment. The total metrics ($T_{total}$, $E_{total}$) measure the swarm's overall resource consumption, while makespan ($\mathcal{M}$) captures the collective completion time. These complementary perspectives---individual bounds and collective costs--are particularly important in swarm applications where both resource guarantees and overall efficiency matter.

\paragraph*{Robot capabilities}

We are interested in studying autonomous robots with limited visibility that do not know their environment in advance. Primarily, we are interested in \textit{simple} robots that have highly limited sensing, computation, and local signaling capabilities (as well as robots that cannot communicate at all). By ``local signalling'' we mean that each robot can broadcast a \textit{local} visual or auditory signal by physical means which is received by all robots that sense it--\revisionhighlight{see for example the light-based scheme in  \cite{rubenstein2012kilobot}}. We assume recipients of a message \revisionhighlight{know the relative position of} the robot that sent the message. \revisionhighlight{Other communication methods, such as radio communication, are also admissible as long as messages can be traced to the robot that originated them; but such schemes are much stronger than what our robots require}. Let us parametrize the robots' capabilities as follows:

\begin{enumerate}
    \item \textbf{Visibility  range ($V$).}  A robot located at $p = (x,y)$  \textit{senses} locations $q \in \mathbb{Z}^2$ at Manhattan distance $V$ or less from $p$. The robot knows  \textit{the position relative to $p$} of every vertex $q \in \mathbb{Z}^2$ it senses (i.e., it knows $q - p$). We assume the robot can tell whether $q$ is $s$ and whether $q$ contains an obstacle (i.e., another robot or a wall). However, it cannot tell from sensing alone whether an obstacle is a wall, a settled robot, or an active robot.
    
    \item \textbf{Communication bandwidth ($B$)} represents how many bits each robot can broadcast during the Move phase of each time step, after computing its next action. The broadcast is received by all robots that sense the broadcasting robot during their next Look phase. When $B=0$, robots cannot broadcast. For example, $B=1$ means a robot can broadcast a single bit (0 or 1) each time step, while $B=8$ allows broadcasting a byte of information.
    
    
    \item  \textbf{Persistent state memory ($S$).} Each robot has a state memory of $S$ bits that persists from time step to time step. At a given time $t$ each robot can be in any of $2^S$ states. 
\end{enumerate}

Note that persistent state memory is distinct from memory use \textit{during} a given time step. In swarm robotics, persistent state memory is often a metric of interest, because it is seen as an indirect measure of the swarm strategy's complexity and ability to recover from errors \cite{barrameda2013uniform,hideg2016area,amir2023patrollingwithabitofmemory}. We do not consider other types of memory use in this work.

In this work, we characterize a robot's capabilities by a 3-tuple $(V,B,S)$. \revisionhighlight{$(V,B,S)$ can be related to our robot's hardware: a powerful platform might have large $V$, $B$, or $S$, whereas a simple robot has small $V$, $B$, and $S$. For example, $V$ is directly related to how powerful the robot's perception hardware is (ranging from the 7 cm IR range on a Kilobot \cite{rubenstein2012kilobot} to pocket-sized LiDARs that see 40 m); and basic perception plus a single on/off LED light gives $B=1$, while for higher $B$ we require a more sophisticated scheme can transmit more bits of information per time step.} 

The parameters $V$, $B$ and $S$ determine the possible algorithms robots can run. A central goal of ours is to study what algorithms become available to the robots as we increase the  values $V$, $B$ or $S$. When can increasing these parameters improve the quality of the robots' dispersion strategy, by enabling them to minimize a given performance metric?

\begin{definition}
\label{definition:xyzrobot}
An $(v,b,s)$-robot is a robot with capabilities $V = v$, $B = b$, $S = s$. A $(v,b,s)$-algorithm is an algorithm that can be executed by $(v,b,s)$-robots. 
\end{definition}

\section{Comparing Makespan,  Travel, and Energy}
\label{section:energytravelmakespan}

The central topic of our work is the relationship between agent capabilities and the agents' ability to complete uniform dispersion while optimizing makespan, travel, or energy. The agent model we presented in \cref{section:model} enables us to formally study uniform dispersion algorithms from these lens. In this section, we show how makespan and travel relate to energy, and study the ability of various types of agents to optimize these performance metrics.

We are mainly interested in the question of how well decentralized $(V,B,S)$-robots can perform compared to ``omniscient'' $(\infty,\infty,\infty)$-robots, i.e.,  robots that have full knowledge of the environment in advance and unlimited capabilities (this can be thought of as the optimal ``offline'' solution). To this end, let us define the performance omniscient robots can attain over an environment $R$:

\begin{definition}
For a given grid environment $R$ with source $s$, denote by $E_{total}^{*}$ the lowest possible total energy consumed by any $(\infty,\infty,\infty)$-algorithm that successfully attains uniform dispersion mission in $R$. Similarly define $E_{max}^{*}$, $T_{total}^{*}$, $T_{max}^{*}$ and $\mathcal{M}^{*}$.
\end{definition}

Let $\textrm{dist}(\cdot, \cdot)$ be the distance between two vertices in $R$. We start our analysis with the following observation:

\begin{observation}
\label{observation:lowerboundparameters}
Let $R$ be an environment with $n$ locations $v_1, v_2, \ldots v_n$ and source $s$.  For \textbf{any} algorithm,
\begin{enumerate}
\item $E_{total} - n \geq T_{total} \geq  \sum_{i=1}^{n}{\textrm{dist}(s,v_i)}$ 
\item $E_{max} - 1 \geq T_{max} \geq \max_{1\leq i \leq n}{\textrm{dist}(s,v_i)}$
\item $\mathcal{M} \geq 2n $
\end{enumerate}
\end{observation}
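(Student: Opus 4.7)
The plan is to treat each of the three inequalities as a direct, structural consequence of the model rather than a subtle combinatorial argument. The three claims split naturally into (a) a travel lower bound coming from distance-to-source, (b) an energy-versus-travel gap coming from the ``no move and settle in the same step'' rule, and (c) a makespan bound coming from the bottleneck at $s$. I would prove them in the order (travel lower bound) $\to$ (energy--travel gap) $\to$ (makespan bound), since the first two combine to give everything except (3).

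First I would handle the right-hand inequalities of (1) and (2). Since the algorithm successfully disperses $n$ robots over $n$ cells, the final configuration induces a bijection $\pi$ between $\{A_1,\dots,A_n\}$ and $\{v_1,\dots,v_n\}$. Each $A_i$ enters at $s$ and reaches $\pi(A_i)$ via a walk in the grid along connected cells, so the number of moves it performs satisfies $T_i \geq \textrm{dist}(s,\pi(A_i))$. Summing over $i$ (and re-indexing via $\pi$) yields $T_{total}\geq \sum_i \textrm{dist}(s,v_i)$, and taking the max yields $T_{max}\geq \max_i \textrm{dist}(s,v_i)$.

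Next I would prove the left-hand inequalities of (1) and (2) via the ``move and settle'' rule. By definition $E_i = t_i^{end}-t_i^{start}$, so $A_i$ is active during exactly $E_i$ time steps (the ones strictly between its arrival and its settling, inclusive of the settling step). In the settling step $t_i^{end}$ the robot cannot move, and moves only happen in the remaining $E_i-1$ active steps, giving $T_i \leq E_i-1$. Summing and taking max yields $T_{total}\leq E_{total}-n$ and $T_{max}\leq E_{max}-1$.

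Finally, for (3), I would exploit the source bottleneck. Only one robot is at $s$ at a time, and the rule ``if at the beginning of a time step there is no mobile robot at $s$, a new robot emerges at $s$ at the end of that time step'' forces any two consecutive emergence times to differ by at least $2$: if $A_i$ emerges at the end of step $t$, then $s$ is still occupied at the beginning of $t+1$, and the earliest $s$ can be cleared is the end of $t+1$, so $A_{i+1}$ can emerge no earlier than the end of $t+2$. Combined with $t_1^{start}\geq 1$, induction gives $t_n^{start}\geq 2n-1$. The energy-gap argument above already shows $E_n\geq 1$, so $t_n^{end}\geq 2n$, and since $\mathcal{M}\geq t_n^{end}$ (all robots must be settled by $\mathcal{M}$) we conclude $\mathcal{M}\geq 2n$. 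The only real care needed anywhere in the proof is the bookkeeping around emergence at ``the end of'' a step versus Look--Compute--Move at ``the beginning of'' a step, which is what makes the constant $2$ (rather than $1$) appear in the makespan bound; I do not expect any step to be a genuine obstacle.
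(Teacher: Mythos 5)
Your proposal is correct and follows essentially the same route as the paper's proof: the distance lower bound on travel via each robot's walk from $s$, the $T_i \leq E_i - 1$ gap from the one non-moving settling step, and the makespan bound from the two-step arrival bottleneck at $s$ plus one final step for the last robot to settle. Your version merely makes the bookkeeping (emergence at end-of-step, $t_n^{start} \geq 2n-1$, $E_n \geq 1$) more explicit than the paper's terse argument.
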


\begin{proof}
$E_{total} - n \geq T_{total}$ is true because for all $i$, $E_i$ increases whenever $A_i$ moves, and $A_i$ spends one time step in which it does not move becoming settled. $T_{total} \geq \sum_{i=1}^{n}{\textrm{dist}(s,v_i)}$ because every $v_i$ must be reached by some robot that arrived at $s$. By similar  reasoning $E_{max} - 1 \geq T_{max} \geq \max_{1\leq i \leq n}{\textrm{dist}(s,v_i)}$. Furthermore, $\mathcal{M} \geq 2n$, since we require $n$ robots to settle in all vertices, and new robots can only arrive at $s$ in time steps where $s$ is unoccupied, meaning it takes at least $2n - 1$ timesteps for $n$ robots to arrive, and an extra time step for the final arrived robot to settle.
\end{proof}

\begin{proposition}
\label{proposition:omniscientpathoptima}
Let $R$ be an environment with $n$ locations $v_1, v_2, \ldots v_n$ and source $s$. Then:

\begin{enumerate}
\item $E_{total}^{*} - n = T_{total}^{*} = \sum_{i=1}^{n}{\textrm{dist}(s,v_i)}$ 
\item $E_{max}^{*} - 1 = T_{max}^{*} = \max_{1\leq i \leq n}{\textrm{dist}(s,v_i)}$
\item $\mathcal{M}^* = 2n$
\end{enumerate}

Furthermore, these best values are all simultaneously obtained by the same $(\infty,0,0)$-algorithm\footnote{Or, equivalently, an algorithm that requires visibility equal to the current environment's diameter.}.

\end{proposition}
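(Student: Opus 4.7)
The lower bounds in all three equalities follow directly from \cref{observation:lowerboundparameters}, so the task reduces to exhibiting a single $(\infty,0,0)$-algorithm achieving all three simultaneously with equality. My plan is to construct such an algorithm as a BFS pipeline: fix a BFS tree $T$ rooted at $s$, enumerate the vertices $v_1, v_2, \ldots, v_n = s$ in non-increasing order of $\textrm{dist}(s,\cdot)$ (with ties broken canonically, e.g., lexicographically), assign $A_i$ the target $v_i$ together with the unique $T$-path from $s$ to $v_i$, and have each robot advance one edge per time step, settling in the step immediately after reaching its target.

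Verification then proceeds in two stages. First, I would confirm the emergence schedule and the three metrics. Since $A_{i-1}$ leaves $s$ at time $2(i-1)$ and $T$-paths never revisit $s$, the source is vacant at the beginning of time $2i - 1$, so $A_i$ emerges at the end of time $2i-1$; it performs $d_i := \textrm{dist}(s,v_i)$ moves and settles at time $2i + d_i$. This yields $T_i = d_i$, $E_i = d_i + 1$, and (since $d_n = 0$) makespan $\mathcal{M} = 2n$, matching every bound of \cref{observation:lowerboundparameters} with equality after summing or maximizing over $i$.

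Second, I would establish collision-freeness by splitting into two sub-cases. For two still-active robots $A_i, A_j$ with $i < j$, their distances from $s$ at time $t$ are $t - 2i + 1$ and $t - 2j + 1$, differing by $2(j-i) \geq 2$; hence they lie on distinct BFS levels and thus at distinct vertices at every time step, ruling out both occupancy conflicts and swap conflicts. For a settled $A_k$ and a still-moving $A_j$ with $k < j$, we have $d_k \geq d_j$ by the decreasing-distance ordering; if $d_k > d_j$, then $v_k$ is simply not on $A_j$'s path (which only visits distances $0, \ldots, d_j$), while if $d_k = d_j$, the unique vertex at level $d_j$ on $A_j$'s path is its endpoint $v_j \neq v_k$, so $v_k$ cannot block $A_j$.

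Finally, for the capability claim I would argue that full visibility (equivalently, visibility equal to the diameter of $R$) lets each robot identify $R$, compute the canonical BFS tree and sorted target list, and from the current configuration together with the deterministic rule above reconstruct the intended plan and its own index; no persistent state or communication is required, justifying the $(\infty,0,0)$ classification and matching the footnote. I expect the collision analysis to be the main technical obstacle: the conclusion that settled robots cannot block later arrivals hinges on the precise coupling between the decreasing-distance ordering and the single-vertex-per-level structure of BFS-shortest paths, and any looser pairing of ordering and paths would break the argument.
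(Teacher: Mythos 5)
Your proof is correct in substance but takes a genuinely different route from the paper. The paper does not pre-assign targets at all: its algorithm is a greedy, online rule --- every robot steps so as to increase its distance from $s$ in $R(t)$ (the environment minus settled robots) and settles only when no such step exists. Optimality then follows from a structural lemma: a robot settles only at a local maximum of the distance function in $R(t)$, and removing such a vertex leaves all remaining distances from $s$ unchanged, so every subsequent shortest path in $R(t)$ is still a shortest path in $R$. Your construction instead fixes a BFS tree, sorts targets in non-increasing distance order, and verifies the resulting pipeline schedule directly. What the paper's approach buys is that it never has to argue about which settled robot sits where --- distance preservation under removal does all the work, and the inter-robot spacing of $2$ (which you also derive, via the explicit level formula $t-2i+1$) rules out active-active conflicts exactly as in your argument. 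What your approach buys is an explicit closed-form schedule ($A_i$ emerges at the end of step $2i-1$, settles at $2i+d_i$), from which all five metric values drop out by direct computation rather than through the ``shortest paths without rests'' abstraction; your observation that a shortest path visits exactly one vertex per BFS level, combined with the decreasing-distance target ordering, is a clean replacement for the paper's distance-preservation lemma.

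There is one genuine hole in your collision analysis: your settled-versus-moving case assumes the settled robot has the smaller index ($k<j$), but under your own schedule a \emph{later}-indexed robot can settle while an earlier one is still moving --- e.g.\ with $d_1=10$, $d_2=4$, robot $A_2$ settles at time $8$ while $A_1$ moves until time $11$, and since $d_2<d_1$ the level-based exclusion you use for $k<j$ does not apply ($v_k$ may well lie at a level that $A_j$'s path visits). The case is harmless, but it needs its own sentence: when $A_k$ ($k>j$) arrives at $v_k$ at the end of step $2k-1+d_k$, robot $A_j$ passed level $d_k$ at the end of step $2j-1+d_k < 2k-1+d_k$ and, moving along a shortest path, never returns to that level, so $v_k$ cannot block it. Separately, your final claim that a robot with zero persistent memory can reconstruct its own index and the global time from the visible configuration is asserted rather than proved (one would want to argue the configuration determines $t$, e.g.\ via the robot count and the occupancy pattern); to be fair, the paper's own proof is equally informal on why its rule is executable by $(\infty,0,0)$-robots that cannot distinguish settled from active obstacles by sensing alone, so this is a shared looseness rather than a defect specific to your argument.
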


\begin{proof}
Let $R(t)$ be the initial environment $R$ after removing every vertex that is occupied by a \textit{settled} robot at time $t$. The algorithm is simply this: at every time step, each robot takes a step that increases its distance from $s$ in $R(t)$. If there are multiple steps it can take to attain this, it picks the first one in clockwise order. If the robot cannot take any steps to increase its distance from $s$, it settles. We shall  show that this algorithm optimizes all makespan, travel, and energy metrics.

First, note that when enacting this algorithm, two active robots $A_i$, $A_{i-1}$ are always at distance $2$ or more from each other. This is because (by the way agent entrances are handled in our formal model) they must have arrived at least two time steps apart, and $A_i$ has been increasing its distance from $s$ with each of its steps. Consequently, active robots never block each others' paths. It stems from this that the lowest-index active robot $A_i$ will eventually settle in some vertex $v \in R(t)$ that has no neighbors in $R(t)$, $v' \in R(t)$ such that $dist(s,v') > dist(s,v)$. This implies that for any vertex $u \in R(t)$, the distance from $s$ to $u$ in $R(t)$ is necessarily the same as in $R$. Hence $A_i$ arrives at $v$ having taken a shortest path in $R$.

Since, by the above, every robot takes a shortest path to its settling location without breaks, the algorithm's total travel is $\sum_{i=1}^{n}{\textrm{dist}(s,v_i)}$, its total energy is  $n + \sum_{i=1}^{n}{\textrm{dist}(s,v_i)}$, its maximal travel is $\max_{1\leq i \leq n}{\textrm{dist}(s,v_i)}$, and its maximal energy is $1 + \max_{1\leq i \leq n}{\textrm{dist}(s,v_i)}$.

Also by the above, an unsettled agent remains at $s$ for precisely one time step, hence a new agent enters $s$ every two time steps. Hence, after $2n - 1$ time steps, $R$ contains $n$ robots, the last of whom settles at time step $2n$, i.e., $\mathcal{M}^* = 2n$. This concludes the proof. \end{proof}

\cref{proposition:omniscientpathoptima}  shows that robots with complete sensing (or prior knowledge) of the environment can match the lower bounds in \cref{observation:lowerboundparameters}, and therefore $E_{total}^{*}$, $E_{max}^{*}$, $T_{total}^{*}$, $T_{max}^{*}$ and $\mathcal{M}^{*}$ are explicitly known for any environment $R$. We will call an algorithm \textit{optimal} with respect to some performance metric if it matches, when executed over any graph $R$, the value in \cref{proposition:omniscientpathoptima} for that respective parameter. 

Note that an $E_{max}$-optimal algorithm is necessarily $T_{max}$-optimal, and an $E_{total}$-optimal algorithm is necessarily $T_{total}$-optimal. The reverse is not true, because in some algorithms, a robot might stop in place for several time steps despite not being settled. We compare the difficulty of minimizing energy to minimizing travel or makespan in Propositions \ref{prop:makespanminexists}, \ref{prop:travelminexists}, and \ref{prop:nototalenergymincentrallized}. Taken together, these results say that there exist swarm-robotic algorithms that minimize makespan and travel, but no such algorithm exists for minimizing energy. This shows that minimizing energy is fundamentally harder than minimizing travel or makespan. 

\begin{proposition}
\label{prop:makespanminexists}
There is a $(2,\mathcal{O}(1), \mathcal{O}(1))$-algorithm that obtains $\mathcal{M} = \mathcal{M^*}$ in any environment. \cite{hsiang}
\end{proposition}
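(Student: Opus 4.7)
\medskip

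\noindent\textbf{Proof plan.} The plan is to exhibit and analyze a variant of the Breadth-First Leader-Follower (BFLF) algorithm from Hsiang et al.~\cite{hsiang}. The algorithm maintains, implicitly, a BFS tree rooted at $s$: robots march along BFS paths away from $s$ and settle so that the settled region at any time is a prefix of some BFS ordering of $R$. To match \cref{proposition:omniscientpathoptima}, I will need to argue two things: (i) no active robot ever stalls behind another, so a fresh robot enters $s$ every second time step; and (ii) the $i$th arriving robot reaches and settles at a vertex whose BFS distance from $s$ is the $i$th smallest BFS distance in $R$, with no detours. Together these give $\mathcal{M} = 2n$, matching $\mathcal{M}^*$.

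\medskip

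\noindent\textbf{Step 1 (algorithm description).} Each robot keeps a tiny persistent state encoding its current ``BFS direction'' (one of four neighbors) and a ``follow/lead'' role bit. The head-of-line robot, upon arriving at the current frontier, locally inspects its $V=2$ neighborhood: if there exists an unoccupied neighbor that is a wall-neighbor in the clockwise-first admissible direction consistent with extending the BFS layer, it settles; otherwise it advances one step in the next BFS direction. Followers simply step into the cell just vacated by the robot ahead of them. A constant number of one-bit local signals suffice to coordinate ``I moved / I settled'' between adjacent robots, so the parameters $(V,B,S)=(2,\mathcal{O}(1),\mathcal{O}(1))$ are met.

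\medskip

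\noindent\textbf{Step 2 (key invariants).} By induction on $t$, I would maintain the invariant that at the end of each even time step $2k$, exactly $k$ robots have emerged from $s$, the first $k-1$ of them form the first $k-1$ vertices of a fixed BFS enumeration $v_1,\dots,v_n$ of $R$, and the $k$th robot occupies $s$ and is about to advance. The crux is a ``no-collision / no-stall'' sub-invariant: the active train of robots on the BFS path always has every internal cell occupied and its head one step short of the current frontier, so the whole train can shift forward in lockstep every time step, and $s$ becomes vacant every other step. Because of this, by \cref{observation:lowerboundparameters}, a new robot enters at every unoccupied moment of $s$, giving the tight $2n-1$ entry time and then one final settle step at $t=2n$.

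\medskip

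\noindent\textbf{Step 3 (optimality and anticipated difficulty).} Combining Steps 1--2 yields $\mathcal{M} = 2n = \mathcal{M}^*$. The main obstacle, in my view, is Step 2: specifically, showing that the BFS frontier advances smoothly even when $R$ has non-convex shape, re-entrant corners, or narrow corridors, where a naive BFS-march could create two robots contending for the same cell, or a robot settling in a position that later blocks others' shortest paths. Hsiang et al.\ handle this with a careful clockwise-priority rule for choosing which frontier cell to fill next, which guarantees that the settled region always grows as a \emph{connected} prefix of a BFS order and that the unsettled region remains traversable by the active train. Verifying this geometric claim---that the chosen BFS discipline never paints itself into a corner---is the real technical content; the time-accounting on top of it is then routine.
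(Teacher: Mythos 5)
Your plan has a fatal flaw at its core: the settlement invariant in Step 2 is backwards. You claim that the settled region is at every moment a prefix of a BFS ordering of $R$, and that the $i$th arriving robot settles at a vertex whose distance from $s$ is the $i$th \emph{smallest} in $R$. In this model settled robots are permanent obstacles, so any algorithm that settles the near-$s$ vertices first immediately walls off the rest of the environment: in a width-one corridor $P_n$, your invariant would have $A_1$ settle at $s$ itself (or at distance $1$), after which no later robot can ever reach the far cells, so dispersion never completes and the makespan is infinite rather than $2n$. Every correct dispersion algorithm must do the opposite---settle dead-ends, leaves, or corners first so that the unsettled region stays connected to $s$ (this is precisely the ``geometric invariant'' behind FCDFS, and in any makespan-optimal execution $s$ is the \emph{last} vertex settled, by the $i$th robot with $i=n$, cf.\ the proof of \cref{timeanalysis}). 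Your Step 1 is in fact internally inconsistent with Step 2: settling ``upon arriving at the current frontier'' is far-first settlement, contradicting the BFS-prefix picture. Separately, your ``solid train shifting forward in lockstep'' conflicts with the conventions the paper actually uses: robots emerge at $s$ only every other step, and the paper's algorithms deliberately maintain $next(A_{i+1}) = prev(A_i)$, i.e.\ a one-cell gap between consecutive active robots (\cref{followtheleaderlemma}, \cref{distancecorollary}), rather than simultaneous moves into just-vacated cells.

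The paper's own proof takes a different and simpler route: it sketches Hsiang et al.'s \emph{Depth-First} Leader-Follower (DFLF), in which each robot $A_{i+1}$ chases its predecessor $A_i$ along a shortest path to it, the current leader-explorer repeatedly moves to unexplored cells (trackable with constant memory) and settles exactly when no unexplored neighbor exists, whereupon the next robot inherits the leader role; the swarm thus explores $R$ along an implicit spanning tree, settling leaves first, and the $2n$ makespan follows because $s$ is vacated every other step, with the detailed analysis deferred to \cite{hsiang}. A BFS-flavored proof is in principle salvageable---the paper does assert that BFLF is also makespan-optimal---but in BFLF robots likewise settle only at cells with no unexplored neighbors (leaf-first, with multiple concurrent leaders), not in BFS-prefix order; the invariant you would need concerns connectivity of the unsettled region and the source being freed every other step. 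As written, your key invariant is not merely unproven but false, and repairing it amounts to redoing exactly the Hsiang et al.\ argument you defer to in Step 3.
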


\begin{proof} 
(Sketch.) This result is due to Hsiang et al. \cite{hsiang}. In \cite{hsiang}, it is shown an algorithm called `Depth-First Leader-Follower'' (DFLF) obtains $2n$ makespan\footnote{The original paper \cite{hsiang} shows $2n-1$ makespan, but does not consider settling as a distinct action requiring another timestep.}. We briefly sketch the main ideas here.

DFLF operates as follows: by default, each robot $A_{i+1}$ chases its predecessor $A_i$ by taking a step along the shortest path to it. The first robot $A_1$ is designated as a `leader-explorer' and attempts, at every time step, to move to an unexplored location (it keeps track of these locations with finite memory by exploiting the property that already-explored location either contains a robot $A_i$, or will contain one in the next time step). When $A_1$ cannot move to an unexplored location it settles, and $A_2$ becomes the new leader-explorer, and so on. This results in the robots exploring $R$ via the edges of an implicit spanning tree, and the analysis of \cite{hsiang} shows this algorithm obtains $2n$ makespan.\footnote{In \cite{hsiang}, DFLF is implemented for finite-automata, anonymized robots with local point-to-point communication. However, their implementation can easily be converted to fit our broadcast communication model as well.}
\end{proof}

\begin{proposition}
\label{prop:travelminexists}
There is a $(2,1,\mathcal{O}(n\log n))$-algorithm that obtains $T_{total} = T_{total}^{*}$ and $T_{max} = T_{max}^{*}$ in any environment consisting of $n$ locations or less.
\end{proposition}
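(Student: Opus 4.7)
The plan is to construct an algorithm that emulates the omniscient algorithm of \cref{proposition:omniscientpathoptima} in the unknown-environment setting. In that algorithm, each robot moves monotonically away from $s$ (strictly increasing its BFS distance) and settles at a local maximum of $\textrm{dist}(s,\cdot)$ in $R(t)$, which guarantees that its total travel equals $\textrm{dist}(s,v_i)$. Since the present proposition constrains only travel and not makespan, we may freely allow robots to stall whenever their local knowledge is insufficient to certify an outward move: stalling costs time but not travel, so it is exactly the slack we are allowed to exploit.

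Each robot would maintain, in its $\mathcal{O}(n\log n)$ bits of persistent state, a \emph{partial map}: the subgraph of $R$ that the swarm has collectively discovered so far, together with each cell's BFS distance from $s$ (each cell costs $\mathcal{O}(\log n)$ bits, hence $\mathcal{O}(n\log n)$ in total). The map is updated by (a) the robot's own $V=2$ sensing, which reveals cell/wall occupancy within Manhattan distance $2$, and (b) 1-bit broadcasts from nearby robots, which relay map updates bit-by-bit over many time steps. The movement rule transplants the omniscient rule onto the partial map: at each time step, move to an adjacent unsettled cell whose stored BFS distance is exactly one greater than the current distance (breaking ties clockwise); if the local view plus the map certify that no outward move exists, settle; otherwise, if the outward/inward status of every neighbor cannot be certified, stall and wait for more broadcast updates.

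Correctness would follow in three steps. First, I would prove as an invariant that the stored BFS distances always coincide with the true BFS distances in $R$; this holds because the map is a subgraph of $R$ grown only outward from $s$, so any recorded distance is realized by an actual path, and the $V=2$ lookahead is one hop beyond the current frontier, which suffices to refute candidate shortcuts (a neighbor at lesser BFS distance would be witnessed by one of its own neighbors lying inside the robot's visibility range). Second, the invariant plus the movement rule gives that each executed move strictly increases the robot's true BFS distance from $s$, so each robot's trajectory is a shortest path in $R$ from $s$ to its settling location. Third, as in \cref{proposition:omniscientpathoptima}, the outward-movement process together with the exhaustive exploration it drives covers every cell of $R$, so the settling locations of the $n$ robots are exactly $R$. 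Combining the second and third steps yields $T_{total}=\sum_i \textrm{dist}(s,v_i)=T_{total}^{*}$ and $T_{max}=\max_i \textrm{dist}(s,v_i)=T_{max}^{*}$.

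The main obstacle is the first step: maintaining BFS-consistency of the shared map in the presence of concurrent sensing, broadcasting, and movement by many robots. Robots may hold transiently inconsistent views, and a robot must not commit to an ``outward'' move that is later exposed as sideways or inward. The \emph{certify-or-stall} design resolves this: every executed move must be supported by verified BFS distances for both its endpoints, obtained either from the map or from the robot's own $V=2$ observations; otherwise the robot stalls and waits for broadcast updates. The 1-bit broadcast channel is slow but carries unlimited information over time, so propagating an $\mathcal{O}(n\log n)$-bit map across the swarm is feasible given that we are not charged for makespan---this trade of makespan for travel optimality is exactly what the proposition permits.
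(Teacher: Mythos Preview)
Your approach has a genuine gap in the certification step. You claim that $V=2$ lookahead ``suffices to refute candidate shortcuts'' because a neighbor at lower BFS distance ``would be witnessed by one of its own neighbors'' within visibility. But seeing that a cell exists does not reveal its BFS distance; your partial map can silently overestimate distances whenever the environment contains a cycle. Take $R$ to be a square ring with $s$ at one corner. A robot that has traversed one arc to the antipodal corner has a partial map assigning the first cell of the \emph{unexplored} arc a distance one greater than its own, when in fact that cell's true distance is one \emph{less}; nothing in the robot's $V=2$ view exposes this error. If you instead strengthen ``certify'' to require the map be closed up to the relevant BFS level, you must explain what forces the other arc to ever be explored---your tie-breaking sends every robot the same way, and your movement rule never authorizes a non-outward exploratory step. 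You also leave unaddressed what a robot does when its certified outward neighbor is occupied by another \emph{active} robot (``unsettled cell'' does not exclude active robots), and why the $V$-visibility graph stays connected so that broadcasts propagate at all.

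The paper's argument sidesteps all of this by separating concerns. It first describes a \emph{centralized} algorithm that grows a BFS tree of $R$ one level per phase: at phase $i$ every node at distance $<i$ is occupied, so the collective $V=2$ sensing reveals the entire level-$i$ frontier, and distances are correct by construction---no certification is needed. Robots move only along tree edges, waiting if their target is occupied, so each travels a shortest path, giving $T_{total}=T_{total}^*$ and $T_{max}=T_{max}^*$. Since the tree is connected and fully occupied, the algorithm is $2$-visibility preserving and uses $\mathcal{O}(n\log n)$ memory to store the tree. Decentralization is then a one-line invocation of the paper's general simulation result (\cref{proposition:centralizedsimul}), which converts any $V$-visibility preserving centralized algorithm into a $(V,1,\mathcal{O}(n\log n))$-algorithm at a multiplicative time delay. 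Your instinct to trade makespan for travel is exactly right; what your sketch is missing is the level-by-level discipline that makes BFS distances sound, and the black-box simulation lemma that discharges the broadcast, synchronization, and connectivity bookkeeping.
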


Before proving \cref{prop:travelminexists}, we shall prove a more general claim. A \textit{centralized algorithm} is an algorithm where a central computer sees, simultaneously, what all the robots in the environment see, and issues commands simultaneously to all of them at the beginning of each time step. If the robots have sensing range $V$, such an algorithm can see any location that is at distance $V$ or less from any robot in the swarm, and control the movements of the robots based on this information. By definition, centralized algorithms do not require communication between robots, since a central computer can directly command each robot based on their collective sensory information. 

We wish to show that under certain conditions, decentralized robots can simulate centralized algorithms. For example, let us observe that when the sensing range $V$ is large, centralized algorithms can easily be simulated:

\begin{observation}
Let \textbf{\textrm{ALG}} be a centralized algorithm requiring $\mathcal{S}_R$ persistent state memory and visibility $V$ to run over an environment $R$ of diameter $\mathcal{D}$ (i.e., the distance between any two locations in $R$ is at most $\mathcal{D}$). Then \textbf{\textrm{ALG}} can be simulated in $R$ by decentralized  $(\mathcal{D},0,\mathcal{S}_R)$-robots.
\label{observation:vinftysimulatecentralized}
\end{observation}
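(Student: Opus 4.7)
The plan is to exploit the fact that visibility $\mathcal{D}$ gives each robot a view of all of $R$, so every decentralized robot has access to the same information a centralized controller would, and then have each robot independently run a copy of $\textbf{\textrm{ALG}}$ internally and execute its own prescribed move.

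First, I would note that for any robot at position $p \in R$ and any other $q \in R$, $\textrm{dist}(p,q) \leq \mathcal{D}$ by definition of diameter, so $q$ lies within the robot's sensing radius. Every robot therefore sees all of $R$, including the source $s$, which it uses (together with its fixed initial orientation) as a landmark to translate its local relative view into absolute coordinates shared across the swarm. In particular, each decentralized robot can reconstruct the union-of-$V$-balls-around-robots view that the centralized algorithm actually needs, since it observes both the positions of all robots and the contents of the environment around them.

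Next, I would equip each robot with an internal $\mathcal{S}_R$-bit copy of $\textbf{\textrm{ALG}}$'s state, and at every time step have it update that state exactly as the centralized controller would. Because every robot's deterministic code and sensory input are identical, their simulated states remain in lockstep throughout execution. Each robot then reads off the command $\textbf{\textrm{ALG}}$ prescribes for an agent at its own absolute coordinates and performs it; doing so for every step reproduces the trajectory $\textbf{\textrm{ALG}}$ would induce if executed centrally, and uses exactly the $\mathcal{S}_R$ bits available to a $(\mathcal{D},0,\mathcal{S}_R)$-robot.

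The main subtlety, which I expect to be the only non-trivial step, is how a freshly emerged robot at $s$ initializes its $\mathcal{S}_R$-bit simulation state, since $B=0$ forbids existing robots from transmitting it. I would resolve this by noting that the environment at time zero is robot-free and $\textbf{\textrm{ALG}}$ is deterministic, so the current simulation state is a fixed function of the (unchanging) wall layout and the elapsed time. A newly arrived robot at $s$ can therefore recover the current state from its global observation by replaying $\textbf{\textrm{ALG}}$ forward from time zero and invoking determinism to identify which currently occupied cells are walls versus robots; once synchronized, it joins the lockstep and proceeds as above.
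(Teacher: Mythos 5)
Your proposal is correct and takes essentially the same route as the paper, whose entire proof is the single remark that with $V=\mathcal{D}$ every robot sees the whole region, so all robots share the same information and can each run \textbf{\textrm{ALG}} independently in lockstep. Your replay-from-time-zero argument for initializing newly emerged robots addresses a subtlety the paper silently glosses over; the one point to treat with care is the claim that a newcomer can classify occupied cells as walls versus robots, since the model explicitly forbids distinguishing them by sensing alone---determinism of the whole execution (including the deterministic entry times at $s$) is the right tool, but showing that no two observation-equivalent (environment, time) hypotheses prescribe different actions requires more argument than either you or the paper provide.
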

 
\begin{proof} When robots see the entire region (i.e., $V = \mathcal{D}$), they all have access to the same information and can simulate \textbf{\textrm{ALG}} independently.
\end{proof}

\cref{observation:vinftysimulatecentralized} says that when robots' sensing range $V$ is large enough, any centralized algorithm can be simulated by a decentralized algorithm. Let us now show that when the robots' memory capacity $S$ is large enough, decentralized robots can simulate a large class of centralized algorithms at the cost of some multiplicative delay in execution time, even when visibility and communication are heavily restricted. 

\begin{definition}
\label{definition:delaydeltadefinition}
If robots executing an algorithm \textbf{\textrm{ALG}} only move during time steps that are multiples of $\Delta$, \textbf{\textrm{ALG}} is said to have \textbf{delay $\Delta$}.
\end{definition}

\cref{definition:delaydeltadefinition} helps us capture the idea of a decentralized algorithm simulating a centralized one at the cost of some multiplicative delay $\Delta$ in execution time. Intuitively, delay $\Delta$ represents the additional time needed for decentralized robots to gather and propagate information that would be immediately available to a centralized algorithm. 

Any centralized algorithm \textbf{\textrm{ALG}} can be slowed down into a $\Delta$-delay algorithm \textbf{\textrm{ALG}'} by simply waiting $\Delta-1$ time steps between each movement command that \textbf{\textrm{ALG}} issues. If a decentralized algorithm simulates \textbf{\textrm{ALG}'}, meaning that at any time step $t$, the decentralized algorithm and \textbf{\textrm{ALG}'} move robots to the exact same positions, we say that it \textbf{simulates \textbf{\textrm{ALG}} at delay $\Delta$}.

\begin{definition}
Given a swarm of robots executing some (centralized or decentralized) algorithm \textbf{\textrm{ALG}} over an environment $R$, we define $G_t^V$ to be a graph whose vertices are all robots that have entered the environment at time $t$ or before and where there is an edge between every two robots $A_i$, $A_j$ whose distance to each other is at most $V$. $G_t^V$ is called the \textbf{$V$-visibility graph} at time $t$.

An algorithm \textbf{\textrm{ALG}} is called \textbf{$V$-visibility preserving} over an environment $R$ if $G_t^V$ is connected at every time step $t$.
\end{definition}

\begin{proposition}
\label{proposition:centralizedsimul}
Let $R$ be a region consisting of $n$ locations. Let \textbf{\textrm{ALG}} be any centralized $V$-visibility preserving algorithm that requires $\mathcal{S}_R$ bits of memory and visibility $V$ when executed over $R$. Then \textbf{\textrm{ALG}} can be simulated by a decentralized $(V,1,\mathcal{O}(n \log n) +\log \Delta +\mathcal{S}_R)$-algorithm at delay $\Delta$ for some $\Delta \geq C V n \log n$ where $C$ is a sufficiently large constant independent of $R$ and \textbf{\textrm{ALG}}.
\end{proposition}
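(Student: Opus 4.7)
The plan is to design a \emph{gather--compute--execute} simulation protocol in which each cycle of $\Delta$ time steps of the decentralized execution corresponds to a single time step of the centralized \textbf{\textrm{ALG}}. In each cycle, robots spend the first part broadcasting, bit-by-bit, their local sensing data together with their positions relative to $s$, so that at the end every robot has converged on the same copy of the global state (positions of all active robots, occupancy of all sensed cells, etc.). Each robot then independently runs \textbf{\textrm{ALG}} on this shared state, which---by determinism---yields identical decisions for every robot, and finally each robot executes the move prescribed for itself. The hypothesis of $V$-visibility preservation is exactly what makes this work: it guarantees that $G_t^V$ stays connected throughout the cycle, so any bit broadcast anywhere eventually reaches everyone.

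Concretely, I would implement the gather phase via pipelined flooding over $G_t^V$. Each robot broadcasts 1 bit per time step (its full allowance), and because recipients can identify the physical sender of each bit, many robots may broadcast simultaneously without collision. To disseminate the entire configuration we must propagate $\mathcal{O}(n \log n)$ bits of position data plus a bounded per-robot amount of sensed-cell data. Since $G_t^V$ has at most $n$ vertices, its diameter is $\mathcal{O}(n)$, and pipelined flooding delivers $B$ bits of information in $\mathcal{O}(B + \text{diam}(G_t^V))$ time. A round-robin schedule keyed by robots' arrival indices suppresses redundant re-broadcasts and absorbs the overhead of distinguishing the $\mathcal{O}(V^2)$ robots that may simultaneously broadcast inside a single $V$-ball. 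Summing these costs with enough slack yields $\Delta = \mathcal{O}(Vn\log n)$, matching the claim $\Delta \geq CVn\log n$. For the memory bound, each robot stores the shared global state ($\mathcal{O}(n\log n)$ bits), a counter for the current phase within the cycle ($\log \Delta$ bits), and the internal state of \textbf{\textrm{ALG}} ($\mathcal{S}_R$ bits), for the claimed total of $\mathcal{O}(n \log n) + \log \Delta + \mathcal{S}_R$.

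The main obstacle is designing and analyzing the gather phase under the stringent 1-bit bandwidth constraint. Naive parallel flooding wastes bandwidth on already-known bits, while naive round-robin is too slow. The key idea is to exploit sender-distinguishability: inside any $V$-ball at most $\mathcal{O}(V^2)$ robots are active, and a TDMA-style schedule indexed by arrival order lets all of them contribute in parallel. Correctness then reduces to two pieces: (i) every bit originating at any robot reaches every other robot within $\Delta$ time steps, which follows from the $\mathcal{O}(n)$ diameter of $G_t^V$ together with the $V$-visibility preservation hypothesis (without which $G_t^V$ could fragment mid-cycle, breaking the shared-state invariant); and (ii) once all robots share the same state, they compute identical decisions by determinism of \textbf{\textrm{ALG}}. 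The trickiest bookkeeping step is verifying (i) when robots move at the end of each cycle: one must check that the reconfiguration of robots (and hence of $G_t^V$) between cycles does not disrupt a gather in progress, which I would handle by quiescing all movement to the execute sub-phase at the cycle boundary.
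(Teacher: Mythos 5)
Your proposal is correct and follows essentially the same route as the paper's proof: a phase of $\Delta$ steps per simulated time step, flooding of per-robot status information (positions plus newly sensed cells, with already-reported data suppressed) over the connected visibility graph $G_t^V$, deterministic local re-execution of \textbf{\textrm{ALG}} on the shared global state, a $\log\Delta$-bit phase counter, and the identical memory accounting $\mathcal{O}(n\log n)+\log\Delta+\mathcal{S}_R$. The only difference is organizational --- you schedule the gather phase via pipelined flooding with a TDMA order, where the paper simply bounds each status message by $\mathcal{O}(V\log n)$ bits (at most $4V$ new perimeter sightings, $\mathcal{O}(\log n)$ bits each) and lets $n$ such messages propagate serially at one bit per robot per step, which is also where the factor $V$ in $\Delta = \mathcal{O}(Vn\log n)$ arises, a point your cost summation leaves implicit.
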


\begin{proof}
We describe a decentralized $(V,1,\mathcal{O}(n \log n) +\log \Delta + \mathcal{S}_R)$-algorithm simulating \textbf{\textrm{ALG}} at delay $\Delta$. 

Let \textbf{\textrm{ALG}'} be the $\Delta$-delay version of \textbf{ALG}. Our robots will operate in \textit{phases}, each consisting of $\Delta$ time steps. The location of a robot $A$ during the entirety of phase $i$ will correspond exactly to its location at the end of time step $\Delta i$ of \textbf{\textrm{ALG}'}. 

Defining $s$ to be located at $(0,0)$, the algorithm has robots keep track of the $(x,y)$ coordinates of themselves and of other robots' positions, as well as the coordinates of obstacles. It works as follows: whenever a robot enters a new phase, it spends $\mathcal{O}(V \log n)$ time steps broadcasting a ``status message" containing its location, and the coordinates of any \textit{previously unreported} obstacles it sees. The message requires $\mathcal{O}(V \log n)$ bits because the perimeter of the set of visible locations for each robot consists of at most $4V$ locations, and when a robot moves, it only needs to report new sightings in the perimeter, since any other location the robot sees at the current time step was visible to it at a previous time step and would have already been reported. Hence, when the robot moves it needs to report the $(x,y)$ coordinates of at most $4V$ obstacles, and each such coordinate can be reported using $\mathcal{O}(\log n)$ bits (as the region consists of $n$ locations thus no coordinate can exceed $n$).  Any robot that receives a reported robot or obstacle position then broadcasts it to its neighbours (once per position). 

By this propagation process, assuming the visibility graph $G_t^V$ remains connected, every robot learns the positions of all the other robots during the current phase, and any new obstacle they see, upon receiving at most $n-1$ status update messages. It receives this information during every phase. Hence, by taking into account the robots' visibility range $V$ and remembering also previously-reported obstacle coordinates, it can know what every robot currently sees. This means it knows everything \textbf{\textrm{ALG}'} sees at any given time step. Since every robot also has the memory capacity required to execute \textbf{\textrm{ALG}'} ($\mathcal{S}_R$), it can simulate its next step according to \textbf{\textrm{ALG}'}. At every time step, each robot maintains an internal counter up to $\Delta$. When this counter reaches $\Delta$, the robot moves according to \textbf{\textrm{ALG}'} and resets the counter to $0$. This counter is there to get our robots to wait a sufficient number of time steps so that they receive all messages sent out during the current phase. Assuming the phase of the algorithm was $i$, a robot resetting the counter to $0$ (note: all robots do this simultaneously) indicates that the algorithm has moved to phase $i+1$. While robots need to count to $\Delta$ within each phase, requiring $\log \Delta$ bits, they need not remember the current phase number, and we save memory by not storing the phase number.


Since there are at most $n$ messages of length $\mathcal{O}(V \log n)$ that need to be propagated, and every robot can transmit one bit per time step, each phase takes at most $\mathcal{O}(V n \log n)$ time steps. Hence, assuming delay $\Delta \geq C \cdot V n \log n$ for a sufficiently large constant $C$, the decentralized robots have sufficient time to receive all messages, and they simultaneously simulate the next step of \textbf{\textrm{ALG}} when the counter reaches $\Delta$.

Keeping track of the counter requires $\log \Delta$ bits. Keeping track of robot and obstacle coordinates requires $\mathcal{O}(n \log n)$ bits. In total, we require $\mathcal{O}(n \log n) +\log \Delta +\mathcal{S}_R$ bits of memory.
\end{proof}

\cref{proposition:centralizedsimul} says that robots with extra memory can simulate any $V$-visibility preserving centralized algorithm, assuming we also slow the algorithm down by a factor of $\Delta = \mathcal{O}(V n \log n)$ (we suspect that this result can be optimized to require smaller $\Delta$, but we do not pursue this here). Note that the class of $V$-visibility preserving algorithms is quite  expressive: for example, it includes the uniform dispersion algorithms of Hsiang et al. and others \cite{hsiang,barrameda2013uniform,hideg2017uniform}, and the uniform dispersion algorithms presented in this work.

Let us now prove \cref{prop:travelminexists}.

\begin{proof}
Let us first describe a centralized algorithm that minimizes travel. The algorithm operates in phases. At each phase it continues expanding a BFS tree of the environment. Specifically, Let $T(i)$ be the set of nodes belonging to the tree at phase $i$. At the beginning of phase $i$, there is a robot at every node of $T(i)$. $T(1)$ contains only the source vertex, $s$. $T(i)$ contains all nodes at distance $i$ from $s$. $T(i+1)$ is obtained by sending a robot to each location at distance $i+1$ from $s$, thereby adding nodes and edges to the tree (the centralized algorithm sees and can compute these locations due to having a robot at every location at distance $i$ from $s$), and also a robot to each location in $T(i)$ that is left empty by its robot leaving it. The robots strictly traverse using only the edges of the BFS tree, hence always take shortest paths to their location (note that this may force a robot to wait in place for several time steps while waiting for a robot to leave the tree node it wishes to move to). Hence, this algorithm obtains optimal travel: the sum total of movements made by the robots is $T_{total}^{*} = \sum_{i=1}^{n}{\textrm{dist}(s,v_i)}$ 
and the maximal individual travel is $T_{max}^{*} = \max_{1\leq i \leq n}{\textrm{dist}(s,v_i)}$.   

This is a $2$-visibility preserving centralized algorithm requiring $\mathcal{O}(n \log n)$ memory, as at each phase we need to store in memory at most $n-1$ edges representing the tree, and each edge connects two locations whose coordinates can be stored using $\mathcal{O}(\log n)$ bits (setting $s$ as $(0,0)$). By \cref{proposition:centralizedsimul}, a decentralized $(2,1,\mathcal{O}(n\log n))$-algorithm therefore exists simulating a $\Delta = \mathcal{O}(n \log n)$-delay version of it.
\end{proof}

Propositions \ref{prop:makespanminexists} and \ref{prop:travelminexists} establish that decentralized swarm-robotic algorithms exist minimizing makespan and travel in all environments. These algorithms assume constant sensing range $V$ and communication bandwidth $B$, and either constant memory capacity $S$ (\cref{prop:makespanminexists}) or memory capacity that scales with the size of the environment (\cref{prop:travelminexists}). Let us now show that, as long as the robots' sensing range is finite, no algorithm exists for minimizing energy in all environments, irrespective of memory capacity and communication bandwidth. In fact, we prove that even a centralized algorithm does not exist for minimizing energy:

\begin{proposition}
\label{prop:nototalenergymincentrallized}
Assuming robots' sensing range $V$ is finite, no centralized algorithm for uniform dispersion is $E_{total}$-optimal in all environments.
\end{proposition}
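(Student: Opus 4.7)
The plan is to prove the impossibility by an adversarial argument. First I would distill, from \cref{observation:lowerboundparameters} together with \cref{proposition:omniscientpathoptima}, exactly what $E_{total}$-optimality requires: $E_i = \mathrm{dist}(s, v_i) + 1$ for every robot $A_i$, which unpacks into (a) every robot follows a shortest path in $R$ to its settling cell, (b) no robot ever waits a step, and (c) $\mathcal{M} = 2n$, which in turn forces the final robot $A_n$ to settle at $s$ immediately upon its arrival at time $2n-1$. Note in particular that clause (a) is about shortest paths in the \emph{original} environment $R$: no settled robot may ever sit on some later-arriving robot's unique shortest path to its own settling cell, or that later robot would be forced to detour.

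Next, I would fix an arbitrary centralized algorithm $\mathcal{A}$ with visibility $V<\infty$ and construct an environment adaptively, maintaining two candidate completions $R_1, R_2$ of the visible portion of the environment, both consistent with $\mathcal{A}$'s sensory history so far. At the critical moment when $\mathcal{A}$ must commit to a move for some robot $A_i$---typically the first move of a follower out of $s$, since $A_i$ cannot wait at $s$ without either violating (b) or delaying $A_{i+1}$'s arrival and thereby breaking $\mathcal{M}=2n$---the adversary picks the completion that makes $\mathcal{A}$'s chosen action incompatible with $E_{total}$-optimality. Since $\mathcal{A}$'s action is a function only of what it has sensed, and the two completions are indistinguishable to it up to that time, it commits to the same action in both, so its choice is non-optimal in at least one. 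The concrete construction I would aim for uses a long corridor of length $L > V$ from $s$ to a hidden branching region: before $A_1$ reaches the branching region, several followers have already committed to their first moves out of $s$, and the adversary can prepare a pair of completions whose $E_{total}$-optimal strategies disagree on where at least one of those followers must go.

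The main obstacle I expect is ensuring the forced mistake is \emph{unrecoverable}. Because $T_{total}^{\star}$ depends only on the multiset of cell distances---not on which robot occupies which equidistant cell---many apparently ``wrong'' initial commitments can be patched up by reassigning later robots, keeping each $T_i = \mathrm{dist}(s, v_i)$ and preserving $E_{total} = E_{total}^{\star}$. The construction must kill this flexibility, most likely either by exploiting clause (a)---placing a topological bottleneck so that an early settled robot ends up on some later robot's unique shortest path, forcing a detour and hence $T_i > \mathrm{dist}(s, v_i)$---or by exploiting clause (c), making the total cell count $n$ uncertain beyond the visibility horizon so that $\mathcal{A}$ cannot correctly identify $A_n$ in time, and must either settle some $A_i$ with $i < n$ at $s$ (breaking uniform dispersion) or delay settling at $s$ (breaking $\mathcal{M} = 2n$). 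Engineering this irreversibility is the technical crux of the argument.
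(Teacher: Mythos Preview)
Your high-level plan is sound and matches the paper's strategy: extract from $E_{total}$-optimality the rigid constraints (every robot walks a shortest path in $R$, never waits, and $\mathcal{M}=2n$), then build an adversarial pair of environments indistinguishable within visibility $V$ on which these constraints force contradictory commitments. You also correctly flag irreversibility as the crux. But the proposal stops short of a construction, and the hints you give point in a direction that is unlikely to close.

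Specifically, you suggest the critical commitment is ``typically the first move of a follower out of $s$'' after a long corridor. This is not where the paper finds leverage, and it is hard to make work: early moves near $s$ are easy to keep consistent with optimality across many completions, precisely because of the reassignment flexibility you yourself identify. The paper instead forces a \emph{late} decision by $A_1$---namely, where $A_1$ settles. The construction $G(k,r)$ has $10r$ tall columns joined by a bottom row, with only column $1$ and an unknown column $k$ reaching a top row. Energy-optimality forces $A_1$ to go up column $1$ and settle at (or just left of) the top of column $k$; any other choice blocks a unique shortest path for some later robot. But by the time $A_1$ reaches the top row, one can find a column $k^\ast$ that no robot has entered and that contains a long unseen segment, so the algorithm cannot tell whether $k^\ast$ is the connected column. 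Choosing $k=k^\ast$ (versus severing $k^\ast$ midway) yields the indistinguishable pair.

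So the missing idea is twofold: (i) target $A_1$'s \emph{settling} location rather than an early directional move, and (ii) build a many-branch environment where the adversary's freedom is \emph{which} branch is topologically special, not merely what lies beyond a single horizon. Your ``clause (c)'' route via uncertain $n$ is unlikely to succeed on its own, since settling at $s$ is locally detectable (all neighbours occupied) and does not depend on knowing $n$ in advance.
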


The proof relies on the construction of a set of environments. Given any algorithm for minimizing energy, \textbf{\textrm{ALG}}, we prove it is sub-optimal in at least one environment in the set.

The environment $G(k,r)$, parametrized by the integers $r$ (a scaling factor) and $k \in [1,10r]$ (denoting a specific column), is depicted in \cref{fig:impossibility}. It contains $10r$ columns of width $1$ spaced $2r$ cells apart,  connected by a bottom row of length $20r^2$. All columns \textit{but the $1$st and $k$th columns} have height $30r^2$. The $1$st and $k$th columns have height $30r^2 + 1$, and are the only columns connected to the top row (see the Figure). The source, $s$, is assumed to be at the bottom left corner.
 
\begin{figure}[!ht]
    \centering
    \includegraphics[height=2.27in]{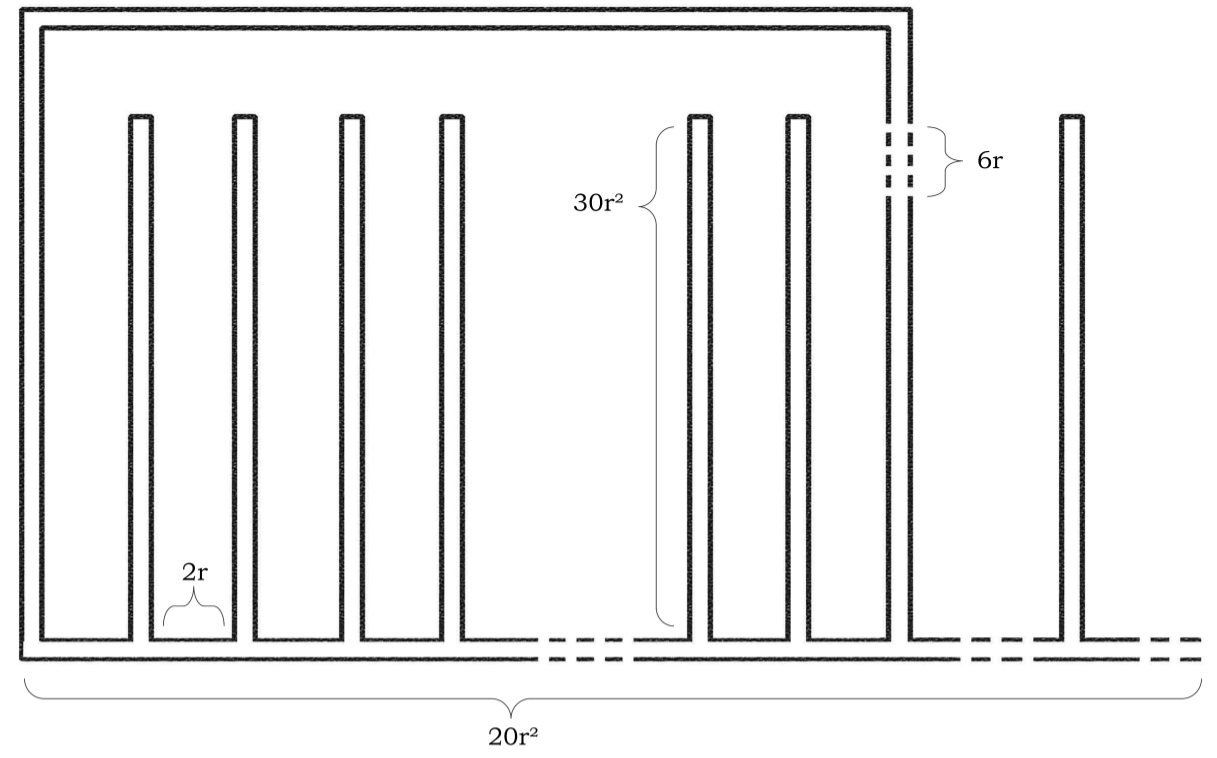}
    \caption{Sketch of the environment $G(k,r)$ (not drawn to scale).}
    \label{fig:impossibility}
\end{figure}

With this construction in mind, let us prove \cref{prop:nototalenergymincentrallized}.

\begin{proof}
Assume for contradiction that $\textbf{\textrm{ALG}}$ is a uniform dispersion algorithm that is total energy-optimal in all environments. We study the actions of \textbf{\textrm{ALG}} over the environment $G(k,V)$. We shall show that there exists $1 \leq k \leq 10V$ for which \textbf{\textrm{ALG}} is sub-optimal over $G(k,V)$, or some rotation and reflection of $G(k,V)$.

At time $t = 0$, $A_1$ is located at the bottom left (since this is where $s$ is), and can move either up or right. Since $A$'s visibility range is $V$, $A$ cannot see any column besides the leftmost column at time $t = 0$. This means that $A$ cannot distinguish between $G(k,V)$ and a rotation-and-reflection of $G(k,V)$. Hence, we can assume w.l.o.g. that \textbf{\textrm{ALG}} makes $A_1$ take a step up (if it steps right, we simply rotate and reflect $G(k,V)$). 


By assumption, the \textit{energy use} of \textbf{\textrm{ALG}} over $G(k,V)$ is $E_{total}^* = n + \sum_{v \in G(k,V)}{\textrm{dist}(s,v)}$. This assumption implies that every robot travels a shortest path to wherever it eventually settles, without rests. In particular, $A_1$ must be active for precisely $\textrm{dist}(A_1, v_1)+1$ time steps, where $v_1$ is the destination at which $A_1$ chooses to settle. 

We make several observations:

\begin{enumerate}
    \item Once $A_1$ stepped up, it has committed to stepping up and right until reaching $v_1$, as staying in place or going in a third direction causes its path to $v_1$ to be longer than $\textrm{dist}(s, v_1)$ steps, causing the total energy use of \textbf{\textrm{ALG}} to be greater than $E_{total}^* $--a contradiction.
    
    \item $v_1$ cannot be a vertex in the first column or in the top row except the top vertex of column $k$ or one vertex to its left. Should $v_1$ not equal one of these two vertices, there will be vertices in the top row that are to the right of $v_1$, but not part of column $k$. Settling at $v_1$ would make it impossible for other robots to reach these vertices by going up the first column and then traveling rightward, forcing these robots instead to reach these vertices by going right until the $k$th column, going to the top of column $k$ and then moving left. This ``U-turn'' is longer than the path that goes up  the first column and then goes right, so this will cause the total energy to increase beyond $E_{total}^*$--a contradiction. 
    
    \item $v_1$ cannot be any vertex in the $k$th column other than the top of the $k$th column, as this would require $A_1$ to step downwards. Stepping downwards results in $A$ taking a sub-optimal ``U-turn'' path to reach its destination--a contradiction.
    
\end{enumerate}

(*) From (1)-(3) we conclude that $v_1$ must equal precisely the top vertex of the $k$th column or one vertex to its left.

Let $T = 30V^2 + 2$ be the time step at the end of which $A_1$ reaches the top row. Since \textbf{\textrm{ALG}} is energy-optimal, no robot can stay at $s$ for more than one time step, so by the time $A_1$ reaches the top row, there will be $4V$ robots in $G(k,V)$ that have been around for at least $T-8V-2$ time steps (energy optimality forces robots to always move away from $s$, so a new robot must emerge at $s$ once per two time steps). Each of these $4V$ robots must have already entered one of the columns or settled, since they travel optimal paths to their destination, and the total length of the bottom row is $20V^2 < T-8V-2$.

Note that at and before time $T$, none of the top vertices of the other columns have been seen, so \textbf{\textrm{ALG}} must follow the course of action outlined above independent of $k$. Note further that  as there are $10V$ columns, there must exist a column, column $k^*$, that none of the robots $A_1, \ldots, A_{4V}$ have entered. Let us set $k = k^*$. We shall show that \textbf{\textrm{ALG}} must act sub-optimally in $G(k^*, V)$. 

When $A_1$ reaches $v_1$, the above indicates that any other robot currently present in the $k^*$th column (if there are any) emerged at $s$ at least $2\cdot 4V$ time steps after $A_1$. Therefore it is at distance at least $8V$ from $A_1$, meaning that there is a segment of $6V$ vertices in column $k^*$ that no robot has seen yet. This indicates that \textbf{\textrm{ALG}} must make the same decision for $A_1$ whether these vertices exist or not. However, if any one of these vertices does not exist, then column $k^*$ is not connected to the top row, indicating that $A_1$ cannot settle at the top of the $k^*$th column or to its left, else it will block off part of the environment. We arrived at a contradiction to (*).\end{proof}

By adding more columns to the $G(k,r)$ construction and increasing the height of the columns, we can force $A_1$ to go down more and more steps, causing the difference between $E_{total}^*$ and the total energy use of \textbf{\textrm{ALG}} to be arbitrarily large.

Our argument does not exclude the possibility of an algorithm that is $E_{max}$-optimal, and we do not know if such an algorithm exists. We leave this as an open problem for interested readers.

\section{Minimizing Energy in Simply Connected Regions}
\label{section:minimizingenergyfcdfs}
We've shown that no swarm algorithm exists that optimizes energy in every environment,  regardless of the robots' capabilities. In this section, we will show that it \text{is} possible to minimize energy in a large class of environments called ``simply connected'' environments:

\begin{definition}
\label{holelessdefinition}
A environment $R$ is said to be  \textbf{simply connected} if $R$ is connected and $R^c$ is connected (i.e., there is a path between any two vertices in $R^c$ consisting only of vertices in $R^c$). 
\end{definition}

Equivalently, $R$ is simply connected if and only if it contains no ``holes'': any path $v_1 v_2 \ldots v_1$ of vertices in $R$ that forms a closed curve does not surround any vertices of $R^c$. 

Surprisingly, not only can we minimize energy in simply connected environments, we can do so with a $(2,0,5)$-algorithm. Such an algorithm can be considered a ``strongly decentralized'', \textit{ant-like algorithm}, i.e., an algorithm requiring no communication, and small, constant visibility range and persistent state memory, to run in its target class of environments (in this case, simply connected environments). We elaborate on the claim that such algorithms are strongly decentralized in \cref{appendix:stronglydecentralizedantrobots}. The main result of this section is an algorithm for $(2,0,5)$-robots, called ``Find-Corner Depth-First Search'' (FCDFS), that enables the robots to disperse over any simply-connected region $R$ while being energy-, travel- and makespan-optimal:

\begin{proposition}
FCDFS is an $E_{total}$, $E_{max}$, $T_{total}$, $T_{max}$,  and $\mathcal{M}$-optimal $(2,0,5)$-algorithm for uniform dispersion in simply connected environments.
\end{proposition}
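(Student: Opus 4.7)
The plan is to reduce the claim to showing that FCDFS, despite its locality constraints, effectively reproduces the behavior of the omniscient shortest-path algorithm of \cref{proposition:omniscientpathoptima}. By that proposition, it suffices to prove that under FCDFS: (i) every robot takes a shortest path in $R$ from $s$ to its settling location, moving at every time step until it settles; (ii) consecutive active robots $A_i, A_{i+1}$ never block one another; and (iii) a fresh robot emerges at $s$ every two time steps. Once (i)--(iii) are established, the achieved values of $\mathcal{M}$, $T_{total}$, $T_{max}$, $E_{total}$, $E_{max}$ automatically coincide with $\mathcal{M}^*$, $T_{total}^*$, $T_{max}^*$, $E_{total}^*$, $E_{max}^*$. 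The $5$-bit memory bound is a separate implementation fact proved in \cref{appendix:fcdfs5bit}, so at the level of this proposition I would first argue correctness using whatever constant state is convenient and then invoke that appendix.

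First, I would describe FCDFS itself: the leader robot performs a depth-first traversal of $R$, preferring neighbours according to a fixed cyclic order (e.g., clockwise starting from a canonical direction), and settles precisely when the rule ``settling here keeps the set of still-unsettled cells connected and simply connected'' forces it to. Each follower behaves as the trailing end of the DFS stack, moving into the cell most recently vacated by its predecessor. The central geometric claim, which I would prove by induction on the number of settled robots, is the invariant: \emph{after every time step, the union of unsettled cells (including cells currently occupied by active robots) is a simply connected subregion of $R$ containing $s$.} Because $R$ starts simply connected, the base case holds, and one checks that within the $V=2$ window a robot can locally recognize those candidate settling cells whose removal from the residual region preserves both connectedness and absence of enclosed holes; the FCDFS ``corner'' rule is precisely the local predicate that characterizes such cells in a simply connected grid.

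Next I would convert this topological invariant into the metric statement needed for (i). Because the residual region is simply connected at every time step and contains $s$, shortest-path distances from $s$ to any still-unsettled cell $v$ in the residual region coincide with $\operatorname{dist}_R(s,v)$: a simply connected orthogonal subregion of the grid admits no shortcut disappearing when further cells are carved off along its boundary in the FCDFS manner. Combined with the DFS traversal rule, which advances the leader only to a neighbour strictly deeper in the spanning tree rooted at $s$, this shows that the cell $v_i$ at which $A_i$ eventually settles is reached via a path whose length equals $\operatorname{dist}_R(s,v_i)$, and that the leader never needs to pause. For followers, the same conclusion follows because each follower traces the same path as its predecessor up to the point where the predecessor diverged.

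Finally I would handle the spacing and throughput conditions (ii) and (iii). Because a new robot enters $s$ only when $s$ is empty, and the leader is forced by the DFS rule to step off $s$ on its first active tick, successive robots enter with a two-step offset and therefore remain at distance $\ge 2$ throughout execution, so no collision or blocking occurs; this also delivers (iii) directly. The main obstacle I expect is the inductive proof of the simple-connectedness invariant using only $V=2$ information: one has to argue that the local ``corner'' predicate evaluated inside a $5\times 5$ window is sound and complete for preserving global simple-connectedness of the residual region, which is really a combinatorial fact about how simply connected subsets of $\mathbb{Z}^2$ can be shrunk one boundary cell at a time. Once that local-to-global step is established, the rest of the proof is bookkeeping, and the $5$-bit encoding is delegated to \cref{appendix:fcdfs5bit}.
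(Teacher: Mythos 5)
There are two genuine gaps here, and the first is fatal as written. You describe FCDFS as a leader--follower depth-first traversal in which the leader prefers neighbours in a fixed cyclic order and followers chase into the most recently vacated cell, and you then derive shortest paths from the claim that the leader ``advances only to a neighbour strictly deeper in the spanning tree rooted at $s$.'' Depth in a DFS tree is not grid distance from $s$: that is exactly why DFLF, which is the algorithm you have actually described, has total travel $\Theta(n^4)$ on an $n\times n$ square (the leader spirals) while FCDFS achieves $\Theta(n^3)$. The paper's mechanism for shortest paths is different and essential: each robot moves monotonically in a primary direction and its $90^\circ$-clockwise secondary direction (a staircase, hence locally a shortest path in $\mathbb{Z}^2$), and may change primary direction only at a \emph{hall}. \cref{treestructure} shows halls of a simply connected region are articulation points, so $R(t)$ decomposes into a tree of components that every path from $s$ must thread through in the same order; combined with \cref{removecornersimplyconnected}(b) (removing a \emph{corner} preserves distances, because any path through $c$ reroutes through $diag(c)$), this yields \cref{shortestpathproposition}. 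Your substitute claim---that simple connectedness of the residual region by itself forbids ``shortcuts disappearing''---is not true in general; it is corner removal specifically that preserves the metric, and monotone-between-halls motion, not DFS order, that makes each robot's path realize $\operatorname{dist}_R(s,v_i)$.

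The second gap is that you never confront the sensing ambiguity that the $(2,0,5)$ model forces, which is where most of the paper's technical work lives. A robot cannot distinguish an active robot from a wall or a settled robot, so when it sees an obstacle at $diag(v)$ it may misread a corner as a hall (a ``fake hall'') because its own successor $A_{i+1}$ happens to occupy the diagonal; soundness of the local corner/hall predicate is therefore \emph{not} a static combinatorial fact about shrinking simply connected subsets of $\mathbb{Z}^2$, as your last paragraph frames it, but a dynamic one. The paper resolves it with an induction (\cref{nofakehalls}) interleaved with \cref{distancecorollary} (consecutive active robots stay at distance $\ge 2(j-i)$) and \cref{followtheleaderlemma} ($next(A_{i+1}) = prev(A_i)$): an active robot can sit on the diagonal of $A_i$'s corner at time $t$ iff $A_i$ itself occupied that diagonal at time $t-2$, which is exactly what the two remembered previous positions (the $prev(prev(A)) = diag(v)$ test, encoded in the 5 bits) detect. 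Your outline's invariant (residual region stays simply connected after each settlement) and your items (ii)--(iii) do match the paper's skeleton---including the observation that a robot never settles at $s$ prematurely, which the paper gets from every rectilinear region with more than one vertex having at least two corners (\cref{timeanalysis})---but without the staircase/hall shortest-path argument and the fake-hall induction, the reduction to \cref{proposition:omniscientpathoptima} does not go through.
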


FCDFS ensures that the path of a robot from $s$ to its eventual destination (the vertex at which it settles) is a shortest path in $R$. The idea of the algorithm lies in the distinction between a \textit{corner} and a \textit{hall} (see Figure \ref{fig:corners} and Figure \ref{fig:halls}):

\begin{definition}
A vertex $v$ of a grid environment $R$ is called a \textbf{corner} if either:
 
\begin{enumerate}[label=(\alph*)]
    \item $v$ has one or zero neighbours in $R$, or
    \item $v$ has precisely two neighbours $u$ and $u'$ in $R$, and $u$ and $u'$ have a common neighbour $w$ that is distinct from $v$.
\end{enumerate}
\end{definition}

\begin{definition}
A vertex $v$ of $R$ is called a \textbf{hall} if it has precisely two neighbours $u$ and $u'$, and $u$ and $u'$ are both adjacent to the same vertex $w$ in $R^c$.
\end{definition}

\begin{figure}[!htb]
  \centering%
    \includegraphics[width=.2\linewidth,scale=0.3]{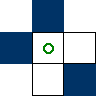}\hfill%
    \includegraphics[width=.2\linewidth,scale=0.3]{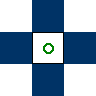}\hfill%
    \includegraphics[width=.2\linewidth,scale=0.3]{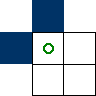}\hfill%
    \caption{Possible corners, denoted by the green circles. White vertices are locations in $R$, and blue vertices are \textit{walls}, i.e., vertices in $R^c$.}
    \label{fig:corners}
\end{figure}

\begin{figure}[!htb]
  \centering%
    \includegraphics[width=.2\linewidth]{corner1_hollow_circle.png}\hfill%
    \caption{A hall, denoted by the green circle..}
    \label{fig:halls}
\end{figure}

Essentially, halls are vertices in $R$ that are blocked by walls on two sides, and have an additional wall $w$ diagonal to them. Corners are either dead-ends, or vertices in $R$ that are blocked by walls on two sides, and have a vertex $w$ of $R$ diagonal to them. If $v$ is either a hall or a corner, $w$ is called the ``diagonal'' of $v$, and is denoted $diag(v)$. We observe that diagonals are uniquely specified.

Robots executing FCDFS attempt to move only in `primary' and `secondary' directions, where the secondary direction is always a 90-degree clockwise rotation of the primary direction (for example "up and right", "right and down", or "down and left"). They may only change their primary direction once they arrive at a hall (\cref{fig:fcdfs_hall_example}, \cref{fig:simulation2}), and they settle once both their primary and secondary directions are blocked and they are at a corner.

\begin{figure}[!ht]
  \centering%
    \includegraphics[width=.49\linewidth]{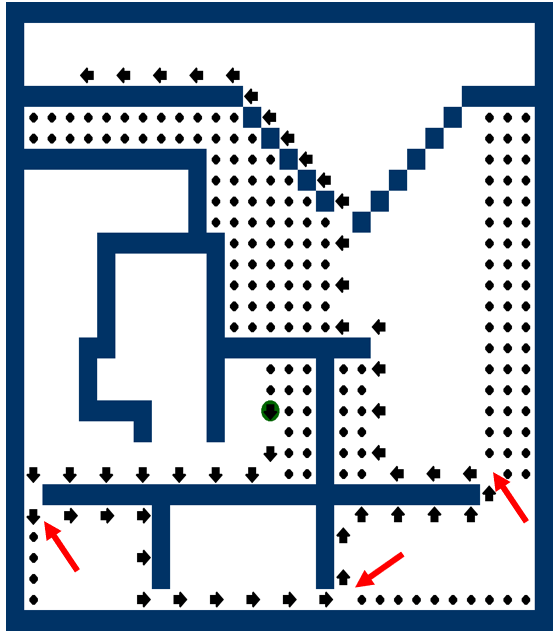}
    \caption{A snapshot from an example run of FCDFS. $s$ is the green circle, black arrows are active robots, black ``dots'' are settled robots, the blue region is $R^c$ and the white region is $R$. The primary direction of each robot is indicated by the direction the arrow representing it is facing \revisionhighlight{(note this is a \textit{state} in the algorithm; robots do not have a ``heading'')}. Red arrows point to some of the halls.}
    \label{fig:fcdfs_hall_example}
\end{figure}

\begin{figure}[!ht]
  \centering%
    \includegraphics[width=.5\linewidth]{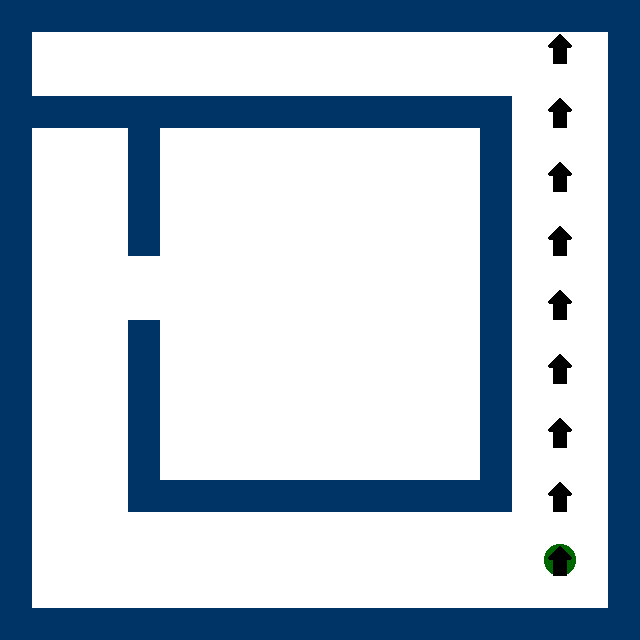}\hfill%
    \includegraphics[width=.5\linewidth]{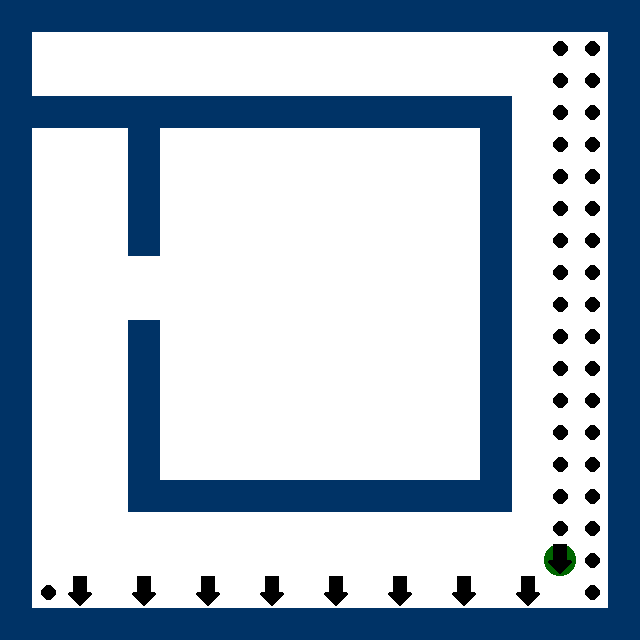}\hfill%
    \hfill
    \includegraphics[width=.5\linewidth]{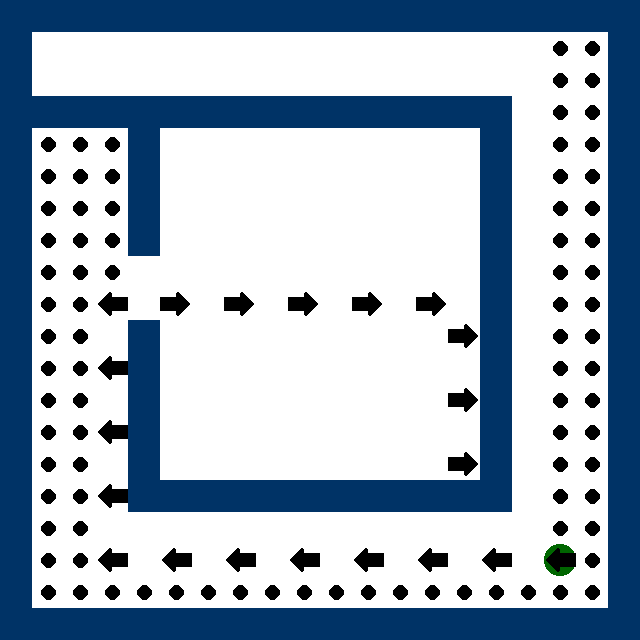}\hfill%
    \includegraphics[width=.5\linewidth]{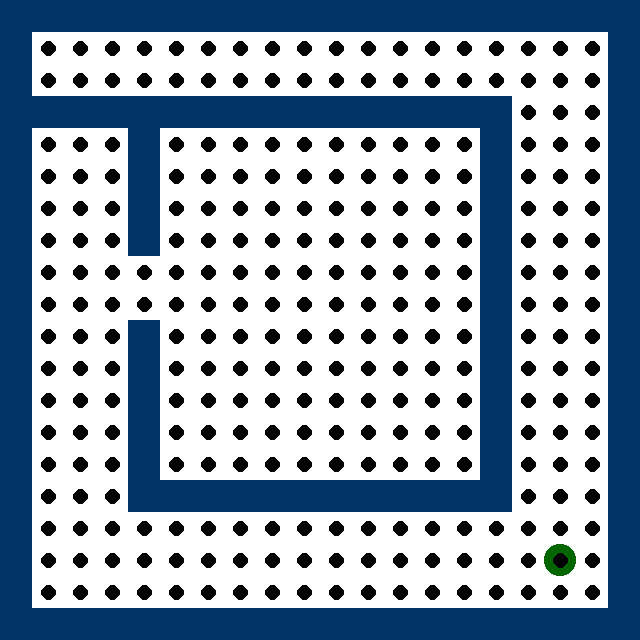}\hfill%
    \caption{Four snapshots from a simulation of FCDFS. \revisionhighlight{Arrows denote \textit{primary direction}.} Rather than block active robots, the settled robots form halls to enable the swarm to explore more of the environment.}
    \label{fig:simulation2}
\end{figure}

As in \cref{proposition:omniscientpathoptima}, let $R(t)$ be the initial environment $R$ after removing every vertex that is occupied by a \textit{settled} robot at time $t$. A robot running FCDFS at time $t$ is searching for the corners and halls of $R(t)$. However, since we assume no communication capabilities, robots are unable to distinguish between active robots, and walls or settled robots. Hence, it is important to design FCDFS so that a robot never misidentifies a corner of $R(t)$ as a hall due to an active robot (rather than a wall or a  settled robot) occupying the diagonal and being identified as an obstacle. For this purpose we enable our robots to remember their two previous locations. We will show that an active robot can occupy the diagonal of a corner at time $t$ if and only if its predecessor occupied this diagonal at time $t-2$, thereby allowing the predecessor to distinguish between 'real' and 'fake' halls.

Pseudo-code for FCDFS is given in \cref{alg:FCDFS}. The code outlines  the Compute step of the Look-Compute-Move cycle. Note that since our robots operate in Look-Compute-Move cycles, when the pseudocode says ``step,'' we mean that this is the step the robot will take during the Move phase, and not a movement that occurs during computation. In \cref{alg:FCDFS}, we denote by $prev(A)$ the position of robot $A$ at the beginning of the previous time step, and by $prev(prev(A))$ its position two time steps ago. We also denote by $next(A)$ its position at the beginning of the next time step. FCDFS can be implemented with just $5$ bits of persistent memory, but for the sake of a simpler presentation, the pseudocode in \cref{alg:FCDFS} does not go into the memory management aspect of FCDFS. The complete $5$-bit implementation of FCDFS, including memory management, is deferred to \cref{appendix:fcdfs5bit} (see \cref{alg:5bitFCDFS}). 


\begin{algorithm}[!htb]
  \caption{Find-Corner Depth-First Search (local rule for active robots)}
  \begin{algorithmic}
    \State Let $v$ be the current location of $A$.
    \If{all neighboring vertices of $v$ are occupied}
        \State Settle.
    \ElsIf{$A$ has never moved}\Comment{Initialization}
        \State Search clockwise, starting from the "up" direction, for an unoccupied vertex, and set primary direction to point to that vertex.
    \EndIf
    \If{The closest grid cell in $A$'s primary direction is empty}
        \State Move there.
    \ElsIf{The closest grid cell in $A$'s secondary direction is empty}
        \State Move there.
    \Else\Comment{We are at a corner or a hall.}
    \If{Three neighboring vertices of $v$ are occupied}
        \State Settle.\Comment{Corner with three walls.}
    \ElsIf{$prev(prev(A)) = diag(v)$ \textbf{or} $diag(v)$ is unoccupied} 
        \State Settle.
    \Else\Comment{We think we are at a hall.}
        \State Set primary direction to point to the neighbour of $v$ different from $prev(A)$.
        \State Move in the primary direction.
    \EndIf
    \EndIf
  \end{algorithmic}
  \label{alg:FCDFS}
\end{algorithm}

\subsection{Analysis}

In this section we analyze the FCDFS algorithm and prove its optimality.

\begin{lemma}
\label{removecornersimplyconnected}
Let $c$ be a corner of a simply connected region $R$. Then:

\begin{enumerate}[label=(\alph*)]
\item $R - c$ (the region $R$ after removing $c$) is simply connected.
\item For any two vertices $u, v$ in $R - c$, the distance between $u$ and $v$ is the same as in $R$.
\end{enumerate}
\end{lemma}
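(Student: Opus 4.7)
The plan is to split into the two cases from the definition of a corner, namely (i) $c$ has at most one neighbor in $R$ and (ii) $c$ has precisely two neighbors $u, u'$ in $R$ sharing a common non-$c$ neighbor $w$, and to prove both parts of the lemma by a common geometric observation: any use of $c$ inside a path in $R$ can be locally rerouted at equal length. For part (a), I would split simple-connectedness of $R - c$ into two assertions, that $R - c$ is connected and that the complement $R^c \cup \{c\}$ is connected, and handle each separately. For part (b), the distance equality reduces to showing that any shortest path in $R$ between vertices of $R - c$ can be replaced by an equally short path avoiding $c$.

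I would first dispose of the degenerate sub-case of (i): zero neighbors forces $R = \{c\}$ by connectedness of $R$, which is vacuous; for the leaf sub-case, removing $c$ trivially preserves connectedness of $R - c$, three of the four grid-neighbors of $c$ are walls so $\{c\}$ attaches to the already-connected $R^c$ to give complement connectedness, and no shortest path between two vertices of $R - c$ can pass through a dead end, settling (b). In case (ii), the two neighbors $u, u'$ cannot lie collinearly through $c$ (collinear neighbors share only $c$ as a common grid-neighbor), so they sit at a right angle and the common neighbor $w \neq c$ is forced to be the fourth vertex of the unit square $\{c, u, u', w\}$, with the remaining two grid-neighbors of $c$ being walls. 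Using $w \in R$, any traversal $u - c - u'$ can be swapped for $u - w - u'$ at the same length, which gives both connectedness of $R - c$ and part (b) after applying the swap to a shortest path. For the complement, since $c$ has two wall grid-neighbors, $\{c\}$ attaches to $R^c$, which is connected by the hypothesis that $R$ is simply connected, so $R^c \cup \{c\}$ is connected.

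The main obstacle I anticipate is the formal justification that $w \in R$ in case (ii); this is what distinguishes a corner from a hall and is implicit rather than explicit in the bare definition, so I would appeal to the textual description characterizing corners as vertices with ``a vertex $w$ of $R$ diagonal to them,'' making $w \in R$ part of the meaning of a corner. Once this is in hand, the remaining content is just the unit-square rerouting plus the observation that simple-connectedness of $R$ is used only to guarantee that $R^c$ is connected before we attach $c$; everything else is a routine unpacking of definitions.
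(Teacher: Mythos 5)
Your proof is correct and takes essentially the same route as the paper's: reroute any path through $c$ via the diagonal vertex $w$ to preserve both connectedness and distances, and use the two walls adjacent to $c$ to keep the complement intact --- the paper phrases this last step as ``no path in $R-c$ can surround $c$,'' which under the paper's definition of simple connectedness (Definition~\ref{holelessdefinition}) amounts to your observation that $\{c\}$ attaches to the connected set $R^c$, so $R^c \cup \{c\}$ stays connected. You are merely more explicit than the paper in treating the degenerate dead-end corners (where $diag(c)$ is undefined), in ruling out the collinear configuration of $u,u'$, and in justifying $w \in R$, all of which the paper's one-line proof glosses over.
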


\begin{proof}
Removing $c$ does not affect connectedness, nor does it affect the distance from $u$ to $v$, as any path going through $c$ can instead go through $diag(c)$. Further, as $c$ is adjacent to two walls, no path in $R-c$ can surround it, so $R-c$ also remains simply connected.   
\end{proof}

An \textit{articulation point} (also known as a separation or cut vertex) is a vertex of a graph whose deletion increases the number of connected components of the graph (i.e., disconnects the graph) \cite{reinharddiestel2017}.

\begin{lemma}
\label{treestructure}
The halls of a simply connected region are articulation points.
\end{lemma}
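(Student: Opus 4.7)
The plan is to argue by contradiction using a Jordan curve argument. First, I would fix coordinates placing the hall $v$ at the origin, so that its two $R$-neighbors are $u = (1,0)$ and $u' = (0,1)$ and the shared diagonal wall is $w = (1,1)$; since $v$ has only two $R$-neighbors, its remaining grid-neighbors $v_1 = (-1,0)$ and $v_2 = (0,-1)$ must themselves lie in $R^c$. This is the characteristic ``L-shaped'' configuration of a hall: two opposite grid-quadrants of $v$ are occupied by walls.

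Next, I would assume for contradiction that $v$ is not an articulation point, so that $u$ and $u'$ are joined by some path $P$ in $R - v$. After discarding repeated vertices, $P$ may be taken simple, and appending the two edges $u'v$ and $vu$ produces a simple closed cycle $C$ in $R$ that visits $v$ exactly once, using precisely the edges $uv$ and $vu'$.

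Viewing $C$ as a closed polygonal curve in $\mathbb{R}^2$, the Jordan curve theorem partitions the plane into a bounded interior and an unbounded exterior. Locally at $v$, the 90-degree turn $u \to v \to u'$ separates the grid-neighborhood of $v$ into two sides: one side contains $w$, and the other contains both $v_1$ and $v_2$. Whichever side is swept into the bounded component of $\mathbb{R}^2 \setminus C$ must therefore contribute at least one vertex from $\{w, v_1, v_2\} \subseteq R^c$ to the interior of $C$. Hence $C$ is a closed path in $R$ that surrounds a vertex of $R^c$, contradicting simple connectedness as formulated after \cref{holelessdefinition}.

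The main obstacle will be the ``which side is inside'' step: I need to show rigorously that because $P$ avoids $v$, the cycle $C$ genuinely separates $w$ from $\{v_1, v_2\}$ in the plane, with at least one of these three wall vertices strictly inside. This is a standard winding-number argument for polygonal cycles in $\mathbb{Z}^2$, but it deserves care because the paper's notion of ``surround'' is stated combinatorially on closed paths in $R$; I would bridge the two by thickening each edge of $C$ to a narrow strip, verifying that the resulting simple polygon encloses at least one unit square whose corner is a wall, and then reading off the enclosed $R^c$-vertex. Once this planarity step is nailed down, the contradiction is immediate.
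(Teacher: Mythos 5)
Your proof is correct and takes essentially the same route as the paper's: the paper likewise joins your cycle $C$ (its $P \cup P'$, where $P'$ is the path $u$--$h$--$u'$ through the hall) into a rectilinear closed polygon and derives the contradiction by observing that all four corners of the hall's unit square touch walls ($v_1$, $v_2$, and $\mathrm{diag}(h)$ in your notation), so the polygon traps a vertex of $R^c$ in its interior, splitting $R^c$. Your Jordan-curve/sector analysis of which side of the $90$-degree turn at $v$ lies inside is simply a more explicit elaboration of the step the paper states as ``the polygon must contain at least one corner of $h$ in its interior,'' so no new idea is introduced and none is missing.
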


\begin{proof}
Let $h$ be a hall of a simply connected region $R$. Suppose for contradiction that $h$ is not an articulation point, and let $u$ and $u'$ be the neighbors of $h$. Then there is a path from $u$ to $u'$ that does not pass through $h$. Let $P$ be this path, and let $P'$ be the path from $u$ to $u'$ that goes through $h$. 

When embedded in the plane in the usual way, $R$ is in particular a simply connected topological space. The hall $h$ is embedded onto a unit square, whose four corners each touch a wall: three touch the two walls adjacent to $h$, and the fourth touches $diag(h)$. Joined together to form a closed curve, the paths $P$ and $P'$ form a rectilinear polygon that must contain at least one corner of $h$ in its interior. Hence, the curve $PP'$ in $R$ contains a part of $R^c$. This is a contradiction to the assumption that $R$ is simply connected, as it implies $R^c$ has at least two disconnected components. (See Figure \ref{fig:hall_lemma}).\end{proof}

\begin{figure}[!ht]
  \centering%
    \includegraphics[width=.49\linewidth]{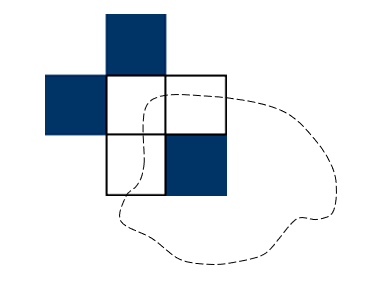}\hfill%
    \includegraphics[width=.49\linewidth]{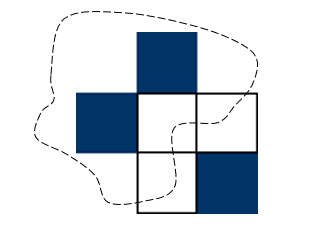}\hfill%
    \caption{The two possibilities for $PP'$.}
    \label{fig:hall_lemma}
\end{figure}

Lemma \ref{treestructure} indicates that $R$ can be decomposed into a tree structure $T(R)$ as follows: first, partition $R$ into the distinct connected components $C_1, C_2, \ldots, C_m$ that result from the deletion of all halls. Letting the vertices of $T(R)$ each represent one of these components (i.e., each vertex of $T(R)$ represents an entire connected component $C_i$ containing possibly many vertices of $R$, not individual vertices of $R$), connect $C_i$ and $C_j$ by an edge if they are both adjacent to the same hall in $R$. We set $C_1$ to be the root of the tree, and to be the connected component containing the door vertex $s$.

By Lemma \ref{removecornersimplyconnected}, assuming our robots correctly stop only at corners, $R(t)$ can in the same manner be decomposed into a tree $T(R(t))$ whose vertices represent connected components $C_1(t), C_2(t), \ldots$. These components each represent a sub-region of some connected component $C_i$ of $R$. Such tree decompositions will be important for our analysis.

In the next several propositions, we make the \textit{no fake halls at time $t$} assumption. This is the assumption that for any $t' < t$, at the end of time step $t'$: robots can only become settled at corners of $R(t')$, and can only change primary directions at halls of $R(t')$. We will later show that the ``no fake halls'' assumption is always true, so the propositions below hold unconditionally.

\begin{proposition}
\label{shortestpathproposition}
Assuming no fake halls at time $t$, a robot $A_i$ active at the beginning of time step $t$ has traveled an optimal path in $R$ from $s$ to its current position.
\end{proposition}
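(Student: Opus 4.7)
The plan is to decompose $A_i$'s trajectory from $s$ to its current position $v$ into staircase segments that are monotone in both coordinates, separated by the halls at which $A_i$ changed primary direction, and to combine \cref{removecornersimplyconnected} and \cref{treestructure} to show that no $s$-to-$v$ path in $R(t)$ can be shorter than $A_i$'s. By iterating \cref{removecornersimplyconnected} over the corners at which earlier robots settled, one first checks that $R(t)$ is simply connected and that the distance from $s$ to any vertex in $R(t)$ agrees with the distance in $R$. Hence it suffices to prove optimality of $A_i$'s path within $R(t)$.

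The next step is to label the halls at which $A_i$ changed primary direction as $h_1, \ldots, h_k$ in order of visit, setting $h_0 := s$ and $h_{k+1} := v$; by the no-fake-halls assumption these are genuine halls of $R(t)$. I would then verify that each segment $h_j \to h_{j+1}$ is coordinate-monotone: between consecutive direction changes the algorithm only moves $A_i$ in its fixed primary direction or the 90-degree clockwise secondary direction, which are perpendicular axis-aligned directions. Consequently the length of the $j$th segment equals $\textrm{dist}(h_j, h_{j+1})$ computed in $\mathbb{Z}^2$, and the total length of $A_i$'s path equals $\sum_{j=0}^{k} \textrm{dist}(h_j, h_{j+1})$.

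To lower-bound any competing path, I would invoke \cref{treestructure}: each $h_j$ is an articulation point of $R(t)$, so the tree $T(R(t))$ of components forces every $s$-to-$v$ path in $R(t)$ to cross $h_1, \ldots, h_k$ in the unique tree order. Between consecutive halls any grid path has length at least their Manhattan distance, so the total length of any $s$-to-$v$ path in $R(t)$ is at least $\sum_{j=0}^{k} \textrm{dist}(h_j, h_{j+1})$, matching $A_i$'s path length. Combined with the distance-preservation established at the outset, this yields that $A_i$'s path is a shortest path in $R$.

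The step I expect to require the most care is formalizing the tree-walk argument. Two facts must be stated explicitly before the lower-bound applies: that $A_i$ does not re-enter a component it has already left (so its halls appear in the unique tree order dictated by $T(R(t))$, and not in some re-traced order), and that every simple $s$-to-$v$ path in $R(t)$ visits these halls in the same order. The former follows from the observation that after turning at a hall the new primary direction points strictly away from that hall and the subsequent coordinate-monotone motion cannot backtrack through it; the latter follows directly from the articulation-point structure of $T(R(t))$.
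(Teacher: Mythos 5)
Your proposal is correct and follows essentially the same route as the paper's proof: both use \cref{removecornersimplyconnected} to show $R(t)$ preserves distances from $R$, decompose $A_i$'s trajectory at the halls where it changed primary direction (each segment being Manhattan-optimal because primary and secondary directions are perpendicular), and use \cref{treestructure} together with the uniqueness of root-to-node paths in $T(R(t))$ --- plus the no-re-entry observation that the updated primary direction pulls $A_i$ away from the hall it entered through --- to conclude any competing $s$-to-$v$ path is at least as long. Your version merely makes the tree-order lower bound slightly more explicit than the paper's, which argues the same point abstractly via path uniqueness in the component tree.
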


\begin{proof}
By the assumption, the only robots that became settled did so at corners. Consequently, by Lemma \ref{removecornersimplyconnected}, $R(t)$ is a connected graph, and there is a path in $R(t)$ from $s$ to $A_i$. The path $A_i$ took might not be in $R(t)$, but whatever articulation points (and in particular halls) $A_i$ passed through must still exist, by definition.

Since $A_i$ is active at the beginning of time $t$, by the algorithm, it has taken a step every unit of time up to $t$. Until $A_i$ enters its first hall, and between any two halls $A_i$ passes through, it only moves in its primary and secondary directions. This implies that the path $A_i$ takes between the halls of $R(t)$ must be optimal (since it is optimal when embedded onto the integer grid $\mathbb{Z}^2$). We note also that $A_i$ never returns to a hall $h$ it entered a connected component of $R(t)$ from, since the (possibly updated) primary direction pulls it away from $h$.

We conclude that $A_i$'s path consists of taking locally optimal paths to traverse the connected components of the tree $T(R(t))$ in order of increasing depth. Since in a tree there is only one path between the root and any vertex, this implies that $A_i$'s path to its current location is at least as good as the optimal path in $R(t)$. By Lemma \ref{removecornersimplyconnected}, b, this implies that $A_i$'s path is optimal in $R$.  
\end{proof}

\begin{corollary}
\label{distancecorollary}
Assuming no fake halls at time $t$,
\begin{enumerate}[label=(\alph*)]
\item For all $i < j$, the distance between the robots $A_i$ and $A_j$, if they are both active at the beginning of $t$, is at least $2(j-i)$
\item No collisions (two robots occupying the same vertex) have occurred. 
\end{enumerate}
\end{corollary}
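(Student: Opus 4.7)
The plan is to reduce both parts to \cref{shortestpathproposition} together with a simple accounting of when robots emerge at $s$. The core observation is that, under FCDFS, an active robot moves at every time step (it either settles, moves in a primary direction, moves in a secondary direction, or moves after reassigning a primary direction at a hall); hence, if $A_i$ emerged at the end of time step $t_i$, then at the beginning of time step $t$ it has moved exactly $t - t_i - 1$ times. By \cref{shortestpathproposition} (applicable under the no-fake-halls assumption), the path traversed is a shortest path in $R$, so $d_R(A_i,s) = t - t_i - 1$, where $d_R$ denotes graph distance in $R$.

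Next, I would show $t_j - t_i \geq 2(j-i)$ whenever $i < j$. The model specifies that a new robot emerges at $s$ only at the end of a time step in which $s$ is unoccupied at its beginning. Since $A_k$ occupies $s$ at the beginning of time step $t_k + 1$, the next robot $A_{k+1}$ can emerge no earlier than the end of time step $t_k + 2$, so $t_{k+1} \geq t_k + 2$; iterating gives $t_j \geq t_i + 2(j-i)$. Combining this with step~1 and the triangle inequality for the graph metric,
\[
d_R(A_i, A_j) \;\geq\; d_R(A_i, s) - d_R(A_j, s) \;=\; (t - t_i - 1) - (t - t_j - 1) \;=\; t_j - t_i \;\geq\; 2(j-i),
\]
which proves part~(a).

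For part~(b), I would split the potential collision into two cases. If two active robots $A_i, A_j$ with $i < j$ occupy the same vertex at the beginning of time step $t$, then $d_R(A_i,A_j) = 0$, contradicting part~(a) since $2(j-i) \geq 2$. If an active robot $A_j$ were to share its vertex with a settled robot $A_i$, note that the proof of \cref{shortestpathproposition} places $A_j$'s current location inside $R(t)$ (the region with all currently settled vertices removed), so $A_j$'s position cannot coincide with any settled robot's position. Since a collision at an earlier time step would be covered by the same argument applied at that earlier step (the no-fake-halls hypothesis is assumed for all $t' < t$), no collisions have occurred up through time $t$.

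The only subtlety I anticipate is making sure that the ``active robot always moves'' claim is used cleanly, since the \cref{alg:FCDFS} pseudocode permits the robot to either move or settle, but never to stall while active; this needs to be read off explicitly from the algorithm before invoking \cref{shortestpathproposition} to get the exact distance $t - t_i - 1$. Once that is pinned down, both parts are essentially one-line triangle-inequality arguments.
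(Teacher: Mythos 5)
Your proposal is correct and follows essentially the same route as the paper: arrivals at $s$ spaced at least two time steps apart, combined with the shortest-path property of \cref{shortestpathproposition}, yields the $2(j-i)$ bound, with (b) as a direct consequence. The only difference is presentational—where the paper asserts tersely that the gap ``is never shortened,'' you formalize this via the triangle inequality on the exact distances $t - t_i - 1$ from $s$, which is a slightly more rigorous rendering of the same argument.
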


\begin{proof}
For proof of (a), note that at least two units of time pass between every arrival of a new robot (since in the first time step after its arrival, a newly-arrived robot blocks $s$). Hence, when $A_j$ arrives, $A_i$ will have walked an optimal path towards its eventual location at time $t$, and it will be at a distance of $2(j-i)$ from $s$. This distance is never shortened up to time $t$, as $A_i$ will keep taking a shortest path. 

(b) follows immediately from (a).  
\end{proof}

Using Corollary \ref{distancecorollary} it is straightforward to show:

\begin{lemma}
\label{followtheleaderlemma}
Suppose $A_i$ is active at the beginning of time step $t$. Assuming no fake halls at time $t$, $next(A_{i+1}) = prev(A_i)$.
\end{lemma}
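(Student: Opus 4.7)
The plan is to prove, by induction, a stronger statement from which the lemma is immediate: that $A_{i+1}$'s trajectory and internal state from the moment of its emergence are exactly a copy of $A_i$'s, shifted forward by two time steps. I first anchor the two-step offset. By \cref{shortestpathproposition}, $A_i$ does not rest at $s$, so $s$ empties during $A_i$'s first active step, and by the model's emergence rule $A_{i+1}$ appears at $s$ exactly two time steps after $A_i$ does. For the base case, the ``search clockwise from up'' initialization in \cref{alg:FCDFS} depends only on the walls around $s$, which are identical two steps apart, so both robots choose the same initial primary direction and make the same first move.

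For the inductive step, I would show that if $A_{i+1}$'s position, primary direction, $\mathrm{prev}$ and $\mathrm{prev}(\mathrm{prev})$ at the beginning of time $t$ match those of $A_i$ at time $t-2$, then $A_{i+1}$'s local view at its current cell $v$ triggers the same branch of \cref{alg:FCDFS} as $A_i$'s view did. Memory agrees by the inductive hypothesis, so the task reduces to comparing what is occupied in the $V=2$ neighbourhood of $v$. Walls are static. By \cref{distancecorollary}(a), consecutive active robots are always at least two apart along the shared shortest path, so any active neighbour visible to $A_{i+1}$---namely $A_i$ two cells ahead, and possibly $A_{i+2}$ two cells behind---sits in exactly the same relative position as the analogous neighbour did for $A_i$ two steps earlier. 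By the no-fake-halls assumption, any robot that became settled between times $t-2$ and $t$ did so at a corner of the relevant $R(\cdot)$, hence at a cell that $A_i$ did not traverse; consequently such settlers cannot have altered the occupancy status of the primary- or secondary-direction neighbour of $v$, nor of $\mathrm{diag}(v)$, in a way that $A_i$'s view at time $t-2$ did not already reflect. Finally, the $\mathrm{prev}(\mathrm{prev}(A))=\mathrm{diag}(v)$ clause---whose entire purpose is to detect a trailing successor standing on a diagonal---evaluates identically for both robots because the memory field matches by the inductive hypothesis.

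The principal obstacle is precisely this inductive step: one must rule out every way an active or newly settled robot could change the occupancy of a cell visible from $v$ between times $t-2$ and $t$. The main lever is the exact $2(j-i)$ spacing from \cref{distancecorollary}, which forces $A_{i+2}$ to be far enough behind to behave, by a parallel induction on the pair $(A_{i+1},A_{i+2})$, exactly as $A_{i+1}$ itself did two steps earlier---so that its presence is algorithmically indistinguishable from the earlier configuration it is shadowing. Once this careful case analysis is done, the invariant propagates and the identity $\mathrm{next}(A_{i+1})=\mathrm{prev}(A_i)$ is simply the statement of the invariant read off at time $t$.
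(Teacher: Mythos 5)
Your strategy is the right one, and it matches what the paper intends: the paper offers no written proof at all, asserting only that the lemma follows ``straightforwardly'' from \cref{distancecorollary}, and the natural formalization is exactly your two-step trajectory-shadowing invariant, anchored by the emergence rule and propagated by a simultaneous induction over all consecutive pairs. The skeleton is sound, and you correctly identify the crux as ruling out any occupancy change in the cells visible from $v$ between times $t-2$ and $t$.

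Two of your justifications, however, do not hold as literally stated and need patching. First, in the base case, the initialization at $s$ depends on walls \emph{and settled robots}, not only walls; what saves you is that no robot can newly settle adjacent to $s$ inside the two-step window, since by \cref{shortestpathproposition} every robot's distance from $s$ is at least $2$ after its first move and never decreases, so no robot other than the newly emerged one can even stand at a neighbour of $s$. Second, and more seriously, the chain ``settlers settle at corners of $R(\cdot)$, hence at cells $A_i$ did not traverse, \emph{consequently} they cannot have altered the occupancy of the primary cell, secondary cell, or $\mathrm{diag}(v)$'' is a non sequitur: an off-path corner can perfectly well be a side-neighbour or diagonal of $v$ in the grid metric, since graph distance along the path does not control Manhattan proximity. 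To close this you need two further observations that neither your sketch nor \cref{distancecorollary} alone supplies: (i) all active robots lie on the current leader's path, which by \cref{shortestpathproposition} is a shortest path from $s$, and along a shortest path $\mathrm{dist}(s,\cdot)$ increases by exactly one per step, so two path cells at offset at least $2$ are never grid-adjacent (and cells at even offset, where active robots sit, are never neighbours of $v$ by bipartite parity); and (ii) \cref{alg:FCDFS} only consults $\mathrm{diag}(v)$ in the branch where primary and secondary are both blocked, and in every in-scope instance of that branch ($A_i$ still active at time $t$, so $A_i$ turned rather than settled) $v$ is a genuine hall of $R(t-2)$, whose diagonal is a wall and hence static. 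With (i) and (ii) in hand, every cell the algorithm actually inspects at $v$ has provably unchanged status between the two visits, and your invariant propagates; without them, the inductive step as written has a genuine hole.
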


\cref{followtheleaderlemma} also implies that if $A_i$ is active at the beginning of time step $t$, then $A_{i+1}$ will be active at the beginning of time step $t+1$. 

We can now show that the ``no fake halls'' assumption is true, and consequently, the propositions above hold unconditionally.

\begin{proposition}
\label{nofakehalls}
For any $t$, at the end of time step $t$: robots only become settled at corners of $R(t)$, and only change primary directions halls of $R(t)$ (not including the primary direction decided at initialization).
\end{proposition}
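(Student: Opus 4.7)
The plan is to prove this by strong induction on $t$. The base case is trivial since no non-initialization decision has yet been made. In the inductive step, the hypothesis---that for every $t' < t$ robots settled only at corners and changed direction only at halls of $R(t')$---is exactly the ``no fake halls at time $t$'' premise of \cref{shortestpathproposition}, \cref{distancecorollary}, and \cref{followtheleaderlemma}, so I may freely invoke those results at time $t$.

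Two observations will drive the whole analysis. First, by \cref{distancecorollary}(a), every active robot other than $A_j$ lies at Manhattan distance at least $2$ from $A_j$, so no active robot occupies any neighbor of $v$, and the only candidates for an active occupant of $diag(v)$ (which is at distance exactly $2$) are $A_{j-1}$ and $A_{j+1}$. Second, by iterating \cref{followtheleaderlemma}, if $A_{j+1}$ is active at time $t$ then its position is precisely $prev(prev(A_j))$. Armed with these, I would walk through the three settling branches of \cref{alg:FCDFS} and verify that the visible unoccupied count around $v$---with a single correction at $diag(v)$ when $prev(prev(A_j)) = diag(v)$---equals the number of in-$R(t)$ neighbors of $v$, so that $v$ meets the corner criterion of \cref{holelessdefinition} in each case.

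The only genuinely non-routine case is the hall branch, where primary and secondary are both blocked, $diag(v)$ appears occupied, and $prev(prev(A_j)) \neq diag(v)$. Here I need to show that the occupant of $diag(v)$ is a wall or a settled robot, so that $diag(v) \notin R(t)$ and $v$ is indeed a hall of $R(t)$. By observation one, an active occupant must be $A_{j-1}$ or $A_{j+1}$; by observation two together with the assumption $prev(prev(A_j)) \neq diag(v)$, $A_{j+1}$ is immediately ruled out. The main obstacle, and the focus of my argument, is to rule out $A_{j-1}$.

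To do so, I would argue as follows. Iterated \cref{followtheleaderlemma} (valid at times $t-1$ and earlier by the inductive hypothesis) puts $A_{j-1}$ at $v$ at time $t-2$ and makes its trajectory, and hence its sequence of primary/secondary assignments, coincide with $A_j$'s shifted by two steps; in particular $A_{j-1}$'s primary and secondary at $v$ agree with $A_j$'s. If $A_{j-1}$ now sits at $diag(v)$, then its length-two walk from $v$ to $diag(v)$ must pass through one of the two common neighbors of $v$ and $diag(v)$, namely the neighbors opposite $A_j$'s primary and secondary directions. A move from $v$ in either such direction at time $t-2$ cannot be a primary- or secondary-direction move, so it could only have occurred via a hall-change at $v$. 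The inductive hypothesis then forces $v$ to be a genuine hall of $R(t-2)$, so $diag(v) \notin R(t-2)$; but walls are permanent and settled robots never un-settle, which gives $diag(v) \notin R(t)$ as well---contradicting the presence of an active robot at $diag(v)$. This contradiction closes the inductive step and completes the proof.
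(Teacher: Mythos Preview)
Your proof is correct and follows the same inductive scheme as the paper: both establish the base case trivially, invoke the ``no fake halls'' consequences (\cref{shortestpathproposition}, \cref{distancecorollary}, \cref{followtheleaderlemma}) in the inductive step, and split into settling versus direction-change cases. The one substantive difference is how you rule out $A_{j-1}$ as the active occupant of $diag(v)$ in the hall branch. The paper dispatches this in one line via a distance-from-$s$ argument: since $prev(A_j)$ is adjacent to $diag(v)$, one has $\textrm{dist}(s, diag(v)) \leq \textrm{dist}(s, v)$, whereas $A_{j-1}$, having entered earlier and moved along a shortest path throughout, is at distance at least $\textrm{dist}(s,v) + 2$ from $s$---so it cannot sit at $diag(v)$. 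Your trajectory-tracing argument (follow-the-leader places $A_{j-1}$ at $v$ two steps earlier with matching primary/secondary, so reaching $diag(v)$ would force a hall-change at $v$ in $R(t-2)$, whence $diag(v) \notin R(t-2) \supseteq R(t)$) is also valid but longer, and it leans on a step you assert without justification: that matching trajectories imply matching primary directions. This is true---each primary update is determined by the direction of the next move, so identical move sequences yield identical primary sequences---but it deserves a sentence. The paper's metric argument sidesteps this entirely.

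Two minor points: your reference to ``the corner criterion of \cref{holelessdefinition}'' points to the definition of \emph{simply connected}, not of a corner; and the paper handles the edge case $v = s$ separately (all neighbors are walls by \cref{distancecorollary}, hence a corner), which you fold implicitly into your first observation.
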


\begin{proof}
The proof of the proposition is by induction. The base case for $t=1$ is trivially true.

Suppose that up to time $t-1$, the proposition holds. Note that this means the ``no fake halls'' assumption holds up to time $t$, so we can apply the lemmas and propositions above to the algorithm's configuration at the beginning of time $t$. 

We will show that the proposition statement also holds at time $t$. Let $A_i$ be an active robot whose location at the beginning of $t$ is $v$. First, consider the case where $v = s$. The algorithm only enables $A_i$ to settle at $s$ if it is surrounded by obstacles at all directions. Any obstacle adjacent to $A_i$ must be a wall of $R(t)$ (as any active robot must be at a distance at least $2$ from $A_i$, due to Corollary \ref{distancecorollary}). Hence, if $A_i$ settles at $s$, $s$ is necessarily a corner, as claimed.

We now assume $v \neq s$, separating the proof into two cases:

Case 1: Suppose $A_i$ becomes settled at the end of time step $t$. Then by the algorithm, at the beginning of $t$, $A_i$ detects obstacles in its primary and secondary directions. These must be walls of $R(t)$ due to Corollary \ref{distancecorollary}, so $v$ is either a corner or a hall of $R(t)$. If three neighbors of $v$ are occupied then $v$ is a corner. Otherwise, since $A_i$ settled, we further know that either $diag(v)$ is empty, or $prev(prev(A_i)) = diag(v)$. In the former case, $v$ is a corner of $R(t)$. In the latter case, we know from Lemma \ref{followtheleaderlemma} and from the fact that no collisions occur that the only obstacle detected at $diag(v)$ is $A_{i+1}$, which is an active robot, so $v$ is again a corner of $R(t)$. In either case a corner is detected and the agent is settled.

Case 2: Suppose $A_i$ changed directions at the end of time step $t$. Then it sees two adjacent obstacles, and an obstacle at $diag(v)$. As in case 1, we infer that $v$ is either a corner or a hall. If it is a corner, then $diag(v)$ is an active agent. By Corollary \ref{distancecorollary}, it is either $A_{i+1}$ or $A_{i-1}$. It cannot be $A_{i+1}$, as then $A_i$'s position two time steps ago would have been $diag(v)$, so it would become settled instead of changing directions. It cannot be $A_{i-1}$, as $diag(v)$ is at least as close to $s$ as $v$ is, and $A_{i-1}$ has arrived earlier than $A_i$, and has been taking a shortest path to its destination. Hence, $diag(v)$ cannot be an active agent, and $v$ must be a hall as claimed.  
\end{proof}

We have shown that the no fake halls assumption is justified at all times $t$, hence we can assume that the propositions introduced in this section hold unconditionally.

\begin{proposition}
\label{timeanalysis}
Let $V$ be the number of vertices of $R$. At the end of time step $2n$, every cell is occupied by a settled robot.
\end{proposition}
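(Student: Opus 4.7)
The plan is to match the universal lower bound $\mathcal{M} \geq 2n$ from Observation \ref{observation:lowerboundparameters} with an upper bound by tracking how the $n$ robots populate $R$ during the first $2n$ time steps. Concretely, I would show that (i) a fresh robot enters $s$ at the end of every odd time step up through $t = 2n - 1$, so that $R$ already contains $n$ robots at the end of $t = 2n - 1$, and (ii) during $t = 2n$, every still-active robot settles because its entire neighbourhood is occupied.

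First I would prove by induction on $i$ that $A_i$ enters $s$ at the end of time step $2i - 1$ for each $1 \leq i \leq n$. The key tools are Proposition \ref{shortestpathproposition} (each active robot advances along a shortest path without pausing or backtracking), Corollary \ref{distancecorollary} (no collisions, and two robots $A_i$, $A_j$ that are active at time $t$ are at distance at least $2|i-j|$), and Lemma \ref{followtheleaderlemma} (whenever $A_{i-1}$ is active, $A_i$'s next cell is $prev(A_{i-1})$, which is guaranteed vacant). Together these force $A_{i-1}$ to leave $s$ during $t = 2i - 2$, which empties $s$ at the beginning of $t = 2i - 1$, and the source-vertex rule of the model then produces $A_i$ at the end of $t = 2i - 1$. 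Second, at the end of $t = 2n - 1$ all $n$ robots reside in $R$; by Corollary \ref{distancecorollary}(b) they occupy pairwise distinct cells, and since $|R| = n$ this forces every cell to be occupied. Finally, at the beginning of $t = 2n$ every still-active robot has each of its four lattice neighbours occupied (by walls or other robots), so the first clause of \cref{alg:FCDFS} causes each such robot to settle during $t = 2n$.

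The main obstacle is the inductive step -- specifically the case where $A_{i-1}$ has already settled by the time $A_i$ reaches $s$, so that Lemma \ref{followtheleaderlemma} does not directly supply a vacant neighbour for $A_i$. I would handle this using the simply-connected invariant of Lemma \ref{removecornersimplyconnected}: each corner-settling preserves connectedness of $R(t)$, so whenever $i < n$ the set $R(t)$ still contains at least two cells, forcing $s$ to retain at least one neighbour in $R(t)$. By Corollary \ref{distancecorollary} no other active robot sits at that neighbour, so it is genuinely empty, and the clockwise-initialization clause of \cref{alg:FCDFS} then selects it as $A_i$'s primary direction, so $A_i$ moves rather than prematurely settling. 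This closes the induction, and the three steps above combine to yield $\mathcal{M} = 2n$.
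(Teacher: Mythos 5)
Your proof is correct and follows essentially the same route as the paper's: both hinge on the shortest-path/no-pause property to get one arrival at $s$ per two time steps, on \cref{removecornersimplyconnected} to keep $R(t)$ connected so robots at $s$ can keep moving away, and on the last robot settling one step after the region fills, giving makespan exactly $2n$. The only divergence is local and harmless: where the paper rules out premature settling at $s$ by noting that every $R(t)$ with more than one vertex is a rectilinear polygon with at least two corners, you argue directly that a connected $R(t)$ with at least two cells leaves $s$ an unoccupied neighbour (with \cref{distancecorollary} excluding active robots from that neighbour), an equally valid and arguably more explicit justification of the same step.
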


\begin{proof}
Propositions \ref{shortestpathproposition} and \ref{nofakehalls} imply that robots take a shortest path in $R$ to their destination. That means that as long as the destination of a robot is not $s$ itself, robots will step away from $s$ one unit of time after they arrive. Until then, this means that robots arrive at $s$ at rate one per two time steps (see \cref{fig:robot_arrival}).

\begin{figure}[!ht]
    \centering
    \includegraphics[width=.95\linewidth]{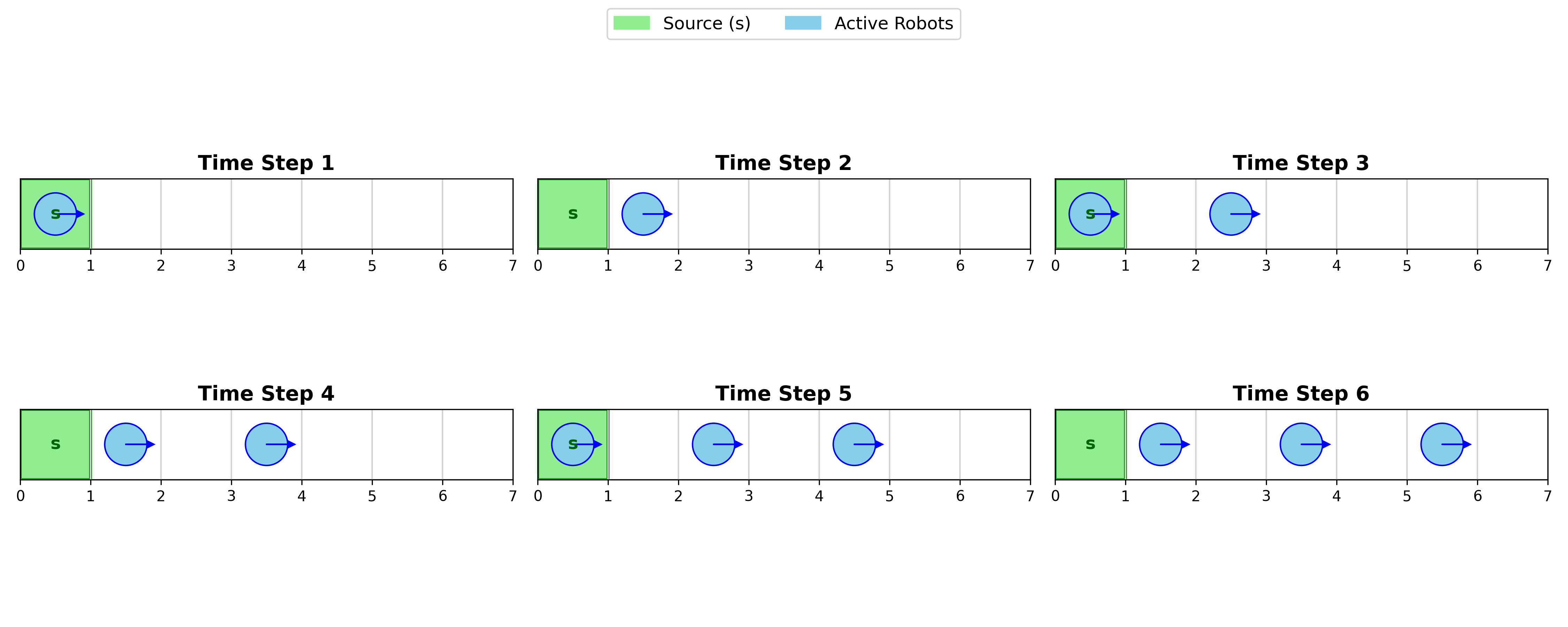}
    \caption{Illustration of robot arrival sequence at source $s$. At $t=1$, $A_1$ arrives at $s$. At $t=2$, $A_1$ moves away from $s$, freeing it up. At $t=3$, $A_2$ arrives at $s$ while $A_1$ continues forward. This pattern continues, resulting in robots arriving at $s$ at a rate of one per two time steps.}
    \label{fig:robot_arrival}
\end{figure}

Every robot's end-destination is a corner, and by the initialization phase of the algorithm, the destination is never $s$ unless $s$ is completely surrounded. Since there are no collisions, there can be at most $n$ robots in $R$ at any given time. By Lemma \ref{removecornersimplyconnected}, robots that stop at corners keep $R$ connected. Furthermore, every $R(t)$ is a rectilinear polygon, so unless it has exactly one vertex, it necessarily has at least two corners. This means that the destination of every robot is different from $s$ unless $s$ is the only unoccupied vertex. Hence, a robot whose destination is $s$ will only arrive when $s$ is the only unoccupied vertex, and this will happen when $n$ robots have arrived, that is, after at most $2n-1$ time steps. After another time step, this robot settles, giving us a makespan of $2n$. This is exact, since it is impossible to do better than $2n$ makespan.  
\end{proof}

Propositions \ref{timeanalysis} and \ref{shortestpathproposition}, alongside the ``no fake halls'' proof, complete our analysis. They show that FCDFS has an optimal makespan of $\mathcal{M} = \mathcal{M}^* = 2n$, and also that $E_{max} = E_{max}^* = 1 + \max_{v \in R}{\textrm{dist}(s,v)}$ and $E_{total} = E_{total}^* = n + \sum_{v \in R}{\textrm{dist}(s,v)}$, since every robot travels a shortest path to its destination without stopping. By the same reasoning, $T_{max} = T_{max}^* = \max_{v \in R}{\textrm{dist}(s,v)}$ and $T_{total} = T_{total}^* = \sum_{v \in R}{\textrm{dist}(s,v)}$.

In practice, the energy savings of FCDFS are dependent on the shape of the environment $R$. We take as a point of comparison the Depth-First Leader-Follower algorithm of Hsiang et al. \cite{hsiang}, described in \cref{prop:makespanminexists}. On a 1-dimensional path environment of length $n$, both FCDFS and DFLF require the same total travel, $\mathcal{O}(n^2)$, so no improvement is attained. In contrast, on an $n$-by-$n$ square grid, DFLF requires total travel $\mathcal{O}(n^4)$ in the worst case, and FCDFS requires $\mathcal{O}(n^3)$ - significantly less. This is because the DFLF strategy starting from a corner might cause the leader, $A_1$, to ``spiral'' inwards into the grid, covering every one of its $n^2$ vertices in $n^2 - 1$ moves. The robot $A_i$ will then make $n^2 - i$ moves, for a sum total of $O(n^4)$. FCDFS, on the other hand, distributes the path lengths more uniformly. More environments are compared in \cref{section:experiments}.  Note that both algorithms take the exact same amount of time to finish.

\textbf{Where is it best to place $s$?} If we want to minimize the total travel, by the formula given above, the best place to place $s$ is the vertex of $R$ that minimizes the sum of distances $\sum_{v \in R}{\textrm{dist}(s,v)}$ (there may be several). This is the discrete analogue of the so-called Fermat-Toricelli point, or the ``geometric median'' \cite{krarup1997torricelli}. 

\subsection{FCDFS Using 5 Bits of Memory}
\label{appendix:fcdfs5bit}

\begin{algorithm}[ht]
    \label{alg:5bitfcdfs}
  \caption{5-bit FCDFS}
  \begin{algorithmic}
    \State Let $v$ be the current location of $A$.
    
    \If{all neighboring vertices of $v$ are occupied}
        \State Settle.
    \ElsIf{$b_4b_5 = 00$}
        \State Search clockwise, starting from the "up" direction, for an unoccupied vertex, and set primary direction to point to that vertex.
        \State $b_4b_5 \gets 10$
    \EndIf
    
    \If{$A$ cannot move in primary or secondary directions}
        \If {$v$ has just one unoccupied neighbour}
            \State Settle.
        \ElsIf{$(b_5 = 1 \land b_3+b_4=1)$ $\lor$ $diag(v)$ is unoccupied} 
            \State Settle.
        \Else  
            \State Set primary direction to obstacle-less direction not equal to $180$-degree rotation of previous direction stepped in (i.e., the neighbour of $v$ we haven't visited yet; this can be inferred from $b_1b_2$ and $b_3$).  
            \State $b_4b_5 \gets 10$
        \EndIf
    \EndIf
    
    \If{$b_4b_5$ was not updated at this time step}\Comment{i.e., $b_5 = 1$ or time to update $b_5$}
        \State $b_4b_5 \gets b_3 1$
    \EndIf
    
    \If{$A$ can move in its primary direction}
        \State Move to the closest vertex in the primary direction.
        \State $b_3 \gets 0$
    \ElsIf{$A$ can move in secondary direction}
        \State Move to the closest vertex in the secondary direction.
        \State $b_3 \gets 1$
    \Else
        \State Settle.
    \EndIf
  \end{algorithmic}
  \label{alg:5bitFCDFS}
\end{algorithm}

As previously discussed, FCDFS can be implemented using 5 bit of memory: see \cref{alg:5bitFCDFS}. In this implementation, a robot's state is described by bits $b_1b_2b_3b_4b_5$. All bits are initially $0$. $b_1b_2$ describe the primary direction (one of four), and $b_3$ tells us whether the previous step was taken in the primary direction (if $b_3 = 0$) or in the secondary direction (if $b_3 = 1$). $b_4b_5$ is a counter that is reset to $10$ upon entering a hall or one step after initialization, and thereafter is equal to $*1$, where $*$ is a bit that tells us whether we walked in the primary or secondary direction two steps ago (by copying $b_3$). A robot that detects an obstacle at its diagonal interprets its position as a fake hall (i.e., a corner) as long as $b_5 = 1$ and $b_3 + b_4 = 1$, that is, as long as at least one step passed since the last hall, and our position two steps ago was diagonal to us.

\subsection{Alternate Optimal Strategies}

\begin{figure}[htb]
  \centering
  \begin{subfigure}[b]{0.48\columnwidth}
    \centering
    \includegraphics[width=0.8\linewidth]{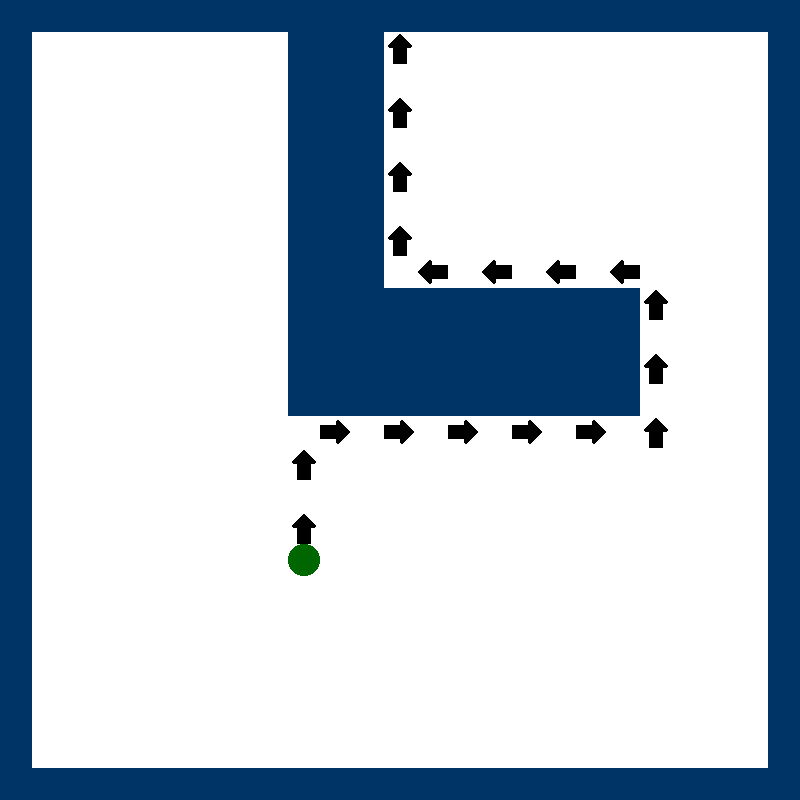}
    \caption{}
    \label{fig:alternatestrategies1a}
  \end{subfigure}\hfill%
  \begin{subfigure}[b]{0.48\columnwidth}
    \centering
    \includegraphics[width=0.8\linewidth]{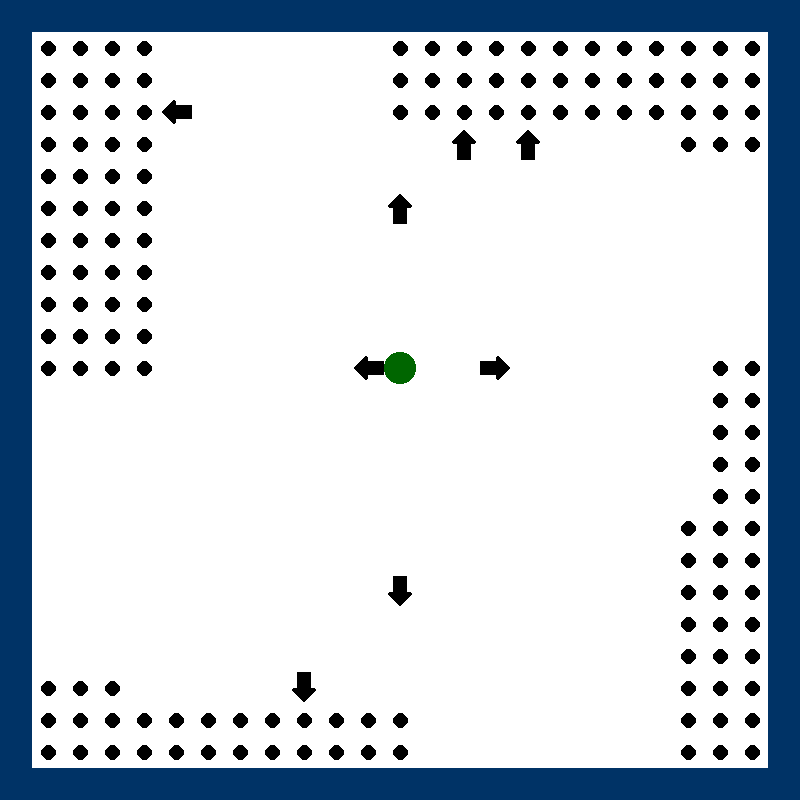}
    \caption{}
    \label{fig:alternatestrategies1b}
  \end{subfigure}
  \caption{Two alternate optimal dispersion strategies. \textbf{(a)} Robots navigate according to ``left-hand-on-wall'' until a corner is found. \revisionhighlight{\textbf{(b)} Robots stop early and are initialized with random orientations.}}
  \label{fig:alternatestrategies}
\end{figure}

The key idea of FCDFS to maintain a ``geometric invariant'': the environment must remain simply connected after a robot settles. As long as this invariant is maintained, the underlying details of the algorithm can vary. We depict here 
two possible variants of FCDFS that are similarly time- and energy-optimal  (analysis omitted). In \cref{fig:alternatestrategies1a}, rather than stick to their secondary and primary directions, robots move using a ``left hand on wall'' rule until they hit a corner or a wall. \revisionhighlight{In \cref{fig:alternatestrategies1a}, robots settle as soon as they reach a corner (whereas in standard FCDFS they keep moving if possible), and are initialized with random initial orientation. Unlike FCDFS, this variant does not require a compass/dead-reckoning.}

\section{Asynchronous FCDFS}
\label{section:asynchfcdfs}
In real-world multi-robot systems, each autonomous robot activates asynchronously, independent of other robots. One way to introduce asynchronicity into our system is to assume that at every time step, each robot has only a probability $p$ of waking up. When a robot wakes up, it performs the same Look-Compute-Move cycle as before. The source vertex $s$ wakes up at every time step with probability $p$, and upon wake-up inserts a robot into $R$, assuming no robot is currently located at $s$. Since we assume our robots communicate by light,  sound, or other local physical signals, we assume that the \textit{last} message broadcast by a robot $A_i$ continues to be broadcast in subsequent time steps, until $A_i$ broadcasts a new message. In other words, we assume that as long as $A_i$ is active, it  \textit{continuously} broadcasts its last message. This is akin to $A_i$ turning on a light or continuously emitting a sound.

These modifications, leaving everything else in our model the same (\cref{section:model}), result in an asynchronous model of swarms with local signalling. We wish to find a version of FCDFS that works in this asynchronous setting, and to study its performance. FCDFS as implemented in \cref{alg:FCDFS} relies crucially on the synchronous time scheme to identify halls. This is primarily because, assuming $0$ bits of communication bandwidth, robots in our model cannot tell the difference between obstacles (or settled robots) and active robots. In an asynchronous time scheme, the inability to recognize active robots can also cause robots to split away from the trail of robots that tends to forms under FCDFS (see \cref{fig:fcdfs_hall_example}). Both these issues can be fixed by enabling the robot executing FCDFS to detect active robots. This can be done by giving robots a single bit of communication bandwidth ($B = 1$) and having them broadcast `$1$' as long as they are active, indicating that they are active robots (in fact, this requires ``less'' than a bit of memory, since the robots never need to broadcast `$0$'). Note that, strictly speaking, such robots are no longer ant-like, but they still have very low capability requirements.

``AsynchFCDFS'' -- an asynchronous version of FCDFS incorporating the above idea -- is outlined in \cref{alg:asynchFCDFS}. It can be implemented as a $(2,1,5)$-algorithm, with the implementation similar to \cref{appendix:fcdfs5bit}). 

\begin{algorithm}[!htb]
  \caption{Asynchronous Find-Corner Depth-First Search (local rule for active robots)}
  \begin{algorithmic}
    \State Broadcast ``1''.\Comment{Inform other robots that you are active.}
    \State Let $v$ be the current location of $A$.
    \State Let $v_{primary}$ be the closest vertex in the primary direction of $A$.
    \State Let $v_{secondary}$ be the closest vertex in the secondary direction of $A$.
    \If{all neighbouring vertices of $v$ contain obstacles that are not robots broadcasting ``1''}
        \State Settle.
    \ElsIf{$A$ has never moved}\Comment{Initialization}
        \If{a neighbor of $v$ contains a robot broadcasting ``1''}
            \State Do nothing this time step.\Comment{Wait until active robots move away.}
        \Else
            \State Search clockwise, starting from the "up" direction, for an unoccupied vertex  and set primary direction to point to that vertex.
        \EndIf
    \EndIf
    \If{$v_{primary}$ contains a robot broadcasting ``1''}
        \State Do nothing this time step.
    \ElsIf{$v_{primary}$ is empty}
        \State Move there. 
    \ElsIf{$v_{secondary}$ contains a robot broadcasting ``1''}
        \State Do nothing this time step.
    \ElsIf{$v_{secondary}$ is empty}
        \State Move there. 
    \Else\Comment{We are at a corner or a hall.}
    \If{Three neighboring vertices of $v$ contain walls or settled robots}
        \State Settle.\Comment{Corner with three walls.}
    \ElsIf{$diag(v)$ is unoccupied or contains a robot broadcasting ``1''} 
        \State Settle.
    \Else\Comment{We think we are at a hall.}
        \State Set primary direction to point to the neighbour of $v$ different from $prev(A)$.
        \State Move to the closest vertex in the primary direction.
    \EndIf
    \EndIf
  \end{algorithmic}
  \label{alg:asynchFCDFS}
\end{algorithm}

AsynchFCDFS causes robots to \revisionhighlight{arrive at the same locations in the same order} they would under synchronous FCDFS, by requiring that robots wait in place if an active robot is occupying the direction they want to step in, until it leaves. \revisionhighlight{This means we retain FCDFS' no-collision guarantee (\cref{distancecorollary}) despite asynchronicity.}  It also tells us that \cref{alg:asynchFCDFS} is $T_{total}$ and $T_{max}$-optimal. Our main goal is thus to analyze the energy and makespan cost of waiting. Since our robots' activation times are now drawn from a random distribution, we will bound these costs probabilistically.

Studying the interactions of independently activating autonomous particles (in this case, modelling our robots) is generally considered a difficult and technical problem. Our key strategy is to side-step the difficulties of analyzing these interactions from scratch by relating a swarm of robots executing AsynchFCDFS to the ``Totally Asymmetric Simple Exclusion Process'' (TASEP) in statistical physics. Our high level strategy is to use \textit{coupling} to show that, in the worst case, AsynchFCDFS behaves similar to what is referred to as TASEP with step initial conditions, which is nowadays well-understood \cite{prahofer2002current,tracy2009asymptotics,chou2011biologicaltasep,chowdhury2000statistical,Johansson2000}. The main idea for this approach comes from \cite{amir_fast_2019}, in which the makespan of a different (``dual-layered'') uniform dispersion algorithm is studied using similar techniques. We generalize the techniques of \cite{amir_fast_2019} to study makespan and energy in our setting. 

Define the constant $\alpha = \frac{1}{2}(1 - \sqrt{1 - p})$. We show the following bound on makespan:

\begin{proposition}
The makespan of AsynchFCDFS over any region $R$ with $n$ vertices fulfills $\mathcal{M} \leq (\frac{1}{\alpha} + o(1))n$ asymptotically almost surely for $n \to \infty$ (i.e., with probability converging to $1$ as $n$ grows to infinity).
\label{prop:asynchfcdfsmakespan}
\end{proposition}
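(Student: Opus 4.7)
The plan is to couple AsynchFCDFS with a parallel-update Totally Asymmetric Simple Exclusion Process (TASEP) on $\mathbb{Z}$ with step initial conditions; the constant $\alpha = \tfrac{1}{2}(1-\sqrt{1-p})$ arises precisely as the maximum current of parallel TASEP at critical density $\rho = 1/2$. First, I would reduce the 2D geometry of AsynchFCDFS to a 1D ``trail'' structure: by the no-overtaking property carried over from synchronous FCDFS (cf.\ \cref{distancecorollary}, which only uses that consecutive robots cannot pass one another), it suffices to track, for each $i$, the position $X_i(t)$ of robot $A_i$ along its trail at time $t$. The dynamics of $X_i$ are simple: $X_i$ advances at time $t+1$ iff $A_i$ activates (probability $p$) and the next cell along the trail is not occupied by another active robot, which, by non-overtaking, must be $A_{i-1}$.

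Next, I would construct an explicit coupling with a parallel TASEP on $\mathbb{Z}$ with step initial conditions (particles at sites $0, -1, -2, \ldots$), where the $i$-th TASEP particle uses the same Bernoulli$(p)$ activation stream as $A_i$. The coupling is designed so that, modulo a bounded offset, the robot progress $X_i(t)$ stochastically dominates the particle progress $Y_i(t) - Y_i(0)$. The key observation is that the blocking rule in AsynchFCDFS is identical to that of parallel TASEP (a particle cannot advance into the cell of its predecessor), while the only extra delays come from the spawning mechanism at $s$, which can be quantified and absorbed into a lower-order term.

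Finally, I would invoke the hydrodynamic limit for parallel TASEP with step initial conditions (see e.g.\ \cite{Johansson2000,prahofer2002current,tracy2009asymptotics}): the number of particles that have crossed a fixed bond by time $t$ is $\alpha t(1+o(1))$ a.s., with fluctuations of order $t^{1/3}$. Applied via the coupling, this implies that by time $T = n/\alpha + o(n)$, all $n$ robots have entered $R$ and advanced past the congested region near $s$, a.a.s. Combined with the optimal-routing property of FCDFS (\cref{shortestpathproposition})—which bounds each robot's path length by the diameter of $R$, itself at most $n-1$—and a simple accounting showing that robots entering at times $\tau_i \ll n/\alpha$ have more than enough remaining horizon to reach their destinations, one concludes $\mathcal{M} \leq (1/\alpha + o(1))n$ a.a.s.

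The main obstacle is the rigorous treatment of the source boundary: AsynchFCDFS spawns robots one at a time, requiring $s$ to be empty and to wake up, whereas TASEP step IC has particles pre-loaded on the line with no spawning delay. Quantifying the discrepancy requires showing—via, for instance, a Chernoff bound on the cumulative source delay over $n$ arrivals—that the net extra delay is $o(n)$ a.a.s., so it does not disrupt the TASEP throughput calculation. A secondary subtlety is that the fluctuations in the integrated TASEP current (on the Tracy-Widom scale $t^{1/3}$) must be controlled tightly enough that the ``flux $= \alpha t$'' concentration is sharp to within $o(n)$; this follows from standard concentration bounds but must be done carefully to ensure the error terms truly vanish relative to $n/\alpha$.
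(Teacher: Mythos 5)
Your overall architecture matches the paper's: couple AsynchFCDFS to a parallel-update TASEP with step initial conditions via shared Bernoulli$(p)$ activation streams, identify $\alpha = \frac{1}{2}(1-\sqrt{1-p})$ as the limiting current, and convert the Tracy--Widom current asymptotics (\cref{equation:taseplimitingbehavior}) into the makespan bound. The paper, however, interposes the worst-case path region $P_n$ between $R$ and TASEP, proving two contrapositive domination lemmas by induction (\cref{lemma:depthbiggerinPnthanR,lemma:depthbiggerinPnthanPinf}): if $A_i^R$ is unsettled at time $t$, then $A_i^{P(n)}$ is unsettled and $depth_i^R(t) \geq depth_i^{P(n)}(t) \geq x_i(t)$. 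This factorization cleanly separates the two difficulties your direct $R$-to-TASEP coupling must face simultaneously: tree branching of the trails (handled in the $R$-versus-$P_n$ step) and the source mechanism (handled in the $P_n$-versus-TASEP step). The contrapositive phrasing also resolves a point your ``modulo a bounded offset'' domination glosses over: settled robots stop while TASEP particles keep moving, so an unconditional inequality between progress counters cannot hold for all time; it must be conditioned on non-settlement, as in the paper's lemmas.

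The genuine gap is your treatment of the source boundary. A Chernoff bound showing the ``cumulative source delay over $n$ arrivals is $o(n)$'' cannot work as stated: each arrival incurs a Geometric$(p)$ wait for the source to wake once $s$ empties, so any additive accounting of spawning delay against a pre-loaded particle system yields $\Theta(n/p)$, not $o(n)$, which would destroy the sharp constant $1/\alpha$. The correct resolution is that no additive error term is needed at all: in the coupled TASEP, particle $i$ must both clear the exclusion queue on the negative half-line and await its own clock before crossing the origin, and these waits dominate the entry waits of robot $A_i$ (whose queue outside $R$ is purely virtual). This matching of waits happens \emph{inside} the coupling rather than being estimated separately, and is exactly what the induction behind \cref{lemma:depthbiggerinPnthanPinf} delivers. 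Your endgame is similarly vaguer than necessary: rather than per-robot ``remaining horizon'' accounting---which is delicate, since the last robots enter near time $n/\alpha$ with little horizon left---the paper uses order preservation for a pigeonhole (\cref{corollary:makespanofasynchfcdfs}): once $n+1$ particles have non-negative coordinates, the $n-i+1$ particles behind particle $i$ force $x_i(t) \geq n-i+1$, exceeding the maximum possible depth $n-i$ in $P(n)$, so every agent must already be settled; the time change $t_n = (1/\alpha + n^{-1/3})n$ together with \cref{equation:taseplimitingbehavior} then finishes the proof.
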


Setting $p = 1$ gives $\mathcal{M} = (2+o(1))n$,  asymptotically matching the synchronous case. Regarding energy consumption, we are able to show:

\begin{proposition}
In an execution of AsynchFCDFS over any region $R$ with $n$ vertices, $E_{max} \leq 2(\frac{1}{\alpha} + o(1))\max_{v \in R}{\textrm{dist}(s,v)}$ asymptotically almost surely for $n \to \infty$.
\label{prop:asynchfcdfsmaxenergy}
\end{proposition}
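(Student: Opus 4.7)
The plan is to bound the individual energy $E_i$ of each robot $A_i$ (with destination at distance $d_i := \textrm{dist}(s,v_i) \leq D$, where $D = \max_{v \in R} \textrm{dist}(s,v)$) by coupling the chain of robots with TASEP with step initial conditions, reusing the coupling already set up to bound the makespan in \cref{prop:asynchfcdfsmakespan}. As a preliminary step, I would observe that AsynchFCDFS visits the same cells in the same order as synchronous FCDFS (as noted immediately after \cref{alg:asynchFCDFS}), so the reasoning of \cref{shortestpathproposition} carries over and each $A_i$ follows a shortest path of length $d_i$ to its settling location. Consequently, $A_i$ makes exactly $d_i$ movements during its active lifetime and is otherwise \emph{waiting} (either for its own wake-up or for the cell ahead to be vacated); the energy decomposes as $E_i = d_i + W_i$, where $W_i$ is the total waiting time.

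Next I would decompose $W_i$ into two contributions matching the bottlenecks of the TASEP coupling. Phase (a): the time between $A_i$'s arrival at $s$ and the moment the cell immediately ahead of $A_i$ on its path becomes vacant (capturing the initial queueing behind $A_{i-1}$). Phase (b): the time $A_i$ spends traversing the $d_i$ cells of its path once it is free to advance (capturing the en-route exclusion delays). The TASEP coupling identifies each of these with a standard TASEP observable --- in phase (a), with the time for a hole to reach a fixed site in a step configuration; in phase (b), with the first-passage time of a tagged particle advancing by $d_i$ sites. Step-IC TASEP asymptotics \cite{Johansson2000,prahofer2002current,tracy2009asymptotics} give leading-order rate $\alpha$ for both observables, so each phase is bounded by $(1/\alpha + o(1))d_i$ asymptotically almost surely, yielding $E_i \leq 2(1/\alpha + o(1))d_i \leq 2(1/\alpha + o(1))D$. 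A union bound over the $n$ robots then preserves the asymptotic-almost-sure guarantee, since the fluctuation tails of TASEP first-passage times decay faster than any polynomial.

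The main obstacle will be making the TASEP domination precise in the 2D setting of FCDFS: robots branch off their shared trajectory at halls and settle at corners, changing the ``chain'' structure as the swarm grows. The key observation that rescues the coupling is that, along any single robot's own path, the blockage it experiences is always caused by a prefix of the chain whose motion is itself no faster (stochastically) than the corresponding TASEP chain, so the waiting time of $A_i$ is stochastically dominated by the corresponding TASEP quantity, and once a predecessor branches off it only helps. A secondary technical point is ensuring the $o(1)$ error term is uniform in $i$ up to $n$, which requires quantitative tail bounds on step-IC TASEP first-passage times strong enough to absorb the union bound over all $n$ robots --- a property established in the cited TASEP literature.
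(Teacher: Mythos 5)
Your proposal has a genuine gap, and it sits exactly where the paper's key idea lives. Your two-phase decomposition attributes rate $\alpha$ to the wrong TASEP observables: $\alpha = \frac{1}{2}(1-\sqrt{1-p})$ is the \emph{current at the origin} in step-IC TASEP (the growth rate of $F(t)$, the number of particles attaining non-negative coordinates), not the speed of a tagged particle or of a hole. In the rarefaction fan of step-IC TASEP the velocity of a tagged particle depends on its index relative to time, so ``first-passage time of a tagged particle advancing $d_i$ sites is $(1/\alpha+o(1))d_i$'' is not a fact you can cite from the step-IC literature. Your phase (a) has a second problem: the queueing delay at $s$ is governed by the length of the active chain ahead of the entering robot, and nothing in your argument controls that length, let alone shows it is $O(d_i)$ --- a robot with a short path could a priori sit behind a long jam. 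The paper closes exactly this hole with a deterministic counting lemma that your proposal is missing: at any time there are at most $d=\max_{v\in R}\textrm{dist}(s,v)$ active robots, because every active robot lies along the path traced by the lowest-index active robot, which is a shortest path from $s$ and so has at most $d$ vertices. Hence each robot $A_j$ is settled as soon as $d$ further robots have emerged from $s$, and bounding $E_{max}$ reduces to bounding the time for $2d$ emergences --- which, by the same coupling as \cref{lemma:depthbiggerinPnthanR}, is dominated by the time for $2d$ TASEP particles to attain non-negative coordinates. That is precisely the current observable already controlled in the proof of \cref{prop:asynchfcdfsmakespan}, and it is where both the factor $2$ and the bound $2(\frac{1}{\alpha}+o(1))d$ come from; no tagged-particle or hole analysis is needed.

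A second, independent defect is the per-robot scale of your bound: proving $E_i \leq 2(\frac{1}{\alpha}+o(1))d_i$ asymptotically almost surely cannot work for robots with bounded $d_i$, since \cref{equation:taseplimitingbehavior} is a limit statement and gives nothing at fixed scale. Your closing remark about tails decaying faster than any polynomial addresses the union bound over $n$ robots but not this small-$d_i$ regime. The paper sidesteps the issue by bounding \emph{every} robot uniformly at scale $d$ and noting that $d \to \infty$ as $n \to \infty$. Your branching heuristic (``once a predecessor branches off it only helps'') is in the right spirit --- it is essentially the monotonicity behind \cref{lemma:depthbiggerinPnthanR} --- but as deployed it would support the emergence-count comparison the paper actually uses, not the tagged-particle and hole-passage bounds you build on top of it.
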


We shall also pose the following conjecture, based on a heuristic argument and  numerical simulations:

\begin{conjecture}

In an execution of AsynchFCDFS over any region $R$ with $n$ vertices, $E_{total} \leq (\frac{c}{\alpha}+o(1))\sum_{i=1}^{n}{\textrm{dist}(s,v_i)}$ asymptotically almost surely for $n \to \infty$ and some constant $c > 0$.


\label{conjecture:asynchfcdfsenergy}
\end{conjecture}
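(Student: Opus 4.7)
The plan is to strengthen the TASEP coupling used in the proofs of \cref{prop:asynchfcdfsmakespan} and \cref{prop:asynchfcdfsmaxenergy} from a uniform per-robot bound to a per-robot bound that is linear in $\text{dist}(s,v_i)$ with high probability, and then sum. First, for each robot $A_i$, I would couple the projection of its trajectory onto its intended shortest path in $R$ to the $i$-th particle of a rate-$p$ TASEP with step initial conditions; this is the same coupling structure already used to bound the makespan and maximum energy. By the same argument that yielded \cref{prop:asynchfcdfsmaxenergy}, this coupling shows that $E_i$ is stochastically dominated by $\tau_i$, the time for that TASEP particle (inserted at time $t_i^{\text{start}}$) to travel a distance equal to $\text{dist}(s,v_i)$.

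Next, I would sharpen the per-robot bound using the hydrodynamic limit of TASEP. In the fluid limit, the long-time current through any fixed site behind the particle front equals $\alpha$, so the mean velocity of a bulk particle is $\alpha$. Consequently, $\tau_i = \text{dist}(s,v_i)/\alpha + o(\text{dist}(s,v_i))$ asymptotically almost surely, yielding a per-robot bound $E_i \leq (c/\alpha + o(1))\,\text{dist}(s,v_i)$ for some constant $c$. Using KPZ-class fluctuation results \cite{prahofer2002current,tracy2009asymptotics,Johansson2000}, the deviation of the $i$-th particle from its hydrodynamic trajectory is $O(\tau_i^{1/3})$, which is lower order. A union bound over the $n$ robots, made feasible because these fluctuations are polynomially controlled, then yields the per-robot bound simultaneously for all $i$ a.a.s. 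Summing over $i$ gives the claimed $E_{total} \leq (c/\alpha + o(1))\sum_i \text{dist}(s,v_i)$.

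The main obstacle is establishing the per-robot bound uniformly when $\text{dist}(s,v_i)$ is small. For robots whose settling locations are at distance $O(1)$ from $s$, the hydrodynamic limit is not valid, and congestion near the source can inflate $E_i$ by a multiplicative constant that is not negligible relative to $\text{dist}(s,v_i)$. Absorbing these robots' contributions into the constant $c$ requires a separate argument bounding the ``boundary layer'' near the source, e.g.\ by showing the number of such robots, times a worst-case $E_i$ for them, is $o\!\left(\sum_i \text{dist}(s,v_i)\right)$. A secondary subtlety is that the couplings for different robots run along different geometric projections of $R$; realizing them simultaneously on a single probability space (so that the union bound is valid) requires a careful construction that respects the order of insertions at $s$ and the interaction of trails that share prefixes. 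These technicalities are presumably why the authors state the result only as a conjecture, backed by a heuristic argument and numerical evidence, rather than as a theorem.
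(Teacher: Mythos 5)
First, a point of order: the paper does not prove this statement at all --- it is stated explicitly as a conjecture, supported only by a heuristic (each robot travels its FCDFS-optimal path and appears to progress at linear rate) and by simulations, with the authors naming the obstruction: \cref{equation:taseplimitingbehavior} is a limit theorem, which makes it ``difficult to estimate the energy costs of agents that move short or intermediate distances.'' Your proposal is therefore not an alternative to a paper proof but an attempt to settle an open problem, and in its present form it does not succeed: the part you defer (the ``boundary layer'') is precisely the part the authors could not handle, and you have only gestured at it for robots at distance $O(1)$ from $s$, whereas the problem afflicts every robot with $\textrm{dist}(s,v_i) = o(i)$ --- which can be the vast majority of the swarm (in a compact blob-like region, typical distances are $\Theta(\sqrt{n})$ while indices run up to $n$).

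Beyond incompleteness, your middle step contains a concrete error. You assert that since the hydrodynamic current through a fixed site is $\alpha$, ``the mean velocity of a bulk particle is $\alpha$,'' hence $\tau_i = \textrm{dist}(s,v_i)/\alpha + o(\textrm{dist}(s,v_i))$. This conflates current with tagged-particle velocity. In TASEP with step initial conditions, $\alpha$ is the flux through the origin; a tagged particle's speed in the rarefaction fan depends on its index relative to elapsed time, and the time for particle $i$ to advance $k$ steps scales like a last-passage time, of order $(\sqrt{i}+\sqrt{k})^2$ up to model constants --- not $k/\alpha$ --- whenever $i \gg k$. Consequently the per-robot bound $E_i \leq (c/\alpha + o(1))\,\textrm{dist}(s,v_i)$ is \emph{false} under the coupling you invoke: the paper's coupling (\cref{lemma:depthbiggerinPnthanR}) dominates the region $R$ by the worst-case straight path $P(n)$, and on that dominating process a late-entering robot with a short path incurs energy of order $\sqrt{i\,d_i}$ or worse, which summed over $i$ gives $\Theta(n^2)$ against a target $\sum_i \textrm{dist}(s,v_i)$ that can be as small as $\Theta(n^{3/2})$. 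So the domination-by-$P(n)$ route provably cannot yield the conjectured bound for compact regions; any proof must exploit that in such regions the trail branches and thins, so late robots with short paths settle faster than the $P(n)$ comparison suggests. A secondary inaccuracy: \cref{prop:asynchfcdfsmaxenergy} is not proved by a per-robot coupling along each robot's own shortest path, as you suggest, but by bounding the number of simultaneously active robots by $d = \max_v \textrm{dist}(s,v)$ and timing the emergence of $2d$ fresh agents; the per-path coupling you propose (with trails sharing prefixes, realized on one probability space) does not exist in the paper and is itself a nontrivial construction, as you rightly note at the end. To your credit, you identified the correct obstruction --- it is the same one the authors name --- but your sketch neither circumvents it nor repairs the velocity step that your summation depends on.
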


Since $p/\alpha \leq 4$ and $\mathcal{M}$ is bounded below by $n/(p-o(1))$ asymptotically almost surely (as robots are inserted at rate $p$), \cref{prop:asynchfcdfsmakespan} indicates that AsynchFCDFS's makespan is asymptotically  within a constant factor of optimal performance. Similarly, $E_{max} \geq \frac{1}{p-o(1)}\max_{v \in R}{\textrm{dist}(s,v)}$ a.a.s. (since robots move at rate $p$), so   \cref{prop:asynchfcdfsmaxenergy} implies AsynchFCDFS is asymptotically within a constant factor of optimal maximal individual energy use. \cref{conjecture:asynchfcdfsenergy} hypothesizes the same is true regarding total energy use. Hence, we believe AsynchFCDFS to be highly energy- and makespan-efficient in asynchronous settings, just as FCDFS is in synchronous settings.

\subsection{Analysis}

The execution of (synchronous) FCDFS over a region of interest $R$ is deterministic, and so we know in advance the path each robot will take. Let us define a graph, $G(R)$, whose nodes are the locations of $R$ and where there is an edge between two locations $v_i, v_j$ if at some point a robot moves from $v_i$ to $v_j$ during an execution of FCDFS. Recalling our analysis of FCDFS, it is not difficult to prove that $G(R)$ is a tree. Although robot wake-up times are random in our asynchronous model, the ultimate path each robot follows according to AsynchFCDFS is deterministic--the same path it would follow under FCDFS. Hence, in AsynchFCDFS, too, each robot moves down the edges of the tree $G(R)$.

To study the makespan of AsynchFCDFS, we use  coupling, drawing on a technique from  \cite{amir_fast_2019}. Suppose that there is an infinite collection of agents $A_1, A_2, \ldots A_\infty$, such that each agent $A_i$ wakes up with probability $p$ at every time step $t = 0, 1, \ldots$. We can associate with every agent $A_i$ a list of the times it wakes up, called $\mathcal{S}_i$. Suppose further that we execute FCDFS over different regions, say $R$ and $R'$. Let $A_i^R$ and $A_i^{R'}$ be the $i$th agents to enter $R$ and $R'$ respectively. We \textit{couple} $A_i^R$ and $A_i^{R'}$ by assuming that both $A_i^R$ and $A_i^{R'}$'s wake-up times are determined by $\mathcal{S}_i$. For formal purposes, when $A_i^R$ or $A_i^{R'}$ are not yet inside $R$, we still consider them to ``wake up'' at time steps $t \in \mathcal{S}_i$. This is a purely ``virtual'' wake-up: such robots do and affect nothing upon wake-up. The source vertices of $R$ and $R'$ can similarly be coupled.

Let $P_n$ be a ``straight path'' region consisting of the vertices $\{p_i \mid 0 \leq i < n\}$ where $p_i = (i, 0)$ and the source, $s$, is located at $p_0$. Furthermore, consider the stochastic process called TASEP with step initial conditions \cite{Johansson2000}, which consists of an infinite path region $P(\infty) = \{p_i \mid i \in (-\infty, \infty)\}$, where there is no source $s$ but instead the agent $A_i$ is initially located at $p_{-i}$. Finally, let $R$ be any region of interest consisting of $n$ vertices. To study the asymptotic performance of FCDFS over $R$, we shall couple the agents over TASEP, $P_n$ and $R$ (see \cref{fig:tasepcoupling}). Let us denote by $A_i^{\mathrm{TASEP}}$, $A_i^{P(n)}$, $A_i^R$,  the $i$th agent in each of these coupled processes, respectively.

\begin{figure}[ht]
    \centering
    \includegraphics[width=1\linewidth]{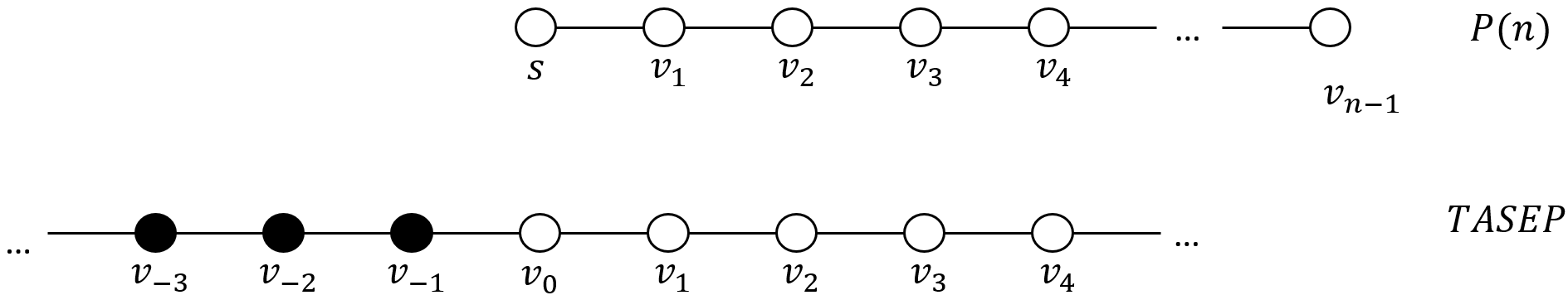}
    \caption{Illustration of $P(n)$ and TASEP with step initial conditions. Black vertices contain agents. Note that TASEP is infinite and lacks a source vertex.}
    \label{fig:tasepcoupling}
\end{figure}

We require a notion of the distance an agent has travelled on $R$ and $P(n)$:

\begin{definition}
    The \textbf{depth} at time $t$ of an agent $A_i^R$ executing FCDFS over a region $R$, denoted $depth_i^R(t)$, is the number of times $A_i$ has moved before time $t$. $depth_i^R(t) = -1$ if $A_i$ has not entered $R$ by time $t$, and $0$ at time of entry. We similarly define $depth_i^{P(n)}(t)$.
\end{definition}

Note that any robot that increases its depth is moving down the tree $G(R)$ from its current location $v \in G(R)$ to one of $v$'s descendants. Let us prove the following statement:

\begin{lemma}
    \label{lemma:depthbiggerinPnthanR}
    If the agent $A_i^R$ is not settled at time $t$ then:

    \begin{enumerate}
        \item $A_i^{P(n)}$ is not settled at time $t$, and
        \item $depth_i^R(t) \geq depth_i^{P(n)}(t)$.
    \end{enumerate}
\end{lemma}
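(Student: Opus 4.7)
The plan is to prove both assertions simultaneously by induction on the time step $t$, with the inductive hypothesis (IH) ranging uniformly over all agents $A_i$. The base case $t = 0$ is immediate: before any source wake-up, neither $A_i^R$ nor $A_i^{P(n)}$ has entered, so both assertions hold vacuously. For the inductive step, fix $i$ and suppose $A_i^R$ is not settled at $t+1$; because the settled state is absorbing, $A_i^R$ is also not settled at $t$, and the IH gives $A_i^{P(n)}$ not settled at $t$ together with $depth_i^R(t) \geq depth_i^{P(n)}(t)$. If $t+1 \notin \mathcal{S}_i$, neither copy of $A_i$ acts and both claims are preserved trivially, so I henceforth assume $A_i$ wakes at $t+1$.

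The key sub-claim is that whenever $A_i^{P(n)}$ moves at $t+1$, either $depth_i^R(t) > depth_i^{P(n)}(t)$ strictly---so the inequality is preserved automatically---or $A_i^R$ moves as well. Indeed, $A_i^{P(n)}$ being unblocked on $P(n)$ requires $depth_{i-1}^{P(n)}(t) \geq depth_i^{P(n)}(t) + 2$, since $A_{i-1}^{P(n)}$ is the only possible blocker on the straight path. The IH applied to index $i-1$ yields $depth_{i-1}^R(t) \geq depth_{i-1}^{P(n)}(t)$; in the tie case $depth_i^R(t) = depth_i^{P(n)}(t)$, this places $A_{i-1}^R$ at least two edges ahead of $A_i^R$ along the shared prefix of their tree paths in $G(R)$---or else $A_{i-1}^R$ has already branched off, which also leaves $A_i^R$'s path clear. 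In either scenario the next vertex on $A_i^R$'s path is free: $A_{i-1}^R$ does not occupy it, no other active robot can occupy it by the no-collision/ordering property that AsynchFCDFS inherits from Corollary~4.6, and a settled occupant would force $A_i^R$ itself to settle, contradicting the hypothesis. Hence $A_i^R$ moves, preserving the depth inequality.

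For the ``not settled'' half I argue contrapositively: if $A_i^{P(n)}$ settles at $t+1$, then by the linear structure of $P(n)$ it can only settle at the terminal slot $n - i$, and only once $A_{i-1}^{P(n)}$ is already settled at $n - i + 1$. Applying the IH contrapositively to $A_{i-1}$ gives $A_{i-1}^R$ settled at $t$. Combining $depth_i^R(t) \geq depth_i^{P(n)}(t) = n - i$ with the auxiliary structural fact that the settling depth of $A_i^R$ satisfies $d_i \leq n - i$ in any simply connected region forces $depth_i^R(t) = d_i$, so $A_i^R$ has already reached $v_i^\ast$ with its predecessor settled nearby; the AsynchFCDFS settling rule then forces $A_i^R$ to settle on the same wake-up---a contradiction.

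The main obstacle is establishing the structural bound $d_i \leq n - i$. I would isolate it as a short auxiliary lemma proved by tracking the DFS-style exploration order of FCDFS: each time $A_j^R$ settles, the remaining $n - j$ unsettled vertices form a simply connected subregion that must accommodate $A_{j+1}^R, \ldots, A_n^R$ on branches of $G(R)$ no deeper than the branch containing $v_j^\ast$. With this bound in hand, the residual case work---$A_i^{P(n)}$ staying in place, or $A_i^R$ having already overtaken it---is routine, and the two invariants can be closed out simultaneously.
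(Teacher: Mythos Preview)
Your induction scheme matches the paper's, but the step establishing the depth inequality has a genuine gap. You argue that in the tie case $A_{i-1}^R$ cannot occupy the next vertex on $A_i^R$'s tree path, and then dismiss every other active robot by appealing to ``the no-collision/ordering property that AsynchFCDFS inherits from Corollary~4.6.'' That corollary, however, is a \emph{synchronous} statement: it guarantees that $A_j$ and $A_i$ are at distance at least $2|j-i|$ in FCDFS. AsynchFCDFS does not inherit this spacing---robots can and do pile up at adjacent vertices, which is precisely why the algorithm has a ``wait if an active robot is ahead'' rule. In the tree $G(R)$, $A_{i-1}^R$ may have branched off onto a different subtree while some earlier $A_j^R$ with $j<i-1$ still sits on $A_i^R$'s path one step ahead; your argument does not rule this out.

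The paper closes this gap by reasoning about the actual blocker rather than about $A_{i-1}$. It lets $A_j^R$ (for some $j<i$) be whichever robot occupies the next tree-vertex, observes that in AsynchFCDFS such a blocker is necessarily \emph{active} (a settled obstacle would cause $A_i^R$ to settle or redirect rather than wait), and then uses monotonicity of depths on the straight path---$depth_j^{P(n)}(t) \geq depth_{i-1}^{P(n)}(t) > depth_i^{P(n)}(t)+1$---together with the inductive hypothesis for that same index $j$ to derive the contradiction $depth_j^R(t)>depth_i^R(t)+1$. This also sidesteps a second issue in your write-up: you invoke the IH at index $i-1$ to obtain $depth_{i-1}^R(t)\geq depth_{i-1}^{P(n)}(t)$, but the IH is conditional on $A_{i-1}^R$ being unsettled, which you never verify. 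Applying the IH to the active blocker $A_j^R$ gives that condition for free. Your treatment of part~(1) via the bound $d_i\leq n-i$ is essentially the paper's argument in different clothing.
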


\begin{proof}
We prove this lemma by induction over $t$. At $t = 0$, statements (1) and (2) are vacuously true. Now suppose we are at time $t+1$, and statements (1) and (2) hold at time $t$. Suppose for contradiction that statements (1) or (2) don't hold at time $t+1$. This means there exists $i$ such that $A_i^R$ is not settled at time $t+1$ and either (a) $A_i^{P(n)}$ is settled at time $t+1$, or (b) $depth_i^R(t+1) < depth_i^{P(n)}(t+1)$. 

Assume first, for contradiction, that (a) is true and (b) is not true. By the structure of $P(n)$, $A_i^{P(n)}$ being settled means that $depth_i^{P(n)}(t) = n-i-1$ and that (if $i > 1$) all of $v_{n-i}, \ldots v_{n-1}$ contain the settled robots $A_{1}^{P(n)}, \ldots A_{i-1}^{P(n)}$ at time $t$. By (1) and (2), this means that $A_{1}^{R}, \ldots A_{i-1}^{R}$ are settled at time $t$. Since $A_i^R$ is not settled at time $t+1$, this must mean that either it moved and increased its depth since time $t$, or it was adjacent an active agent in one of its primary directions at time $t$. The latter is impossible, since (by the FCDFS algorithm) such an agent must have been one of the agents $A_{1}^{R}, \ldots A_{i-1}^{R}$. Hence, $A_i^R$ must have moved at time $t$. By our assumptions, this implies  $depth_i^{R}(t+1) \geq depth_i^{P(n)}(t) + 1 = n-i$. But this is impossible, since there are at least $i-1$ settled agents in $R$ at time $t+1$, hence the maximum depth an active agent can have is $n-i-1$. Contradiction.

Next, let us assume for contradiction that (b) is true. By the inductive assumption, this implies that $depth_i^{P(n)}(t) = depth_i^{R}(t)$ and $depth_i^{P(n)}(t+1) = depth_i^{R}(t+1) + 1$, which can only occur if $A_i^{P(n)}$ moves at time $t$ and $A_i^R$ does not. $A_i^R$ not moving at time $t$ implies (due to the fact it is moving along the edges of the tree $G(R)$) that there is some $j < i$ such that $depth_j^R(t) = depth_i^R(t) + 1$. Since $A_i^{P(n)}$ does move at time $t$, by the structure of $P(n)$ we infer that $depth_j^{P(n)}(t) > depth_i^{P(n)}(t) + 1$. However, again by (2), at time $t$ we have $depth_j^R(t) \geq depth_j^{P(n)}(t)$, which implies $depth_j^R(t) > depth_i^{P(n)}(t) + 1 = depth_i^{R}(t) + 1$. Contradiction. 
\end{proof}

Let $x_i(t)$ be the $x$ coordinate of $A_i^{\mathrm{TASEP}}$'s location at time $t$ (where the $x$ coordinate of $v_i \in P(\infty)$ is $i$). It is straightforward, using coupling and induction as in \cref{lemma:depthbiggerinPnthanR}, to show that:

\begin{lemma}
    \label{lemma:depthbiggerinPnthanPinf}
    If the agent $A_i^{P(n)}$ is not settled at time $t$ then $depth_i^{P(n)}(t) \geq x_i(t)$.
\end{lemma}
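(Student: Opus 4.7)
The plan is to mimic the induction-and-coupling argument used for Lemma~\ref{lemma:depthbiggerinPnthanR}, proceeding by induction on $t$. At $t = 0$ the bound is immediate: no agent has entered $P(n)$, so $depth_i^{P(n)}(0) = -1$, while $A_i^{\mathrm{TASEP}}$ sits at $p_{-i}$, giving $x_i(0) = -i \leq -1$ for every $i \geq 1$. For the inductive step, I would assume the inequality at time $t$ for every unsettled agent and consider an agent $A_i^{P(n)}$ that is not settled at time $t+1$. Supposing for contradiction that $depth_i^{P(n)}(t+1) < x_i(t+1)$ and noting that each quantity changes by at most $1$ per step, the inductive hypothesis forces $x_i(t+1) = x_i(t) + 1$ and $depth_i^{P(n)}(t+1) = depth_i^{P(n)}(t) = x_i(t)$; that is, $A_i^{\mathrm{TASEP}}$ jumps rightward at $t+1$ while $A_i^{P(n)}$ stays put, even though they share the wake-up sequence $\mathcal{S}_i$.

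The jump of $A_i^{\mathrm{TASEP}}$ requires $t+1 \in \mathcal{S}_i$ and $x_{i-1}(t) \geq x_i(t) + 2$, so $A_i^{P(n)}$ is also awake but fails to move. This can happen only if it is blocked by $A_{i-1}^{P(n)}$ in the cell at depth $depth_i^{P(n)}(t) + 1$ (on a line the agents preserve their order, and any such blocker must itself be active, for a settled predecessor would have already forced $A_i^{P(n)}$ to settle), or if $A_i^{P(n)}$ has not yet entered $P(n)$. In the blocked case the inductive hypothesis applied to index $i-1$ gives $depth_i^{P(n)}(t) + 1 = depth_{i-1}^{P(n)}(t) \geq x_{i-1}(t) \geq x_i(t) + 2$, whence $depth_i^{P(n)}(t) \geq x_i(t) + 1 = x_i(t+1)$, contradicting our assumption.

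The main obstacle is the not-yet-entered case, where one must argue $x_i(t+1) \leq -1$ whenever $A_i^{P(n)}$ has not entered $P(n)$ by time $t+1$. The cleanest route is to couple the source's wake-ups with $\mathcal{S}_1$ (and, inductively, to tie the $i$-th birth event at $s$ to the $i$-th crossing of the origin in TASEP), so that $A_i^{P(n)}$ enters $P(n)$ no later than the first time $A_i^{\mathrm{TASEP}}$ reaches position $0$ in the coupled realization. With that coupling fixed, if $A_i^{P(n)}$ is still outside $P(n)$ at $t+1$ then $x_i(t+1) \leq -1 \leq depth_i^{P(n)}(t+1)$, again a contradiction, closing the induction.
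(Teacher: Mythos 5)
Your induction is, for the in-region cases, correct and is essentially the argument the paper intends: the paper gives no explicit proof of \cref{lemma:depthbiggerinPnthanPinf}, saying only that it follows ``using coupling and induction as in \cref{lemma:depthbiggerinPnthanR},'' and your contradiction step---forcing $x_i(t+1)=x_i(t)+1$ and $depth_i^{P(n)}(t+1)=depth_i^{P(n)}(t)=x_i(t)$, then playing the TASEP jump condition $x_{i-1}(t)\geq x_i(t)+2$ against the inductive bound for the active blocker $A_{i-1}^{P(n)}$ at depth $depth_i^{P(n)}(t)+1$---is exactly the right skeleton, including the observation that a settled blocker would cause $A_i^{P(n)}$ to settle at that very wake-up. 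You are also more careful than the paper in singling out the not-yet-entered case as the crux. Indeed, under the coupling as the paper literally states it, where the source of $P(n)$ has its own wake-up list independent of the $\mathcal{S}_i$, the pathwise inequality can genuinely fail there: take $\mathcal{S}_1=\{1,2,3\}$ and let the source first wake at $t=5$; then $x_1(3)=2$ while $depth_1^{P(n)}(3)=-1$ and $A_1^{P(n)}$ is unsettled. So some re-coupling of the source is truly needed, and spotting this is to your credit.

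The gap is that your fix is asserted rather than constructed, and the specific coupling you name does not work. Coupling the source's wake-ups with $\mathcal{S}_1$ handles only $i=1$: for $i\geq 2$ the insertion of $A_i^{P(n)}$ is then driven by $\mathcal{S}_1$ while the origin-crossing of $A_i^{\mathrm{TASEP}}$ is driven by $\mathcal{S}_i$ plus exclusion. For instance, with $\mathcal{S}_1=\{1,3\}$ and $\mathcal{S}_2=\{2,4\}$, and the paper's insertion rule (a robot emerges only if $s$ is free at the beginning of the step), $A_2^{\mathrm{TASEP}}$ reaches $x=0$ at $t=4$, yet the source's wake at $t=3$ is wasted on an occupied $s$ and $A_2^{P(n)}$ never enters. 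Likewise, ``tying the $i$-th birth at $s$ to the $i$-th crossing of the origin'' is the conclusion you want, not a coupling you may posit: a coupling is a joint law whose marginals must be correct, and in particular the source must still wake i.i.d.\ with probability $p$ each step. A construction that does work: declare that the source wakes at time $t$ exactly when the TASEP particle currently occupying site $-1$ wakes at $t$ (drawing a fresh $p$-coin at times when site $-1$ is vacant); since occupancy of site $-1$ at time $t$ is determined by the past, the source's marginal law is preserved. One then proves, jointly with your main induction, that the number of insertions at $s$ never falls behind the number of TASEP particles with nonnegative coordinate: the only way a crossing could outrun an insertion is that $s$ is still occupied by $A_{i-1}^{P(n)}$ at depth $0$ while $x_{i-1}(t)\geq 1$, contradicting the depth bound already established for $A_{i-1}$. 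With that in place, your entry-case contradiction and the rest of your induction go through as written.
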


\cref{lemma:depthbiggerinPnthanPinf} can also be rephrased as: if $depth_i^{P(n)}(t) < x_i(t)$ then $A_i^{P(n)}$ is settled. As a corollary, we infer:

\begin{corollary}
If at least $n+1$ TASEP agents have non-negative $x$ coordinate at time $t_0$ then the makespan of FCDFS over $R$ obeys $\mathcal{M} \leq t_0$.
\label{corollary:makespanofasynchfcdfs}
\end{corollary}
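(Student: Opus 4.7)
The plan is to chain the contrapositives of \cref{lemma:depthbiggerinPnthanR} and \cref{lemma:depthbiggerinPnthanPinf}, after first converting the hypothesis about TASEP coordinates into an explicit lower bound on each $x_i(t_0)$.

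First, I would translate the hypothesis into a clean coordinate bound. In TASEP with step initial conditions, $A_i^{\mathrm{TASEP}}$ begins at the integer location $-i$, and the dynamics preserve the strict total order $x_1(t) > x_2(t) > \cdots$ while keeping positions integer-valued; consecutive coordinates therefore differ by at least $1$ at every time $t$. Consequently, the $n+1$ agents with non-negative $x$-coordinate at $t_0$ must be exactly $A_1^{\mathrm{TASEP}}, \ldots, A_{n+1}^{\mathrm{TASEP}}$, giving $x_{n+1}(t_0) \geq 0$ and, iterating the gap-$1$ inequality upward, $x_i(t_0) \geq n+1-i$ for every $1 \leq i \leq n+1$.

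Next, I would bound the depth in $P(n)$ from above. Since $P(n)$ is a straight path of $n$ vertices and FCDFS forces the predecessors $A_1^{P(n)}, \ldots, A_{i-1}^{P(n)}$ to settle at $p_{n-1}, \ldots, p_{n-i+1}$ respectively, the agent $A_i^{P(n)}$ can never exceed depth $n-i$. Combining with the previous step yields $depth_i^{P(n)}(t_0) \leq n-i < n+1-i \leq x_i(t_0)$ for every $1 \leq i \leq n$. By the contrapositive of \cref{lemma:depthbiggerinPnthanPinf}, this strict inequality forces $A_i^{P(n)}$ to be settled at time $t_0$. Then, by the contrapositive of part~(1) of \cref{lemma:depthbiggerinPnthanR}, settlement in $P(n)$ transfers to $R$: each $A_i^R$ with $1 \leq i \leq n$ is settled at $t_0$, so the entire swarm in $R$ is settled and $\mathcal{M} \leq t_0$.

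The proof is essentially a clean chain of contrapositives, so I do not expect any real conceptual obstacle. The one subtle point is rigorously justifying the TASEP ordering and gap-$1$ properties in order to correctly identify \emph{which} agents account for the $n+1$ non-negative coordinates; both are standard consequences of the exclusion rule, but I would state them explicitly rather than invoke them silently.
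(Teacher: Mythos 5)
Your proof is correct and takes essentially the same route as the paper: both arguments combine the TASEP order-preservation bound $x_i(t_0) \geq n+1-i$ with the depth cap $depth_i^{P(n)}(t_0) \leq n-i$ and then chain \cref{lemma:depthbiggerinPnthanPinf} and part~(1) of \cref{lemma:depthbiggerinPnthanR} to conclude every $A_i^R$ is settled. The only differences are presentational---the paper argues by contradiction where you chain contrapositives, and you make explicit the exclusion-rule ordering and gap-$1$ facts that the paper invokes implicitly ("at least $n-i+1$ agents are located to its left and to the right of $v_0$").
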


Note that here we are referring to the makespan of FCDFS over $R$ in given an arbitrary list of wake-up times. The list of wake-up times is randomly distributed (and so is  $\mathcal{M}$), but \cref{corollary:makespanofasynchfcdfs} is true for any such list.

\begin{proof}
Suppose for contradiction that some agent  $A_i^R$, $1 \leq i \leq n$, is not settled, but the $n+1$ agents $A_1^{\mathrm{TASEP}}, \ldots A_n^{\mathrm{TASEP}}$ have non-negative $x$ coordinate. By \cref{lemma:depthbiggerinPnthanR} we infer that $A_i^{P(n)}$ is not settled, and by \cref{lemma:depthbiggerinPnthanR} we infer $depth_i^{P(n)}(t) \geq x_i(t)$. However, the maximum possible depth of $A_i^{P(n)}$ is $n-i$, whereas $x_i(t)$ is at least $n-i+1$, since at least $n-i+1$ agents are located to its left and to the right of $v_0$. Contradiction.
\end{proof}

We can now prove \cref{prop:asynchfcdfsmakespan}. We use a similar idea as Lemma III. 14 of \cite{amir_fast_2019}.

\begin{proof} [Proof of \cref{prop:asynchfcdfsmakespan}] 
Let $F(t)$ denote the number of agents in TASEP that have non-negative $x$ coordinate at time $t$. It is known (see, e.g., \cite{Johansson2000,kriecherbauer2010pedestrian}) that $F(t)$ fulfills:

\begin{equation}
    \lim_{t \to \infty} \mathbb{P}(F(t) - \alpha t \leq Vt^{1/3}s) = 1 - TW_2(-s)
\label{equation:taseplimitingbehavior}
\end{equation}

where $V = 2^{-4/3}p^{1/3}(1 - p)^{1/6}$ and $TW_2(-s)$ is the Tracy-Widom distribution. $TW_2(-s)$ obeys the asymptotics $TW_2(-s) = O(e^{-c_1 s^3})$ and $1-TW_2(s) = O(e^{-c_2 s^{3/2}})$ as $s \to \infty$ for some $c_1, c_2 > 0$. 

Let $t_{k} = (1/\alpha+k^{-1/3})k$. We wish to show that $\mathcal{M} \leq t_{n+1}$ asymptotically almost surely as $n \to \infty$. By \cref{corollary:makespanofasynchfcdfs}, this is equivalent to showing that $X_{n} = \mathbb{P}(F(t_{n}) < n)$ tends to $0$ as $n \to \infty$ (because the probability that $\mathcal{M} \leq t$ is at least $1-X_{n+1}$). Define the probability

\begin{equation}
p(n,s) = \mathbb{P}(F(t_{n}) - \alpha t_{n} \leq Vt_{n}^{1/3}s)
\end{equation}

$p(n,s)$ is clearly monotonic non-decreasing in $s$. Define $s_n = (n-\alpha t_{n})V^{-1}t_{n}^{-1/3}$, yielding $X_{n} = p(n,s_n)$. For any constant $C$, define $Y_{n} = p(n,C)$. Since $s_n$ tends to $-\infty$ as $n \to \infty$, we must have $s_n < C$ for sufficiently large $n$, hence $X_n \leq Y_n$ for sufficiently large $n$. Using  \cref{equation:taseplimitingbehavior} and taking $C \to -\infty$ shows that $X_n$ tends to $1 - TW_2(\infty) = 0$ as $n \to \infty$, completing the proof.
\end{proof}

The proof of \cref{prop:asynchfcdfsmaxenergy} uses a similar technique to that of  \cref{prop:asynchfcdfsmakespan}, so we omit some repetitive details and focus on the new ideas:

\begin{proof}
[Proof sketch of \cref{prop:asynchfcdfsmaxenergy}] Let $d = \max_{v \in R}{\textrm{dist}(s,v)}$. Consider the execution of AsynchFCDFS over some region $R$ with $n$ vertices. The key idea of the proof is that at any given time, there can be no more than $d$ active robots in $R$. This is because all active agents always trace the path of the active agent $A_i$ with smallest index $i$ until it settles, and the path of $A_i$ is always a shortest path from $s$ to some vertex $v$. This path can contain at most $d$ vertices, and so there can be at most $d$ active agents. 

Consider a new agent $A_j$ that has emerged from $s$ at time step $t$. Since there can be at most $d$ active agents at a time, $A_j$ is necessarily settled once $d$ newer  agents have emerged from $s$. The time it takes $d$ agents to emerge after $A_j$ is dominated by the time it takes $2d$ agents to emerge in $R(t)$ (we can show this formally by the same type of coupling as \cref{lemma:depthbiggerinPnthanR}). As we've seen in the proof of \cref{prop:asynchfcdfsmakespan}, the time it takes $2d$ agents to emerge in $R(t)$ is dominated by the time it takes $2d$ agents to attain non-negative coordinates in TASEP, which we've previously shown is asymptotically almost surely less than $2(1/\alpha + o(1))d$ for large values of $d$. We complete the proof by noting that $d$ necessarily tends to $\infty$ as $n \to \infty$.
\end{proof}

We believe \cref{conjecture:asynchfcdfsenergy}, which states that $E_{total}$ is within a constant factor of optimal energy use, to be true, because every robot in AsynchFCDFS travels the same optimal path it does in FCDFS, and our results in this section hint that it may move at a linear rate toward this destination (indeed, \cref{prop:asynchfcdfsmaxenergy} establishes this for robots that travel distance $\max_{v \in R}{\textrm{dist}(s,v)}$). However, we struggled bounding $E_{total}$ with our current approach. The main challenge is that  \cref{equation:taseplimitingbehavior} is given as a limit, making it difficult to estimate the energy costs of agents that move short or intermediate distances. We pose this as a question for future work.

\section{Empirical Analysis}
\label{section:experiments}

To verify our theoretical findings \revisionhighlight{and measure performance metrics}, we \revisionhighlight{simulated} our algorithms on various simply-connected environments. \revisionhighlight{The simulator implements our model (Section II)}, and implements FCDFS and AsynchFCDFS using their minimal required capabilities ($(2,0,5)$ and $(2,1,5)$ respectively). To measure the improvement attained by FCDFS and AsynchFCDFS, we compared it to Hsiang et al.'s DFLF and BFLF algorithms \cite{hsiang}. DFLF is described in \cref{prop:makespanminexists}. BFLF (Breadth-First Leader-Follower) is a different makespan-optimal dispersion algorithm that seeks to spread out the agents more evenly by having multiple leaders, reducing total and maximal individual travel at the cost of energy. 

We tested our algorithms on several environment types: custom designed environments (Figures \ref{fig:fcdfs_hall_example} and \ref{fig:simulation2}), realistic environments from the "Benchmarks for Grid-Based Pathfinding" repository \cite{sturtevant2012benchmarks}, and systematically generated square grids of increasing size.  Table \ref{table:experiment_summary} summarizes simulation results across these environments. The benchmark environments, shown in \cref{fig:benchmarks}, include complex layouts and indoor urban environments that we selected and minimally modified to ensure simple-connectedness. This diverse set of environments allows us to validate our theoretical results under varying practical conditions.

\begin{figure*}[ht]
    \centering
    \begin{subfigure}[b]{0.15\textwidth}
        \includegraphics[width=\linewidth]{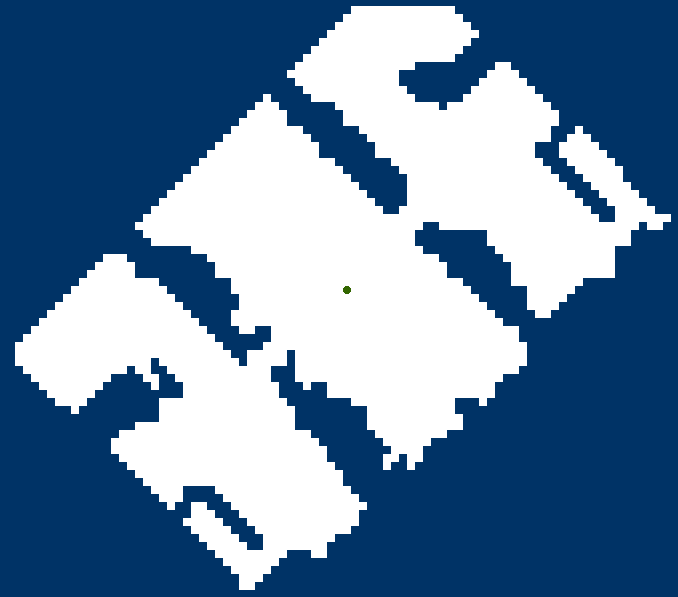}
        \caption{}
        \label{fig:AR0017SR}
    \end{subfigure}
    \hspace{0.02\textwidth} 
    \begin{subfigure}[b]{0.15\textwidth}
        \includegraphics[width=\linewidth]{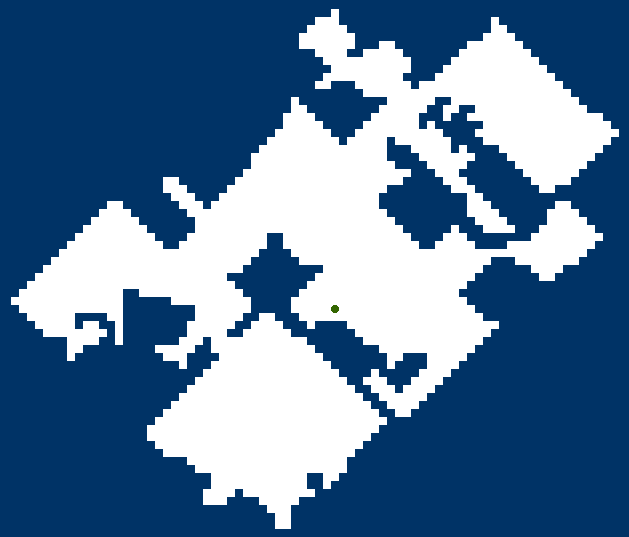}
        \caption{}
        \label{fig:AR0306SR}
    \end{subfigure}
    \hspace{0.02\textwidth} 
    \begin{subfigure}[b]{0.15\textwidth}
        \includegraphics[width=\linewidth]{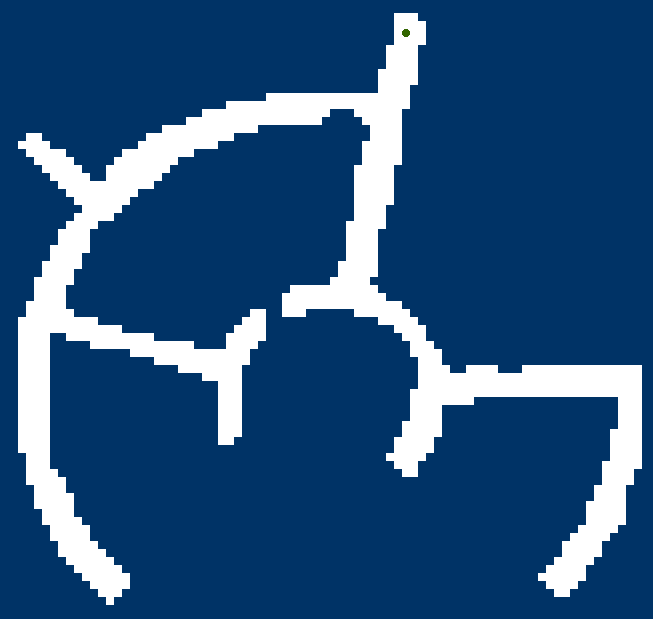}
        \caption{}
        \label{fig:AR0413SR}
    \end{subfigure}
    
    \begin{subfigure}[b]{0.15\textwidth}
        \includegraphics[width=\linewidth]{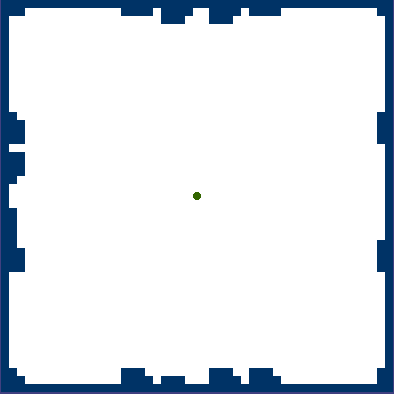}
        \caption{}
        \label{fig:arena}
    \end{subfigure}
    \hspace{0.02\textwidth} 
    \begin{subfigure}[b]{0.15\textwidth}
        \includegraphics[width=\linewidth]{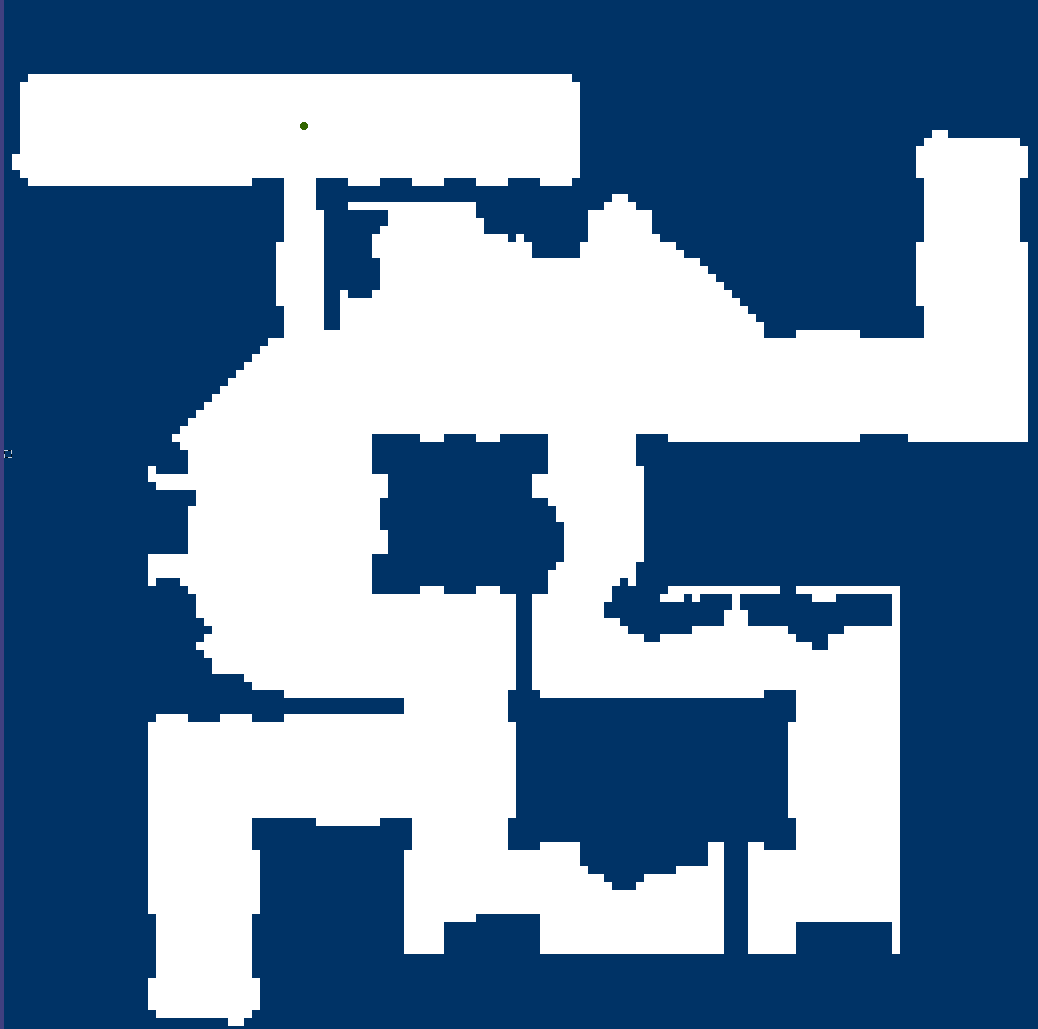}
        \caption{}
        \label{fig:lt_backvalley_n}
    \end{subfigure}
    \hspace{0.02\textwidth} 
    \begin{subfigure}[b]{0.15\textwidth}
        \includegraphics[width=\linewidth]{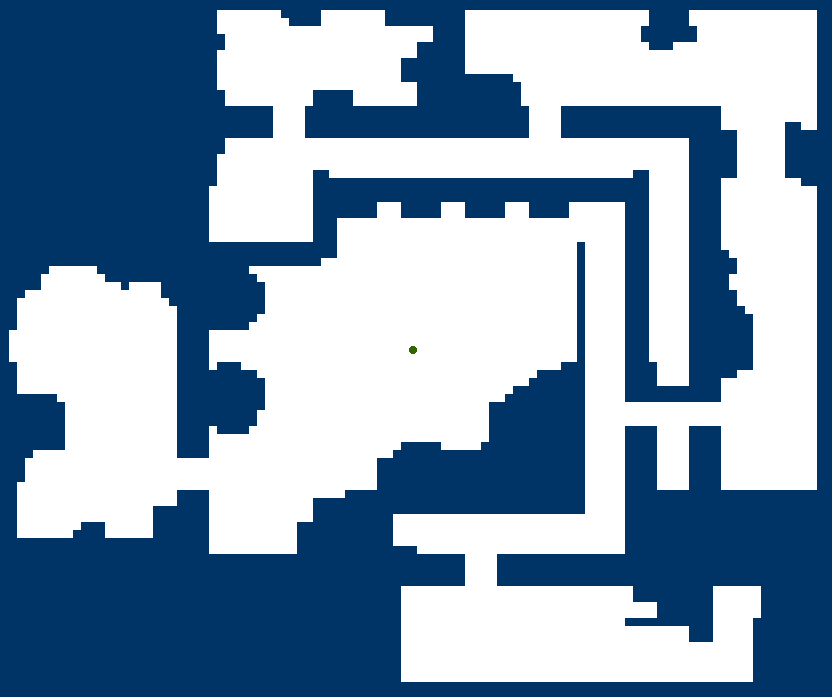}
        \caption{}
        \label{fig:foundry_n}
    \end{subfigure} 
    \caption{Benchmark environments based on \cite{sturtevant2012benchmarks}.}
    \label{fig:benchmarks}
\end{figure*}

As guaranteed by our theoretical results, FCDFS achieves optimal performance across all tested environments, including the realistic benchmarks. The performance gap between FCDFS and previous algorithms (DFLF, BFLF) remains consistent, and large,  across environment types, with no cases where previous methods outperform FCDFS in any of our metrics. However, while FCDFS achieves optimality in simply connected environments, DFLF and BFLF remain valuable options for environments with holes or obstacles that break simple connectedness.

Looking at specific performance patterns, we observe that the travel distance for FCDFS and AsynchFCDFS is identical, but makespan and energy use are larger for smaller $p$. For $p=0.5$ and $p=0.75$, AsynchFCDFS's average performance is consistent with the theoretical bounds of Propositions \ref{prop:asynchfcdfsmakespan} and \ref{prop:asynchfcdfsmaxenergy}, and \cref{conjecture:asynchfcdfsenergy}. While our theoretical analysis holds, provably, for all $p \in (0,1]$, these results provide further empirical validation of our claims.

Next, we were interested in how FCDFS and AsynchFCDFS's performance varies as $n$ (the environment size) grows. We plotted the performance of these algorithms alongside DFLF and BFLF on square grids of increasing size: see \cref{fig:comparative_results}. We see that, although the synchronous algorithms DFLF, BFLF and FCDFS are matched in terms of makespan, FCDFS attains significantly better energy and travel. AsynchFCDFS attains worse makespan, as expected (since robots are inactive in some timesteps), but still attains better energy and travel than DFLF and BFLF for $p = 0.5, 0.75$. AsynchFCDFS's maximal individual energy use and makespan are consistent with the theoretical bounds given by Propositions \ref{prop:asynchfcdfsmakespan} and \ref{prop:asynchfcdfsmaxenergy}, and total energy use consistent with \cref{conjecture:asynchfcdfsenergy}.

\begin{table*}[ht]
\centering
\caption{Simulation results across various environments. Metrics represent averages over 10 trials. Abbreviations: 'k' indicates thousands, and 'AsFCDFS' refers to Asynchronous FCDFS. Standard deviations are indicated by ±. If no ± is shown, the standard deviation is 0.}
\label{table:experiment_summary}
\begin{scriptsize}
\resizebox{0.70\textwidth}{!}{%
\begin{tabular}{@{}l l c c c c c@{}}
\toprule
\textbf{Environment} & \textbf{Algorithm} & \textbf{$T_{\text{total}}$} & \textbf{$T_{\text{max}}$} & \textbf{$E_{\text{total}}$} & \textbf{$E_{\text{max}}$} & \textbf{$\mathcal{M}$} \\
\midrule
\multirow{5}{*}{\cref{fig:AR0017SR} ($n = 2501$)} & FCDFS & 78k & 61 & 80k & 62 & 5002 \\
 & AsFCDFS (p=0.75) & 78k & 61 & 153k $ \pm 2k$ & 94 $ \pm 17$ & 9480 $ \pm 47$ \\
 & AsFCDFS (p=0.5) & 78k & 61 & 261k $ \pm 4k$ & 97 $ \pm 33$ & 16034 $ \pm 91$ \\
 & BFLF & 90k $ \pm 2k$ & 82 $ \pm 7$ & 1406k $ \pm 100k$ & 269 $ \pm 470$ & 5002 \\
 & DFLF & 895k $ \pm 127k$ & 647 $ \pm 110$ & 897k $ \pm 127k$ & 648 $ \pm 110$ & 5002 \\
\midrule
\multirow{5}{*}{\cref{fig:AR0306SR} ($n = 1846$)} & FCDFS & 70k & 69 & 72k & 70 & 3692 \\
 & AsFCDFS (p=0.75) & 70k & 69 & 140k $ \pm 2k$ & 88 $ \pm 12$ & 7073 $ \pm 41$ \\
 & AsFCDFS (p=0.5) & 70k & 69 & 236k $ \pm 3k$ & 122 $ \pm 41$ & 11922 $ \pm 64$ \\
 & BFLF & 89k $ \pm 5k$ & 94 $ \pm 9$ & 691k $ \pm 40k$ & 276 $ \pm 239$ & 3692 \\
 & DFLF & 457k $ \pm 81k$ & 479 $ \pm 59$ & 459k $ \pm 81k$ & 480 $ \pm 59$ & 3692 \\
\midrule
\multirow{5}{*}{\cref{fig:AR0413SR} ($n = 1006$)} & FCDFS & 67k & 124 & 68k & 125 & 2012 \\
 & AsFCDFS (p=0.75) & 67k & 124 & 131k $ \pm 2k$ & 186 $ \pm 27$ & 3899 $ \pm 17$ \\
 & AsFCDFS (p=0.5) & 67k & 124 & 223k $ \pm 3k$ & 241 $ \pm 76$ & 6603 $ \pm 56$ \\
 & BFLF & 72k $ \pm 2k$ & 135 $ \pm 3$ & 312k $ \pm 12k$ & 200 $ \pm 50$ & 2012 \\
 & DFLF & 138k $ \pm 6k$ & 252 $ \pm 15$ & 139k $ \pm 6k$ & 253 $ \pm 15$ & 2012 \\
\midrule
\multirow{5}{*}{\cref{fig:arena} ($n = 2122$)} & FCDFS & 49k & 45 & 51k & 46 & 4244 \\
 & AsFCDFS (p=0.75) & 49k & 45 & 97k $ \pm 872$ & 60 $ \pm 9$ & 7940 $ \pm 49$ \\
 & AsFCDFS (p=0.5) & 49k & 45 & 164k $ \pm 2k$ & 77 $ \pm 30$ & 13428 $ \pm 75$ \\
 & BFLF & 61k $ \pm 3k$ & 74 $ \pm 12$ & 1525k $ \pm 68k$ & 417 $ \pm 660$ & 4244 \\
 & DFLF & 1186k $ \pm 98k$ & 961 $ \pm 78$ & 1188k $ \pm 98k$ & 962 $ \pm 78$ & 4244 \\
\midrule
\multirow{5}{*}{\cref{fig:lt_backvalley_n} ($n = 6862$)} & FCDFS & 579k & 177 & 586k & 178 & 13724 \\
 & AsFCDFS (p=0.75) & 579k & 177 & 1143k $ \pm 7k$ & 250 $ \pm 45$ & 26782 $ \pm 57$ \\
 & AsFCDFS (p=0.5) & 579k & 177 & 1941k $ \pm 14k$ & 382 $ \pm 143$ & 45528 $ \pm 115$ \\
 & BFLF & 706k $ \pm 22k$ & 215 $ \pm 14$ & 6594k $ \pm 395k$ & 521 $ \pm 225$ & 13724 \\
 & DFLF & 5640k $ \pm 665k$ & 1542 $ \pm 174$ & 5647k $ \pm 665k$ & 1543 $ \pm 174$ & 13724 \\
\midrule
\multirow{5}{*}{\cref{fig:foundry_n} ($n = 4552$)} & FCDFS & 404k & 208 & 408k & 209 & 9104 \\
 & AsFCDFS (p=0.75) & 404k & 208 & 799k $ \pm 4k$ & 260 $ \pm 59$ & 17652 $ \pm 53$ \\
 & AsFCDFS (p=0.5) & 404k & 208 & 1363k $ \pm 10k$ & 478 $ \pm 133$ & 29958 $ \pm 58$ \\
 & BFLF & 506k $ \pm 25k$ & 271 $ \pm 8$ & 2671k $ \pm 127k$ & 368 $ \pm 65$ & 9104 \\
 & DFLF & 2845k $ \pm 656k$ & 1204 $ \pm 233$ & 2850k $ \pm 656k$ & 1205 $ \pm 233$ & 9104 \\
\midrule
\multirow{5}{*}{\cref{fig:fcdfs_hall_example} ($n = 669$)} & FCDFS & 35k & 99 & 36k & 100 & 1338 \\
 & AsFCDFS (p=0.75) & 35k & 99 & 69k $ \pm 1k$ & 144 $ \pm 35$ & 2570 $ \pm 19$ \\
 & AsFCDFS (p=0.5) & 35k & 99 & 117k $ \pm 3k$ & 199 $ \pm 75$ & 4332 $ \pm 53$ \\
 & BFLF & 40k $ \pm 1k$ & 117 $ \pm 8$ & 126k $ \pm 8k$ & 186 $ \pm 61$ & 1338 \\
 & DFLF & 98k $ \pm 11k$ & 276 $ \pm 34$ & 98k $ \pm 11k$ & 277 $ \pm 34$ & 1338 \\
\midrule
\multirow{5}{*}{\cref{fig:simulation2} ($n = 277$)} & FCDFS & 6k & 38 & 6k & 39 & 554 \\
 & AsFCDFS (p=0.75) & 6k & 38 & 12k $ \pm 414$ & 50 $ \pm 6$ & 1028 $ \pm 25$ \\
 & AsFCDFS (p=0.5) & 6k & 38 & 20k $ \pm 826$ & 71 $ \pm 25$ & 1716 $ \pm 38$ \\
 & BFLF & 8k $ \pm 364$ & 51 $ \pm 4$ & 33k $ \pm 2k$ & 120 $ \pm 34$ & 554  \\
 & DFLF & 17k $ \pm 892$ & 129 $ \pm 10$ & 17k $ \pm 892$ & 130 $ \pm 10$ & 554 \\
\bottomrule
\end{tabular}
}
\end{scriptsize}
\end{table*}

\begin{figure*}[ht]
    \centering
    \begin{subfigure}[b]{0.3\textwidth}
        \includegraphics[width=\linewidth]{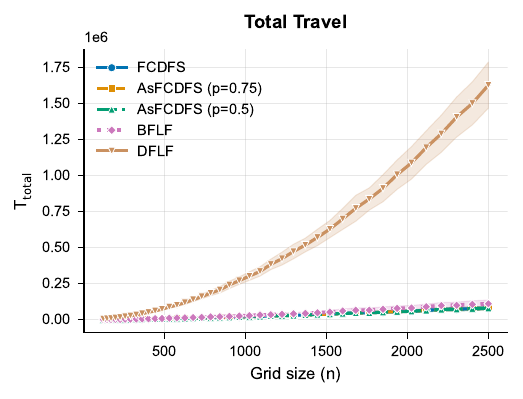}
    \end{subfigure}
    \hfill
    \begin{subfigure}[b]{0.3\textwidth}
        \includegraphics[width=\linewidth]{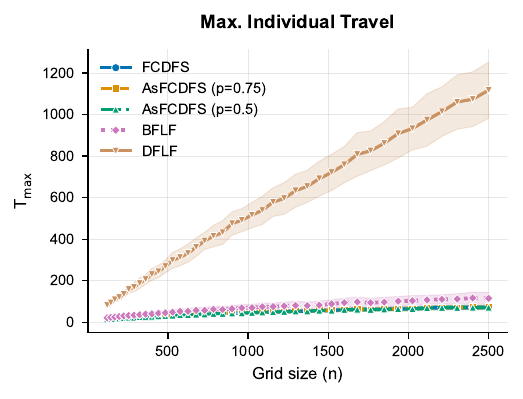}
    \end{subfigure}
    \hfill
    \begin{subfigure}[b]{0.3\textwidth}
        \includegraphics[width=\linewidth]{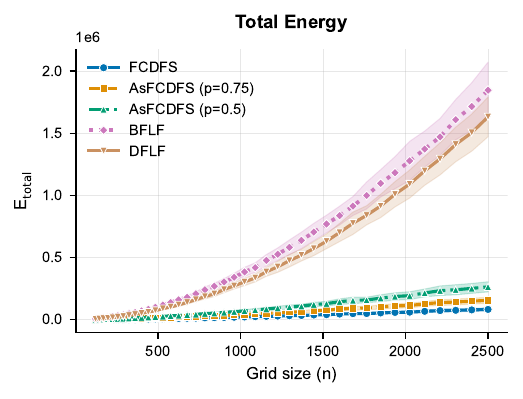}
    \end{subfigure}
    
    \medskip 
    
    \begin{subfigure}[b]{0.3\textwidth}
        \includegraphics[width=\linewidth]{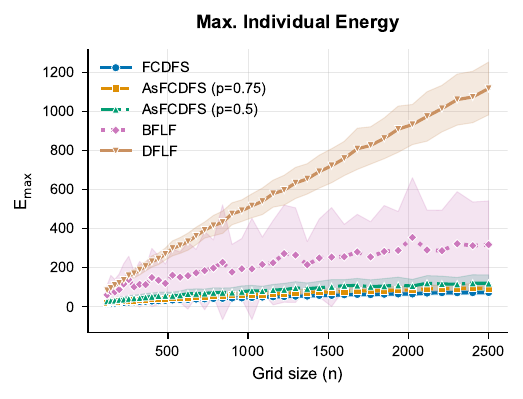}
    \end{subfigure}
    \quad 
    \begin{subfigure}[b]{0.3\textwidth}
        \includegraphics[width=\linewidth]{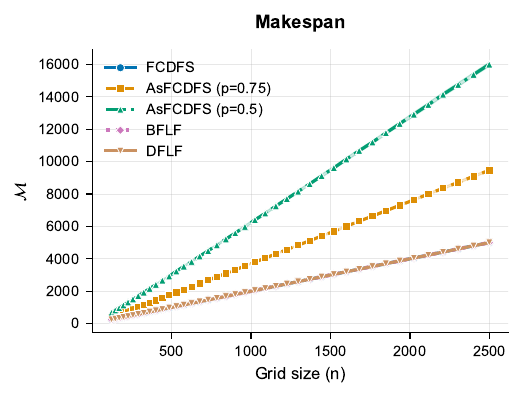}
    \end{subfigure}
    
    \caption{Comparative results of the algorithms across different metrics. For each $k \in [10,50]$, we generated $100$ regular square grid environments of size $n = k \times k$ with different randomly placed source locations. Each data point is the average of $100$ simulations of its respective algorithm, one per such grid environment. In each graph, the $x$ axis is $n$ and the $y$ axis is the  performance measure of interest. Shaded regions denote standard deviations. Note that some algorithms overlap perfectly on metrics such as makespan.}
    \label{fig:comparative_results}
\end{figure*}

\section{Conclusion}

This work studied time, travel, and energy use in the uniform dispersion problem. We presented a formal model for comparing swarm algorithms in terms of robot capabilities, and proved that while makespan and travel can be minimized in all environments by swarm robotic algorithms with low capability requirements, energy cannot be minimized even by a centralized algorithm, assuming constant sensing range. In contrast, we showed that energy can be minimized in simply connected environments by an algorithm called FCDFS, executable even by minimal ``ant-like'' robots with small constant sensing range and memory, and no communication. We extended FCDFS to the asynchronous setting, showing it attains energy consumption within a constant factor of the optimum.

Our results shed light on the relationships between the capabilities of robots, their environment, and the quality of the collective behavior they can achieve. The fact that robots can minimize the travel or makespan even with very limited capabilities, but cannot minimize energy even with arbitrarily strong capabilities, attests to the difficulty of designing energy-efficient collective behaviors. On the other hand, the fact that energy can be minimized by ant-like robots in simply connected environments suggests energy efficiency in swarm robotic systems may be practically attainable, especially in restricted settings. Broadly, our results point to the importance of \textit{co-design} approaches in swarm robotics, carefully tailoring the capabilities of robots to the specific features of the environment they are expected to operate in.

The capability-based modeling approach we presented for uniform dispersion can be applied to other canonical problems in swarm robotics, such as  gathering, formation, or path planning. Doing so may reveal additional gaps in capability requirements  between robots optimizing different objectives, as well as new co-design opportunities.

\revisionhighlight{As our goal in this work was a formal mathematical analysis of the \textit{algorithmic} components of time, travel, and energy use, we assumed an idealized grid setting. However, in the real world, energy is shaped by physical realities beyond our grid model--e.g. motion uncertainty, kinematic constraints, and environmental factors such as small debris, floor friction, and slopes. The problem of handling such empirical factors, and superimposing them top of our theoretical lower bounds, is an important direction for future work.}


\section*{Acknowledgements}

The authors would like to thank Ofer Zeitouni (Weizmann Institute of Science) for helpful discussions.

\appendix
\label{appendix:}
\subsection{Ant-like Robots are ``Strongly'' Decentralized}
\label{appendix:stronglydecentralizedantrobots}

\cref{proposition:centralizedsimul} establishes that decentralized algorithms, given sufficient memory and communication capabilities, can emulate centralized algorithms while incurring only a multiplicative time delay ($\Delta$), provided their communication network remains connected. Thus, when decentralized robots collectively gather the same environmental data as their centralized counterpart, they can achieve equivalent decision-making capabilities.  We claim that ant-like algorithms--algorithms requiring small,  \textit{constant} visibility and persistent state memory, and no communication, to run on their target class of environments--are ``strongly decentralized'': these algorithms, despite potentially sensing the same collective environmental data, cannot replicate certain decisions made by centralized algorithms. We prove this by giving an example of a centralized, visibility preserving algorithm that ant-like algorithms cannot simulate, even assuming arbitrary $\Delta$-delay.

Let $P_n$ be a ``straight path'' region consisting of the vertices $\{p_i \mid 0 \leq i < n\}$ where $p_i = (i, 0)$ and the source, $s$, is located at $p_0$. Let $\textrm{ALG}(i,n)$ be an algorithm that acts as follows on the region $P_n$ (its target environments): at every time step, it tells every robot to step right (i.e., from $p_k$ to $p_{k+1}$), until $A_1$ arrives at $p_i$, at which point the algorithm permanently halts. At halting time,  this algorithm results in a chain of robots, the leftmost of which is either at $p_1$ or $p_0$, such that $A_i$ and $A_{i+1}$ are two steps apart. For all $i$, $n$, $\textrm{ALG}(i,n)$ is easily executable over $P_n$ by a centralized robotic swarm with visibiltiy $V = 2$ and $S = 0$ bits of persistent memory (the centralized algorithm can know where $A_1$ is at every time step, hence whether $A_1$ is located at $p_i$ at a given time step, by simply counting the number of robots in the environment).

It is clear that $\mathrm{ALG}(i,n)$ is $2$-visibility preserving. A $\Delta$-delay version of it can be executed by decentralized robots with communication and sufficient memory, per \cref{proposition:centralizedsimul}. However, ant-like robots cannot execute $\textrm{ALG}(i,n)$ for sufficiently large $i$ and $n$, even at a delay:

\begin{proposition}
For all integers $\Delta \in [1, \infty)$, no $(v,0,s)$-algorithm exists that simulates $\textrm{ALG}(2^s + v + 2, 2^s + 2v + 2)$ over $P_{2^s + 2v + 2}$ at delay $\Delta$.
\label{prop:antlikerobotsarestronglydecentralized}
\end{proposition}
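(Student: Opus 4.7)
The plan is to argue by contradiction via a symmetry-based ``identical view histories'' argument. Suppose such a $(v,0,s)$-algorithm $\textrm{ANT}$ exists. Under the $\Delta$-delay simulation contract the robots form a chain with uniform spacing behind $A_1$ and eventually come to rest in the halt configuration prescribed by the centralized $\textrm{ALG}$, so $A_{v+1}$ and $A_{v+2}$ in particular settle at two distinct cells of $P_n$ where $n = 2^s + 2v + 2$.

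The key step is an indistinguishability lemma: the view histories of $A_{v+1}$ and $A_{v+2}$, starting from their respective arrivals at $p_0$, coincide identically under the anonymous-obstacle sensing model. I would verify this in two stages: first, by direct inspection of the arrival views (within its $v$-visibility window each of the two robots sees exactly $v$ anonymous obstacles extending ahead, and only the labels of the previously-arrived robots differ); then by a synchronous induction over all subsequent time steps. Crucially, the coincidence persists even after one of the pair becomes settled, because a settled robot looks---to a neighboring ant-like robot with no communication---identical to an active robot or a wall. The parameter choice $n = 2^s+2v+2$ is calibrated to keep the right endpoint at distance at least $2v > v$ from the cells where the argument is applied, so the endpoint remains out of view throughout and cannot break the symmetry. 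Determinism of $\textrm{ANT}$ then forces $A_{v+1}$ and $A_{v+2}$ to execute identical state trajectories and produce identical movement decisions, offset only by a fixed time shift.

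From here the contradiction is immediate. Consider the move-time at which $A_{v+1}$ advances into its settling cell $p_{k^\star}$: its state-and-view pair produces the decision ``move right.'' By the lemma, $A_{v+2}$ at the corresponding later move-time has matching state and matching view, and therefore must also output ``move right''; but the target cell $p_{k^\star}$ is by then occupied by the settled $A_{v+1}$, creating a forbidden collision. This contradicts the validity of $\textrm{ANT}$. The main obstacle I anticipate is the careful verification of the indistinguishability lemma across the halt transition: one must check that the anonymous sensing model genuinely prevents $A_{v+2}$ from detecting that $A_{v+1}$ has stopped moving, and that $n = 2^s+2v+2$ indeed keeps the endpoint out of view during the precise collision-attempt window, while uniformly handling both the $\Delta = 1$ regime (chain spacing two) and $\Delta \ge 2$ (chain spacing one).
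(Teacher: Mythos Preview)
Your approach differs substantially from the paper's. The paper argues about $A_1$ alone via pigeonhole: in the window where $A_1$ sees neither the right wall ahead nor a gap in its trail of followers behind, its view is constant for more than $2^s$ move-steps, so its internal state must repeat; the resulting periodicity prevents $A_1$ from halting at exactly $p_{2^s+v+2}$ without having already halted earlier. The memory bound $2^s$ enters essentially, and no comparison between distinct robots is made.

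Your two-robot symmetry idea is sound in spirit --- and if it went through it would actually prove something stronger, since it makes no essential use of $s$ and would apply even to $(v,0,\infty)$-robots. But as written it has two real gaps.

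First, the collision conclusion fails in the $\Delta=1$ regime. There the chain spacing is two, so under the assumed-correct simulation $A_{v+2}$ sits at $p_{k^\star-2}$ at the first post-halt subjective step, and ``move right'' lands in the empty cell $p_{k^\star-1}$, not in $A_{v+1}$'s cell $p_{k^\star}$. You cannot push the indistinguishability further to reach $p_{k^\star-1}$ either: once $A_{v+2}$ has deviated from the simulated trajectory by moving, the halted $A_{v+1}$ ahead is now at relative distance $1$ rather than $2$, and the view histories diverge. The clean contradiction is not a collision but simply that at the first post-halt subjective step both robots have identical state and view, yet the correct simulation requires $A_{v+1}$ to move and $A_{v+2}$ to stay.

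Second, and more seriously, the choice $A_{v+1},A_{v+2}$ does not cover all $(v,s)$. With $\Delta=1$ the halt occurs at real time $2^s+v+3$ while $A_{v+2}$ arrives only at time $2v+3$; whenever $v>2^s$ (e.g.\ $s=0$, $v=2$) the chain is already frozen before $A_{v+2}$ emerges, and its arrival view genuinely differs from $A_{v+1}$'s (the predecessor is now at distance $1$, not $2$). Your calibration remark about $n=2^s+2v+2$ addresses only the right endpoint; the binding constraint is that both comparison robots arrive and reach the critical subjective step \emph{before} the halt, and for your pair the proposition's specific parameter $i=2^s+v+2$ does not guarantee this.
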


\begin{proof}
Suppose for contradiction that $\textrm{ALG}$ is a $(v,0,s)$-algorithm simulating $\textrm{ALG}(2^s + v + 2, 2^s + 2v + 4)$ over $P_{2^s + 2v + 4}$ at delay $\Delta$. $A_1$ moves rightward at every time step. At time steps $i \in [\Delta(v+1), \Delta(2^s+v+2)]$, it always sees the exact same thing: $v$ locations to its left, half of which contain an agent $A_i$ and half of which are empty, and $v$ empty locations to its right. Since $\textrm{ALG}$ has $2^s$ distinct memory states, at two distinct time steps $\Delta(v + 1) \leq t_1 <  t_2 \leq \Delta(2^s+v+2)$ it must have had the same memory state. Hence, at time steps $\Delta(2^s + v + 2)$ and $\Delta(2^s + v + 2 + (t_1 - t_2))$, $A_1$ has the same memory state and sees the same things. Since $(v,0,s)$ robots cannot communicate, this means these time steps are indistinguishable to $A_1$. Since, when executing $\textrm{ALG}$, $A_1$ does not step right at time step $\Delta (2^s + v + 2)$ (i.e., upon arriving at $p_{2^s + v + 2}$), it must also not do so at time step $\Delta (2^s + v + 2 + (t_1 - t_2))$.  But $\textrm{ALG}(2^s + v + 2, 2^s + 2v + 2)$ necessitates that $A_1$ step right at every time step $\Delta t$ for $t < 2^s + v + 2$ - contradiction.
\end{proof}

\cref{prop:antlikerobotsarestronglydecentralized} shows that for any ant-like swarm with constant capabilities $(v,0,s)$, there exist $n$ and $i$ such that $ALG(n,i)$ cannot be executed over a large  path graph. This gives evidence that ant-like robots are  ``strongly'' decentralized: they are fundamentally incapable of making the same decisions as a centralized algorithm. 

\bibliographystyle{ieeetr}  
\bibliography{references}  

\begin{IEEEbiography}[{\includegraphics[width=1in,height=1.25in,clip,keepaspectratio]{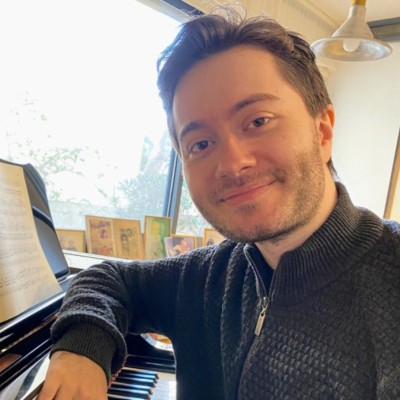}}]{Michael Amir}
 is a Research Associate at the University of Cambridge, affiliated with the Prorok Lab and Trinity College. He received his Ph.D.\ in Computer Science from the Technion, focusing on the mathematical analysis of natural and robotic swarms, where he was advised by Prof.\ Alfred  M.\ Bruckstein. 
His research interests include multi-robot and multi-agent systems, provable performance guarantees, and AI safety.
\end{IEEEbiography}

\begin{IEEEbiography}[{\includegraphics[width=1in,height=1.25in,clip,keepaspectratio]{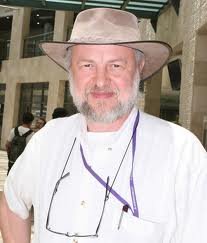}}]{Alfred M. Bruckstein} (Fellow, IEEE) 
 is a Visiting Professor at NTU, Singapore, and an Emeritus Ollendorff Professor of Science at the Technion. He was elected a SIAM Fellow for contributions to signal processing, image analysis, and ant robotics; an IEEE Fellow for work on image representation and swarm robotics; and a Fellow of the CORE Academy of Science and Humanities, among several other honors recognizing his research. His students hold key academic and research positions worldwide.\end{IEEEbiography}

\end{document}